\newcommand{\squishlist}{
  \begin{list}{$\bullet$}{
    \setlength{\itemsep}{0pt}       \setlength{\parsep}{3pt}
    \setlength{\topsep}{3pt}        \setlength{\partopsep}{0pt}
    \setlength{\leftmargin}{1em}    \setlength{\labelwidth}{1em}
    \setlength{\labelsep}{0.5em} } }
\newcommand{\squishend}{
  \end{list} }
\newtheorem{theorem}{Theorem}[section]
\newtheorem{lemma}[theorem]{Lemma}
\newtheorem{definition}[theorem]{Definition}
\newtheorem{assumption}[theorem]{Assumption}
\newtheorem{problem}[theorem]{Problem}
\newcommand{\wh}{\widehat}
\newcommand{\wt}{\widetilde}
\renewcommand{\tilde}{\wt}
\renewcommand{\hat}{\wh}
\DeclareMathOperator*{\E}{{\mathbb{E}}}
\DeclareMathOperator*{\D}{\mathcal{D}}
\DeclareMathOperator{\OPT}{OPT}
\DeclareMathOperator{\ALG}{ALG}
\DeclareMathOperator{\poly}{poly}
\newcommand{\V}{\mathcal{V}}
\newcommand{\vp}{\mathbf{p}}
\newcommand{\hyperb}{\mathcal{L}}
\newcommand{\frm}{R_\textsc{FRM}}
\newcommand{\frc}{c_\text{FR}}
\newcommand{\util}{\textsc{Util}}
\newcommand{\T}{\mathcal{T}}
\newcommand{\X}{\mathcal{X}}
\newcommand{\recall}{\textsc{Recall}}
\definecolor{myblue}{RGB}{51,153,255}
\definecolor{myred}{RGB}{200,5,5}
\definecolor{ForestGreen}{RGB}{34,139,34}
\title{T-TAMER: \\ Provably Taming Trade-offs in ML Serving }
\author{
  Yuanyuan Yang \\
  Department of Computer Science \& Engineering \\
  University of Washington \\
  \And
  Ruimin Zhang \\
  Department of Computer Science \\
  University of Chicago \\
  \And
  Jamie Morgenstern \\
  Department of Computer Science \& Engineering \\
  University of Washington \\
  \And
  Haifeng Xu\thanks{Corresponding author.} \\
  Department of Computer Science \\
  University of Chicago
}
\begin{document}

\maketitle
\begin{abstract}
As machine learning models continue to grow in size and complexity, efficient serving faces increasingly broad trade-offs spanning accuracy, latency, resource usage, and other objectives. Multi-model serving further complicates these trade-offs; for example, in cascaded models, each early-exit decision balances latency reduction against potential accuracy loss. Despite the pervasiveness and importance of such trade-offs, current strategies remain largely heuristic and case-specific, limiting both their theoretical guarantees and general applicability.

We present a general framework, T-Tamer, which formalizes this setting as a multi-stage decision process, where the objective is to determine both when to exit and which model to consult. Our main result shows that recall (i.e., the ability to revisit earlier models) is both necessary and sufficient for achieving provable performance guarantees. In particular, we prove that strategies without recall cannot obtain any constant-factor approximation to the optimal trade-off, whereas recall-based strategies provably attain the optimal trade-off in polynomial time.

We validate our analysis through experiments on synthetic datasets and early-exit workloads for vision and NLP benchmarks. The results show that recall-based strategies consistently yield efficient accuracy–latency trade-offs. We hope this work provides a principled foundation for bridging heuristic practice with theoretical guarantees in the design of early-exit and cascaded models.

\end{abstract}

\vspace{10pt}
\section{Introduction}

 As models continue to grow in scale, relying on a single model often fails to meet all service-level objectives (SLOs), such as accuracy, latency, and cost. Deploying only the largest model for every query is both impractical and suboptimal for inference platforms, as many queries can be effectively handled without resorting to the most resource-intensive option~\citep{NDH+24,rsc+24,SCEEsurvey}.

Motivated by this observation, \emph{cascaded inference} has emerged as a widely adopted paradigm for efficient large-scale model serving. The central idea is to maintain a collection of sub-models with varying complexity and to invoke them adaptively in sequence. In practice, the inference platform receives a stream of queries for a classification task together with specified SLOs, such as achieving high accuracy under an average latency budget. Cascaded inference routes simple queries to lightweight models, while reserving the most complex models for the hardest cases. By adaptively selecting sub-models based on query characteristics, the up-cascade framework provides a principled mechanism for efficient inference in modern machine learning systems.

Alternatively, the inference platform can be viewed as taming the trade-offs among conflicting SLOs. While the accuracy–latency trade-off is the most prominent, similar tensions arise in accuracy–cost (where more accurate models generally incur higher computational/monetary cost) and latency–throughput (where larger batch sizes improve throughput but increase per-sample latency).

However, there is no \emph{universally accepted} policy for sub-model routing and termination that applies across different trade-offs and use cases. Existing policies are typically developed in an \emph{ad hoc} manner and remain largely \emph{heuristic-driven}.\footnote{As an aside, exhaustive search is prohibitively time-consuming in most use cases.} This gives rise to two fundamental issues: (1) \emph{limited generalizability}, as a policy designed for one use case often fails to transfer to others, and (2) \emph{suboptimality}, as the resulting strategy is either inefficient or lacks provable efficiency guarantees.

We propose T-Tamer, a general theoretical framework for taming bi-objective trade-offs in cascaded inference. At its core, the framework computes a \emph{theoretically optimal} strategy and instantiates it as a data-driven learner that fits this solution using input–output pairs from \emph{all} sub-models. At inference time, given a query, T-Tamer incrementally updates its belief over sub-model performance as models are inspected and computes the optimal routing and termination policy based on this belief. Notably, the training of the T-Tamer is \emph{agnostic} to that of the sub-models, enabling it to operate as a plug-in component rather than a case-specific solution.

\begin{figure*}[tp]
    \centering
    \subfloat[Directed Line]{%
        \includegraphics[width=0.35\textwidth]{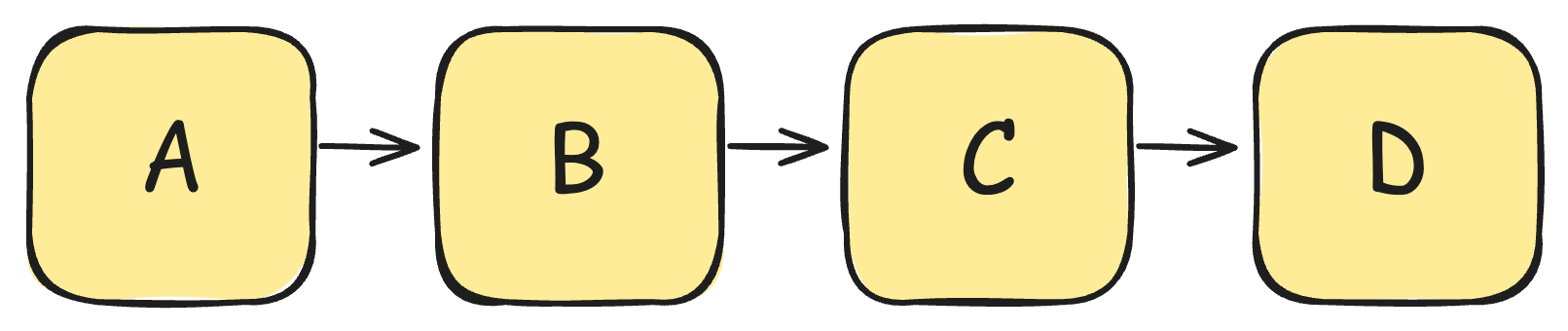}
        \label{fig:single_line}
    }
    \hfill
    \subfloat[Transitive Closure of Directed Line]{%
        \includegraphics[width=0.35\textwidth]{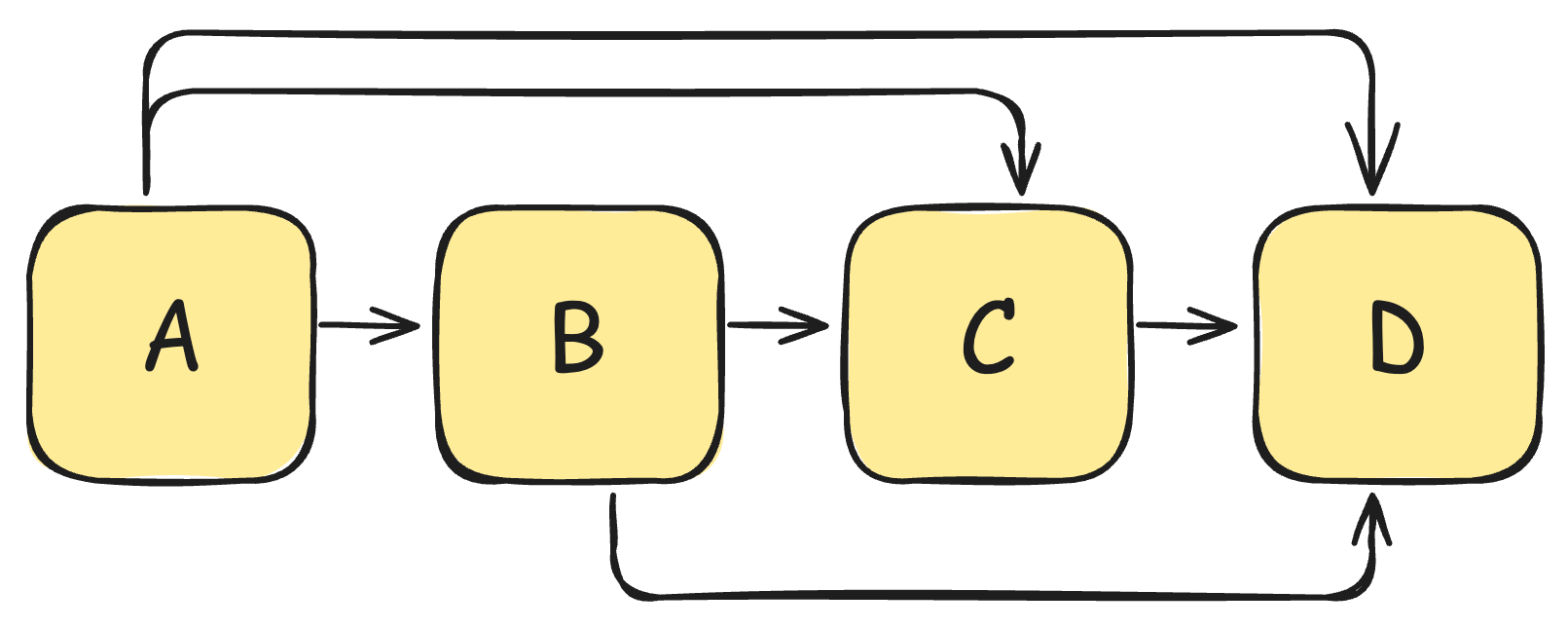}
        \label{fig:single_line_sub}
    }
    \hfill
    \subfloat[Directed Tree]{%
        \includegraphics[width=0.25\textwidth]{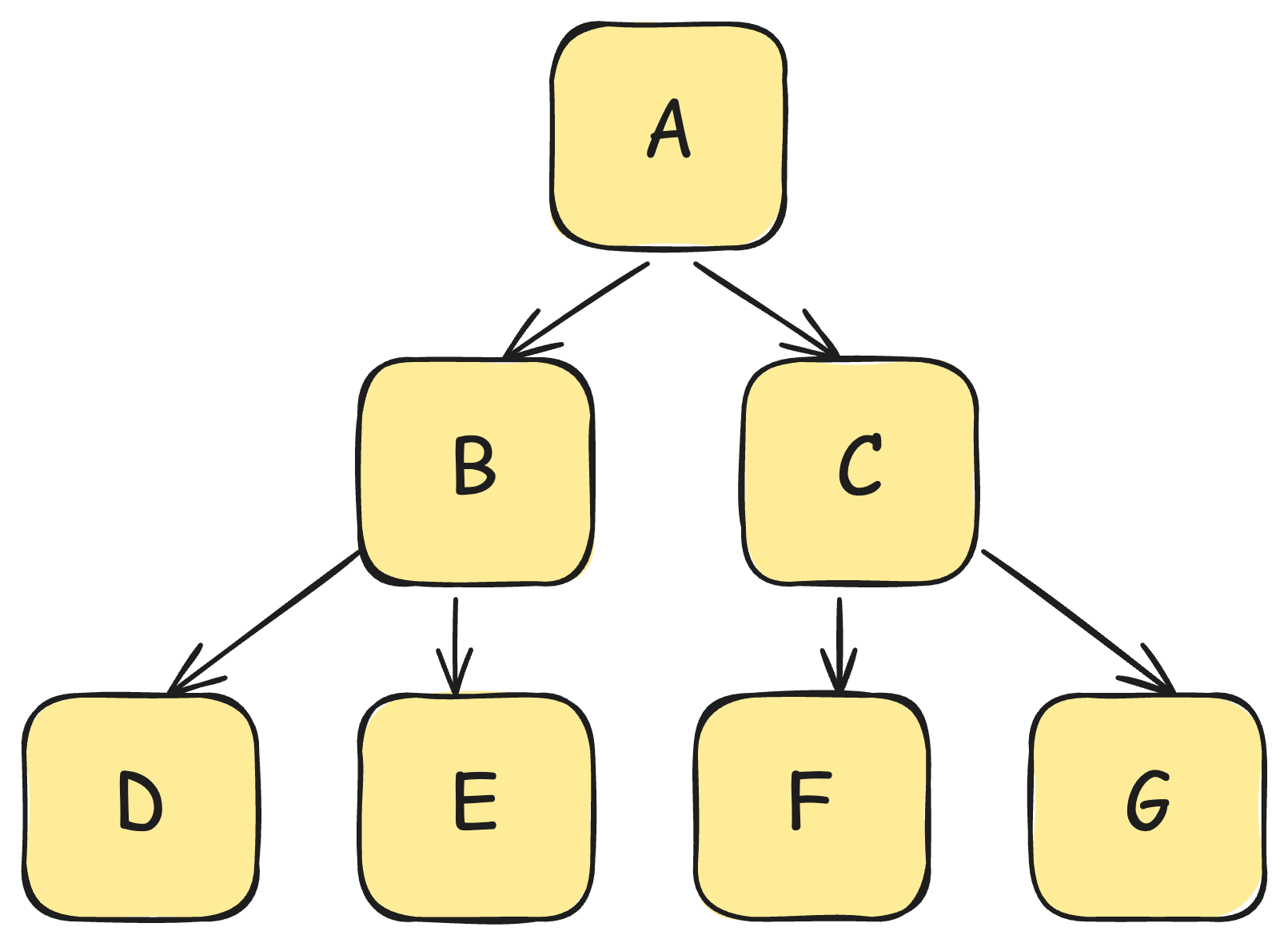}
        \label{fig:tree}
    }
    \caption{\bf DAG structures considered in this work: the directed line, its transitive closure, and the directed tree, which capture common topologies in up-cascade inference.}
    \label{fig:DAG_graph}
\end{figure*}

The key to T-Tamer strategy is a theoretical abstraction of routing and termination as a multi-stage costly exploration defined over a directed acyclic graph (DAG), where nodes represent sub-models and edges represent both \emph{precedence constraints} (i.e., model A must precede model B) precedence and performance dependencies (i.e., the output of model B conditionally depends on the outputs of model A). The policies derived by T-Tamer are supported by our theoretical analysis and can be computed in polynomial time for DAG structures that commonly arise in practice (See Fig.~\ref{fig:DAG_graph}): the directed line, its transitive closure (allowing any skip while preserving order), and the directed tree (decision-tree structure).

\subsection{Applications and Prior Work}

The DAG topologies studied here naturally arise in practical inference systems. We review models and prior work that instantiate these structures and connect them to research on multi-model inference, cascaded architectures, and adaptive computation. See App.~\ref{sec:app:literature_review} for a complete discussion.

\textbf{Intra-Model Cascaded Inference}. Existing intra-model inference methods manage trade-offs by adaptively determining whether to incorporate deeper layers of the network during inference (see surveys by~\citep{hhs+21, mgsn25}). Since the layered architecture of neural networks naturally induces Markovian dependencies, these approaches can be regarded as special cases of the cascaded inference framework. Representative examples include early-exit models~\citep{branchynet16, deebert20, SCEEsurvey, rsc+24}, skip networks~\citep{wyd+18}, and dynamic recursive neural networks~\citep{gyw+19}.

\textbf{Inter-Model Cascaded Inference}. More recently, inter-model cascaded inference has been adapted to large models (including LLMs), to mitigate their high inference cost. Existing works have explored how to jointly train the cascades~\citep{vb22}, how to design cascades with awareness of system-level constraints~\citep{lcmm23} and how to leverage signals to route the model~\citep{vb22}. These efforts show the promise of cascaded inference for large-scale model serving, yet the absence of guarantees exposes a fundamental theoretical gap.

\subsection{Our Results}
We formulate the trade-off between the two objectives as a weighted sum of their proxy loss functions, 
with weights controlled by a tunable parameter $\lambda$. More concretely, probing and consulting exactly one model on input $x$ corresponds to the objective
\[
\theta_\lambda(x) = \lambda \ell_1(x) + (1 - \lambda) \ell_2(x),
\]
where $\lambda \in [0,1]$ and $\ell_1, \ell_2$ are loss functions depending on $x$. 
This additive formulation is (i) \textbf{flexible}, as it can directly incorporate standard loss functions used in 
machine learning models, and (ii) \textbf{stable}, since $\theta_\lambda(x)$ is always well-defined, whereas 
subtraction-based formulations may produce pathological behavior such as negative or unbounded objectives. 

Building on this formulation, we note that because inputs are drawn from a distribution, the induced losses of sub-models follow a joint correlated distribution. Our T-Tamer captures this structure by assigning losses to edges and nodes, thereby reducing routing, stopping, and consulting in cascaded models to a \emph{costly exploration} problem over a directed acyclic graph (DAG) that encodes both precedence constraints and inter-model correlations:

\squishlist
    \item \textbf{Optimal Stopping Under Single Line Setting}(\S\ref{sec:1_line}): We classify strategies based on whether they allow recall, that is, whether previously explored sub-models can be revisited and selected after further exploration. Specifically, we develop separate theoretical frameworks for no-recall and with-recall strategies in the single-line setting. We show that no-recall strategies fail to achieve any constant-factor approximation to the optimal utility  (\S\ref{sec:no_recall}). This impossibility is \emph{information-theoretic}, rather than computational, and thus cannot be circumvented by increased computational resources. Notably, existing confidence-based cascaded models ~\citep{deebert20,branchynet16,lkl21} can be viewed as no-recall strategies, and our results therefore reveal the fundamental limitations of such heuristics.

    For the with-recall setting, we establish a provably optimal \emph{dynamic indexing} strategy. At each step, the strategy computes the index of the next available model and decides whether to stop or continue based on this value. Upon stopping, it returns the best model among those inspected. Moreover, this strategy can be computed and implemented efficiently via dynamic programming.

    \item \textbf{General Costly Exploration for More General DAGs}(\S\ref{sec:extension}): Our work generalizes the dynamic indexing strategy beyond the single-line setting, where at every iteration, the strategy computes the indices of \emph{all} available models that remain uninspected. We prove that for both tree topologies (Fig.~\ref{fig:tree}) and the transitive closure of the directed-line case (Fig.~\ref{fig:single_line_sub}), this indexing strategy remains theoretically optimal and can be computed in polynomial time.
\squishend

Finally, we evaluate our dynamic indexing strategy on synthetic datasets and early-exit workloads from standard vision and NLP benchmarks (\S\ref{sec:experiment}). The results show that recall-based strategies consistently deliver efficient accuracy–latency trade-offs. We now present our main results:

\squishlist

\item \textbf{General-Purpose Tradeoff Tamer for Up-Cascade Inference}. We introduce T-Tamer, a principled framework for taming bi-objective trade-offs in cascaded inference. Unlike heuristic-driven methods, T-Tamer is model-agnostic and learns to compute the theoretically optimal routing and stopping policy across all sub-models. This makes it broadly applicable as a plug-in component for diverse cascaded inference systems without requiring case-specific tuning.
\item \textbf{No Constant-Factor Approximation for No-Recall Policy}. We prove an \emph{information-theoretic} impossibility for no-recall strategies, showing that even in the directed-line case, such policies cannot achieve any constant-factor approximation to the optimal utility. This result shows that confidence-based heuristics (i.e., early-exit thresholds) for cascaded inference are inherently suboptimal, motivating recall-based strategies with stronger guarantees.

\item \textbf{Provably Efficient Routing-and-Stopping Policy over DAGs}. We develop a \emph{dynamic indexing} strategy that provably achieves optimal exploration and stopping across canonical DAG structures arising in cascaded inference (directed lines, their transitive closures, and trees). Our approach is not only theoretically optimal but also computationally efficient, running in polynomial time. This bridges the gap between theory and practice by providing the first provable, efficient policies for real-world cascaded inference topologies.
  
\squishend

\newpage
\section{Problem Formulation}\label{sec: problem_formulation}
In this section, we establish the notation, assumptions, and formal framework of costly exploration over general DAGs, which serves as the theoretical model for cascaded inference.

\textbf{Notations}. Throughout, let $n$ denote the number of sub-models in the cascade, and let $\mathcal{X}$ denote the input space with $x \in \mathcal{X}$. We use $T$ to denote the number of samples used to fit the costly inspection strategy. We use $\ell$ to denote the dominant loss function and $c$ to denote the secondary loss function, occasionally abusing notation by referring to $c$ as the (inspection) cost.

We impose two standard assumptions in supervised learning~\citep{gbcb16}. First, all losses are strictly positive\footnote{ Loss functions that may take negative values can be shifted or transformed to satisfy this condition.}. Second, inputs are drawn from a distribution.

\begin{assumption}[Loss]\label{def:class_loss}
For any sub-model $j \in [n]$ and input $x \in \mathcal{X}$, the losses satisfy $\ell_j(x) > 0$ and $c_j > 0$. Moreover, the cost loss $c_j$ is a constant independent of the input.

\end{assumption}

\begin{assumption}[Distributional Assumption]\label{ass:dist_input}
Each $x_t$, for sample index $t \in [T]$, is assumed to be i.i.d. from a fixed distribution $\mathcal{D}$ over $\mathbb{R}^d$. 
\end{assumption}

We distinguish between \emph{recall} and \emph{no-recall} strategies, depending on whether previously consulted models remain available as candidates.

\begin{definition}[Recall / No-Recall Strategy]\label{def:dec_by_recall}
Given a policy that terminates after consulting sub-model $i \in [n]$, the strategy is \textbf{no-recall}, if the final prediction must come from sub-model $i$; \textbf{with-recall}, if the policy may return the prediction of any sub-model $j \leq i$.
\end{definition}

We formalize the costly exploration problem over a DAG. Specifically, we introduce a dummy root node $v_0$ as the starting node of the policy, in addition to the nodes representing sub-models.

\begin{problem}[Markovian Costly Exploration]\label{prob:mar_cost}
    Consider a set of $n$ nodes $\V = \{v_1, \ldots, v_n\}$. 
Each node $v_i$ is associated with a random loss $\ell_i$ drawn from a known distribution $\D_i$. We also include a designated null node $v_0$, which connects to $v_1$ but to no other node, with $\ell_0 = 0$. The nodes are organized into a directed acyclic graph $G = (\V, E)$, where edges encode:

    \squishlist
        \item \textbf{Partial ordering}. For any edge $(v_i, v_j) \in E$, if node $v_j$ is probed, then $v_i$ must be probed strictly before $v_j$. We denote this as $v_i \prec v_j$. Note that $v_i$ itself need not be probed unless $v_j$ is selected.
        
        \item \textbf{Edge cost}. For each edge $(v_i, v_j) \in E$, probing $v_j$ right after $v_i$ incurs an edge cost $c(i,j)$. 
        
        \item \textbf{Markov property}. 
        For any adjacent nodes $v_i \prec v_j \prec v_k$ along a directed path in $G$, 
the node losses satisfy the conditional independence $\ell_i \perp \ell_k \mid \ell_j$.

    \squishend

    A policy $\pi$ starts from $v_0$ and adaptively selects edges to probe subsequent nodes, deciding at each step whether to continue or stop. Let  $\mathcal{O}(\pi)$ denote the set of probed nodes and $E(\pi)$ the set of edges traversed by $\pi$. The goal is to minimize the sum of node and edge losses weighted by the tradeoff parameter $\lambda$: $\mathbb{E}[
        \lambda \cdot f\!([\ell_{i}]_{i \in \mathcal{O}(\pi)})
        \;+\; (1-\lambda) \cdot \sum_{e\in E(\pi)} c(e)
    ]$.

For no-recall, $f$ equals the loss of the last visited node; for with-recall, $f$ equals $\min_{i \in \mathcal{O}(\pi)} \ell_{1,i}$.
\end{problem}

Finally, we describe how a general policy routes through the sub-models in the cascade (Figure~\ref{fig:cascade_pipeline}).

\begin{figure}[H]
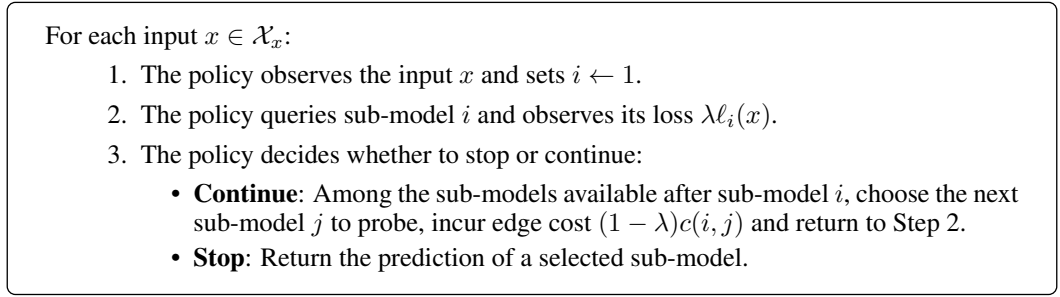

    \centering
    \begin{tcolorbox}[colback=white, colframe=black,boxrule=0.5pt]
For each input $x \in \X_x$: 
\begin{enumerate}
    \item The policy observes the input $x$ and sets $i \gets 1$.
    \item The policy queries sub-model $i$ and observes its loss $ \lambda \ell_i(x)$.
    \item The policy decides whether to stop or continue:
    \begin{itemize}
        \item \textbf{Continue}: Among the sub-models available after sub-model $i$, choose the next sub-model $j$ to probe, incur edge cost $(1-\lambda) c(i,j)$ and return to Step~2.
        \item \textbf{Stop}: Return the prediction of a selected sub-model.
    \end{itemize}  
\end{enumerate}
\end{tcolorbox}
    \vspace{-10pt}
    \caption{\bf Costly Exploration for Cascaded Inference}
    \label{fig:cascade_pipeline}
\end{figure}

\newpage
\section{On Costly Exploration Policies with No Recalls}\label{sec:no_recall}

In this section, we establish an information-theoretic bound showing that no-recall strategies cannot achieve a constant-factor approximation to the offline optimal loss, even in the single-line setting.

\subsection{Connection to Sequential Cascaded Inference}\label{subsec:seq_CI}

The single line case corresponds to the sequential cascaded inference paradigm widely adopted in practice. In sequential CI, sub-models are arranged in a \emph{fixed} order and must be inspected one by one~\citep{deebert20,dbv24}. A no-recall strategy in this setting amounts to always serving the most complex model inspected so far. A prominent approach is the confidence-threshold strategy, which stops and serves the current model once its prediction confidence exceeds a predefined threshold~\citep{app24,deebert20}.

\subsection{Exit with No Recall: No Constant Approximation}\label{subsec:no_recall}

In the single-line case, the costly no-recall exploration problem can be formalized as an optimal stopping problem over a sequence of losses $R_i$ with Markovian dependencies. Specifically, $ R_{i+1} \sim \lambda \ell_{i} + (1-\lambda) c(i-1, i)$ for $i \in [n]$.

\begin{problem}[No Recall Exit Problem]\label{prob:no_recall}

Let costs $R_1 \sim \D_1, \ldots, R_n \sim \D_n$ be non-negative random variables drawn from known distributions $\D_1, \ldots, \D_n$, with a joint distribution exhibiting Markovian dependency, i.e., for all $i \in [n]$, $R_{i+1}$ is conditionally independent of the past given $R_i$:
\begin{align*}
\Pr(R_{i+1} \mid R_1, \ldots, R_i) = \Pr(R_{i+1} \mid R_i), \quad \text{for all } i.
\end{align*}

A decision maker sequentially observes $R_1$ to $R_n$. After observing $R_i$, they must either stop and pay the cost $R_i$ or irrevocably discard and continue. The goal is to design a stopping rule $\ALG$ that minimizes the expected loss.
\end{problem}

Ideally, $\ALG$ is benchmarked against the optimal loss attainable with \emph{perfect} knowledge of all $R_i$.

\begin{definition}[Offline Optimal]\label{def:offline_optimal} 
The benchmark is an oracle who knows all realizations in advance and selects $\min_i R_i$, incurring expected cost $\OPT = \mathbb{E}[\min_i R_i]$.
\end{definition}

As achieving the offline optimal loss is generally infeasible without full future information, the best attainable guarantee lies in bounding the approximation ratio.

\begin{definition}[Approximation Ratio]\label{def:approximate_ratio}
We say $\ALG$ is an $\alpha$-approximation if for some $\alpha \geq 1$,
\begin{align*}
    \mathbb{E}[\ALG] \leq \alpha \cdot \OPT.
\end{align*}
\end{definition}

We concluded by showing that no algorithm can achieve a constant approximation ratio for no recall costly exploration problem, even when the underlying distribution is bounded.

\begin{theorem}[Impossibility of Constant Approximation, No-Recall Costly Exploration]\label{thm:inapprox_no_recall}
For no-recall costly exploration problem (Prob.~\ref{prob:no_recall}), no algorithm achieves a bounded $\alpha$-approximation ratio, even with $n=2$ and bounded distributions.
\end{theorem}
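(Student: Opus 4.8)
The plan is to exhibit, for every target ratio $\alpha \ge 1$, a two-node instance of Problem~\ref{prob:no_recall} with bounded support on which \emph{every} stopping rule (even the optimal one, which knows the distributions) incurs expected cost at least $\alpha\cdot\OPT$. Since $n=2$, the Markov condition is vacuous, so I am free to choose any joint law of $(R_1,R_2)$ with $R_1,R_2>0$. First I would take $R_1\equiv 1$ (a point mass) and, independently, $R_2=\eps$ with probability $p$ and $R_2=H$ with probability $1-p$, where $\eps\in(0,1)$, $p\in(0,1)$, and $H>1$ will be chosen as functions of $\alpha$. The reason for making $R_1$ deterministic is that the observation then carries \emph{no information}: after seeing $R_1=1$, a decision maker has exactly two options --- \textbf{stop} (pay $1$) or \textbf{continue} (then necessarily pay $R_2$, i.e.\ $\eps$ or $H$) --- and any randomized rule is merely a mixture of these two, so it suffices to show both are bad. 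This is what makes the impossibility information-theoretic rather than computational.

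Next I would compute the three relevant quantities. The offline optimum selects $\min(R_1,R_2)=\min(1,R_2)$, so $\OPT=\mathbb{E}[\min(1,R_2)]=p\eps+(1-p)$. Stopping costs exactly $1$, and continuing costs $\mathbb{E}[R_2]=p\eps+(1-p)H$; hence any algorithm $\ALG$ on this instance satisfies $\mathbb{E}[\ALG]\ge\min\{\,1,\ p\eps+(1-p)H\,\}$. Now I would tune the parameters so that \emph{both} quantities inside this minimum exceed $\alpha\cdot\OPT$. Taking $\eps=\tfrac{1}{2\alpha}$ and $p=1-\tfrac{1}{2\alpha}$ gives $\OPT=p\eps+(1-p)\le\eps+(1-p)=\tfrac{1}{2\alpha}+\tfrac{1}{2\alpha}=\tfrac{1}{\alpha}$, so the stopping cost satisfies $1\ge\alpha\cdot\OPT$. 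Taking $H=2\alpha$ gives the continuing cost $p\eps+(1-p)H\ge(1-p)H=\tfrac{1}{2\alpha}\cdot 2\alpha=1\ge\alpha\cdot\OPT$ as well. Therefore $\mathbb{E}[\ALG]\ge\alpha\cdot\OPT$ for every $\ALG$, and since $\alpha$ was arbitrary while the support $\{1,\tfrac{1}{2\alpha},2\alpha\}$ is bounded, no algorithm achieves a bounded approximation ratio.

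The only delicate point is the \emph{joint} requirement that stopping and continuing be simultaneously far from optimal: pushing $p\to 1$ (so $R_2$ is almost always tiny, which makes $\OPT$ small and kills ``stop'') pulls the mean $\mathbb{E}[R_2]=p\eps+(1-p)H$ down unless $H$ is taken large --- and a large $H$ is precisely what makes ``continue'' catastrophic on the rare event $R_2=H$. The scalings above resolve this tension; I expect no other obstacle. For readers who find the point mass degenerate, an equivalent variant replaces $R_1$ by an independent $R_1\sim\mathrm{Unif}(0,1)$ (so the observed loss again reveals nothing about the future); the same computation then shows the best threshold rule is $\Omega(\alpha)$ away from $\OPT$, matching the impossibility.
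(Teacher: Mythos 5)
Your proposal is correct and takes essentially the same route as the paper's own proof: a two-node instance in which $R_1$ is a point mass (so the observation is uninformative) and $R_2$ is a two-point distribution calibrated so that both stopping and continuing cost at least $\alpha\cdot\OPT$, while the prophet exploits the likely small realization of $R_2$. The only differences are the parameter scalings (the paper takes $R_1=1/\alpha^2$ and $R_2\in\{0,1/\alpha\}$, making both options cost exactly $1/\alpha^2$ against $\OPT=1/\alpha^3$), and your variant keeps all losses strictly positive, which is slightly cleaner with respect to the paper's positivity assumption.
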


\begin{proof}[Proof Sketch]
Let $n=2$ and $\alpha>1$ be an arbitrary large constant. Consider the following random variables:
\begin{align*}
    R_1 = \frac{1}{\alpha^2} & & \text{w.p.  1}, & &  R_2 =\left\{
    \begin{array}{ll}
        0 & \text{w.p.} 1 - \frac{1}{\alpha}, \\
        \frac{1}{\alpha} & \text{w.p. } \frac{1}{\alpha}
    \end{array}
    \right.
\end{align*}
Under this construction, any algorithm achieves an expected reward of exactly $1/ \alpha^2$, but a prophet achieves $\OPT = \nicefrac{1}{\alpha^3}$. This indicates an $\alpha$-competitive ratio. This competitive ratio can be made arbitrarily large by increasing $\alpha$. 
\end{proof}

This theorem establishes an \emph{information-theoretic} impossibility: no no-recall costly exploration policy can achieve any non-trivial approximation to the offline optimum. Importantly, this limitation is not due to computational hardness (e.g., NP-hardness), but stems from the intrinsic information structure of the problem. One might ask whether it is possible to design an algorithm $\ALG$ that approximates a restricted class of benchmarks. However, such benchmarks can be trivial to approximate—for example, in the construction above, all strategies achieve the same expected utility.
\newpage

\section{Warm-Up: Indexing over a Directed Line}\label{sec:1_line}

While the no-recall approach is natural, it overlooks an important practical phenomenon: larger models are not always superior to smaller ones and may even ``over-think," producing worse predictions than intermediate models~\citep{scw+25, khd19}. Such behavior has been observed in real systems, underscoring the practical need for recall-based strategies that can revisit earlier models and provide stronger guarantees. In this section, we introduce a theoretically optimal indexing strategy for with-recall costly exploration for single-line setting.

\subsection{With-Recall Costly Inspection over Single Line}\label{sec:wRecall}

Motivated by the theoretical limitations of no-recall policies discussed earlier, we now turn to analyzing the efficiency of with-recall policies. 
The with-recall setting admits a similar abstraction under costly exploration. 
The key distinction from the no-recall formulation is that the two losses cannot be collapsed into a single objective: one loss $R_i = \lambda \ell_i$ is incurred at the nodes, while a separate cost $c_i := (1-\lambda) c(i-1,i)$ is incurred along the edges when moving to the current node.

\begin{problem}[With-Recall Costly Exploration]\label{prob:recall_exploration}
Let the costs of the nodes $R_1, \ldots, R_n$ be non-negative random variables drawn from known distributions $\D_1, \ldots, \D_n$, with a joint distribution exhibiting Markovian dependency; i.e., $R_{i+1}$ is conditionally independent of the past given $R_i$. 

A decision maker sequentially observes $R_1, \ldots, R_n$, where each node $i$ incurs a cost $\lambda c_i$. After observing $R_i$, the decision maker must either stop—incurring a total cost of $\min_{k \in [i]} R_k + \sum_{j \in [i]} c_j$ by selecting $\arg\min_{k \in [i]} R_k$—or continue. The goal is to minimize the expected total cost.
\end{problem}

Since Markovian-correlated distributions with continuous support cannot be directly represented without additional assumptions~\citep{ek09}, we quantize them into a discrete domain and base decisions on this discretization. Such discretization is standard in practice (e.g., grid search). Hence, without loss of generality, we assume $\mathcal{D}_1, \ldots, \mathcal{D}_n$ are discrete.

Note that the counterexample in Theorem~\ref{thm:inapprox_no_recall} continues to yield arbitrarily large approximation gaps even when recall is allowed. We therefore benchmark against a more favorable comparator with tractable guarantees.

\begin{definition}[Online Optimal]\label{def:online_optimal}
The benchmark is the optimal online algorithm, which has access to the joint distribution $ \mathcal{D}_1, \ldots, \mathcal{D}_n $ and achieves the minimum possible expected loss without observing realizations in advance.
\end{definition}

By Bellman’s principle of optimality, we derive the optimal with-recall costly exploration strategy via dynamic programming, starting from the last node and progressively extending to the first. This yields a clean structural result: the optimal policy stops once the current minimum loss falls below a dynamic index $\sigma$ determined by the current observation.

\begin{algorithm}[H]\caption{Costly Exploration via Indexing}\label{alg:recall_exploration}
\begin{algorithmic}[1]
    \Require Nodes $\{v_1,\ldots, v_n\}$, edge costs $\{c_1,\ldots, c_n\}$, dynamic index $\sigma(i,s)$ (Def.~\ref{def:ada_threshold_1_line}) for all $i$ and $s \in V$.
    \State Initialize minimum loss $X \gets \infty$, $i \gets 1$, $\sigma \gets \sigma(1,\emptyset)$.
    \While{$X > \sigma$} 
        \State Pay $c_i$ to inspect node $v_i$, observe loss $R_i$.
        \State $X \gets \min \{X, R_i\}$. \Comment{Update minimum loss.}
        \State $\sigma \gets \sigma(i+1,R_i)$. \Comment{Update threshold.}
        \State $i \gets i+1$.
    \EndWhile
    \State \textbf{Return} node $v_j$ that has been inspected with the minimum loss.
\end{algorithmic} 
\end{algorithm}

Algorithm~\ref{alg:recall_exploration} illustrates the structure of the optimal strategy. At each step, the algorithm updates the dynamic index of the next box and decides whether to stop given the current minimum loss. Thus, the stopping decision at step $i$ can be represented as a stop/continue rule table dependent on $(X,R_i)$. 

\subsection{The Dynamic Index}\label{subsec:dynamic_index}
We now introduce the dynamic index, which serves as the core of our provably efficient strategy. At a high level, the optimal policy stores, for every possible state $(X, R_{i-1}, i)$, the corresponding index and the stop/continue decision. These indices can be computed efficiently via dynamic programming, proceeding backward from the last node.

\noindent \textbf{Notation.} Assume that each $R_i$ takes values in a common finite support $V = \{v_1,\ldots,v_k\}$. For each $i \in [n]$, let $\vp_i$ denote the probability mass function (PMF) of $R_i$, with $\vp_i[v_q] = \Pr[R_i = v_q]$. Let $P_i \in \mathbb{R}_+^{k \times k}$ be the transition matrix from $\D_i$ to $\D_{i+1}$, such that $\vp_{i+1} = \vp_i \cdot P_{i+1}$. The optimal policy $\pi$ reduces to a stopping rule that selects the node with minimum loss, we use $\tau$ to denote it.

We now formally describe the dynamic programming procedure to compute the dynamic index function $\sigma$. Any stopping time $\tau$ depends only on the current minimum reward $X$, the most recent loss $R_i$, and the next candidate index $i+1$. We refer to the tuple $(X, R_i, i{+}1)$ as the algorithm’s \emph{state}, and proceed to derive the expected loss incurred at this state under a given stopping time $\tau$. 


\begin{definition}[Equivalent Loss]\label{def:exp_loss}
    Given $\tau$ and $(x, R_{i-1}, i)$, we define the expected loss of the state $(X, R_{i-1}, i)$ following stopping rule $\tau$ as $\Phi^{\tau}(X, R_{i-1}, i)$.
    \begin{align*}
    \Phi^{\tau}(X, R_{i-1}, i) := \E [\min \{X , \min_{j=i} ^{\tau(X, R_{i-1}, i)}R_j\} + \sum_{j=i}^{\tau(X, R_{i-1}, i)} c_j]
    \end{align*}
\noindent In addition, we use $\Phi (X, R_{i-1}, i)  = \Phi ^{\tau^*}(X, R_{i-1}, i)= \min _\tau \Phi^{\tau}(X, R_{i-1}, i)$ to denote the expected future loss following the optimal strategy $\tau^*$ starting at state $(X, R_{i-1}, i)$.
\end{definition}

Let $\Phi$ be the expected loss of a state, then $\Phi$ can be solved inductively using Bellman’s principle of optimality: $\Phi^{\tau^*}(X, R_{i-1},i) 
     = \min \Big \{X, c_i + \E_{R_i|R_{i-1}}[\phi^{\tau^*} (\min \{X,R_i\}, R_i, i+1) ] \Big\}$, where the first term corresponds to stopping immediately, and the second to continuing and applying the optimal stopping time $\tau^*$ for the remaining boxes. 

Importantly, for fixed $R_{i-1}$ and $i$, there exists a maximal $X$ such that the decision maker is \emph{indifferent} between stopping and opening the next box. This value defines the dynamic index $\sigma$ that governs our algorithm (Alg.~\ref{alg:recall_exploration}). More formally,  

\begin{definition}[Dynamic Index]\label{def:ada_threshold_1_line}
	\label{def:grv}
	Given any state $(X, R_{i-1}, i)$, we define the dynamic index at the current state, denoted by $\sigma_i(X, R_{i-1}, i)$, as the smallest solution to: 
	\begin{equation}
	\E \Big[\Big( \sigma - \min_{j=i} ^{\tau^*(\sigma, R_{i-1}, i)}R_j \Big)_+ - \sum_{j=i}^{\tau^*(\sigma, R_{i-1}, i)} c_j\Big] = 0,
	\label{eq:grve}
	\end{equation}
    where $\tau^*$ is the optimal strategy.
\end{definition}

\subsection{Provable Efficiency}\label{subsec:results_1_line}
Next, we establish that the dynamic index is both well-defined and optimal, thereby justifying the correctness of our algorithm (Alg.~\ref{alg:recall_exploration}). Moreover, we show that the optimal strategy can be computed efficiently via dynamic programming. See Sec.~\ref{sec:app:details_1_line} for more details.

\begin{theorem}[Optimality and Efficiency of Dynamic Indexing]\label{thm:opt_dynamic_index}
Given the current state $(X, R_i, i{+}1)$, there exists a solution to~\eqref{eq:grve}. 
This solution is independent of the current minimum loss $X$ and can be denoted by $\sigma(R_i, i{+}1)$. The indexing policy that stops when $\sigma > X$ and continues otherwise (Alg.~\ref{alg:recall_exploration}) achieves online optimality.

Furthermore, preprocessing this policy takes $O(n \cdot |V|^2 T)$ time and requires $O(n|V|^2)$ space. At inference time, for each input $x$, the policy runs in $O(1)$ per node and $O(n)$ overall per input.

\end{theorem}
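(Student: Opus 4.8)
The plan is to analyze the finite-horizon dynamic program behind $\Phi$ by backward induction, isolating one structural property of the value function from which every claim follows. I work with the state $(X,s,i)$ meaning ``running minimum $X$, last observed loss $s$, about to decide on node $i$''; the Markov property makes this a sufficient statistic, so $\Phi$ obeys the Bellman recursion stated before Definition~\ref{def:ada_threshold_1_line}, namely $\Phi(X,\cdot,n{+}1)=X$ and
\[
\Phi(X,s,i)=\min\{X,\;H_i^s(X)\},\qquad H_i^s(X):=c_i+\E_{R_i\mid s}\big[\Phi(\min\{X,R_i\},R_i,i{+}1)\big].
\]

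\textbf{Key lemma (slope in $[0,1]$).} I would first prove, by backward induction on $i$ and for every $s$, that $X\mapsto\Phi(X,s,i)$ is continuous and non-decreasing, satisfies $0\le\Phi(X,s,i)\le X$, and --- the crucial part --- has $X\mapsto X-\Phi(X,s,i)$ non-decreasing. The base case $\Phi=X$ is trivial. In the step the first properties drop out of the recursion; for the last one, $X-\Phi(X,s,i)=\max\{0,\,X-H_i^s(X)\}$, so it suffices to show $X\mapsto X-\E_R[\Phi(\min\{X,R\},R,i{+}1)]$ is non-decreasing. Fixing a realization $R$, the map $X\mapsto\Phi(\min\{X,R\},R,i{+}1)$ equals $\Phi(X,R,i{+}1)$ on $\{X\le R\}$ and the constant $\Phi(R,R,i{+}1)$ on $\{X\ge R\}$; by the inductive hypothesis the first piece has slope at most $1$ and the second slope $0$, and they agree at $X=R$, so $X-\Phi(\min\{X,R\},R,i{+}1)$ is non-decreasing pointwise, hence in expectation. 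I expect this slope-at-most-one induction to be the only real obstacle; everything downstream is bookkeeping.

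\textbf{From the lemma to the index and optimality.} Fix $(s,i)$ and set $F(X):=X-H_i^s(X)$. By the lemma $H_i^s$ is continuous and bounded on bounded sets, so $F$ (extended so $X$ ranges over the reals) is continuous and non-decreasing, with $F(0)=-c_i<0$ and $F(X)\to+\infty$. Hence $F$ has a smallest zero $\sigma=\sigma(s,i)$, a number depending only on $(s,i)$; and from the recursion $\Phi(X,s,i)=X$ for $X\le\sigma$ while $\Phi(X,s,i)=H_i^s(X)\le X$ for $X\ge\sigma$, i.e.\ stopping is optimal exactly when $X\le\sigma(s,i)$. To identify $\sigma$ with Definition~\ref{def:ada_threshold_1_line}, unroll the nested expectation in $H_i^s(\sigma)$ down to the continuation's optimal stopping time $\tau^*$, giving $H_i^s(\sigma)=\E\big[\min\{\sigma,\min_{j=i}^{\tau^*}R_j\}+\sum_{j=i}^{\tau^*}c_j\big]$; substituting into $F(\sigma)=0$ and using $\sigma-\min\{\sigma,m\}=(\sigma-m)_+$ yields exactly~\eqref{eq:grve}, and the smallest zero matches the ``smallest solution''. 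This gives existence and $X$-independence (re-indexed, $\sigma(R_i,i{+}1)$ as in the statement). Optimality of Algorithm~\ref{alg:recall_exploration} is then immediate: at state $(X,R_{i-1},i)$ it continues iff $X>\sigma(R_{i-1},i)$ and otherwise stops and returns the inspected node of minimum loss, which is precisely the arg-min action of the Bellman recursion (ties immaterial), so by Bellman's principle it realizes $\Phi(\infty,\emptyset,1)$, the online optimum of Definition~\ref{def:online_optimal}.

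\textbf{Complexity.} Preprocessing is (i) a single $O(nT)$ pass over the $T$ samples to accumulate and normalize the $n$ empirical transition matrices ($O(n|V|^2)$ further work), then (ii) the backward DP: $n$ layers, $O(|V|^2)$ states $(X,s)$ per layer, and $O(|V|)$ work per state to evaluate $H_i^s$ from the layer-$(i{+}1)$ table and to locate $\sigma(s,i)$ by a monotone scan over the grid. Since the discretized support has at most as many points as samples, $|V|=O(T)$, so the $O(n|V|^3)$ DP cost is absorbed into the stated $O(n|V|^2T)$; storing one $O(|V|^2)$ table per layer gives $O(n|V|^2)$ space. At inference each input maintains only the scalar $X$ and, per node, does one table lookup of $\sigma$ and one comparison, hence $O(1)$ per node and $O(n)$ per input.
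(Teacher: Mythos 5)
Your proposal is correct and lands on the same threshold characterization, but the technical core is argued differently from the paper. The paper works directly with the stopping-time representation of $\Phi$: Lemma~\ref{lem:app:phi_prop} obtains monotonicity and $1$-Lipschitzness by reusing the optimal stopping time of one state as a suboptimal rule for a nearby state; Lemma~\ref{lem:app:unique_fair_cap} then extracts the smallest root of \eqref{eq:grve} from a sign change of $H_i$ and adds a separate $\epsilon$-perturbation argument to show that at the indifference point some optimal rule actually opens node $i$ (which is what keeps \eqref{eq:grve} non-degenerate); optimality and the complexity bounds come from the table-filling dynamic program of Algorithm~\ref{alg:app:gw_1path} and Lemma~\ref{lem:app:computing_rv}. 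You instead prove the slope-in-$[0,1]$ property of $\Phi(\cdot,s,i)$ by backward induction through the Bellman recursion and define the index as the smallest zero of $F(X)=X-H_i^s(X)$, where $H_i^s$ is the must-continue value; this makes the $X$-independence of $\sigma$ and the tie at the threshold essentially automatic, and unrolling $H_i^s(\sigma)$ recovers \eqref{eq:grve} exactly, so you get the content of the paper's perturbation argument for free. Your complexity accounting mirrors the paper's DP-table scheme and is in fact more explicit about where $O(n\cdot|V|^2T)$ comes from (estimating transitions from the $T$ samples plus an $O(n|V|^3)$ backward pass absorbed via $|V|\le T$), whereas the paper only asserts polynomiality. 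As a side benefit, your sign conventions ($0\le\Phi\le X$, stop iff $X\le\sigma$) are the ones consistent with the minimization objective and Algorithm~\ref{alg:recall_exploration}, while Lemma~\ref{lem:app:phi_prop} as written states the maximization-style facts $H_i\ge 0$ and $\Phi(x)=x$ for $x\ge\sigma_i$; so there is no gap, only a somewhat different and arguably cleaner route to the same theorem.
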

The intuition behind the guarantee is that the stop/continue decision for each state $(X, R_i, i{+}1)$ can be precomputed and stored. At inference time, the algorithm queries this table in $O(1)$ per step, yielding $O(n)$ time per sample. Further details are given in Lem.~\ref{lem:app:unique_fair_cap} and Lem.~\ref{lem:app:computing_rv}. 




\newpage
\section{Extension: Strategies Over General DAGs}\label{sec:extension}
In this section, we extend the dynamic index to general DAGs. The generalized indexing policy remains optimal, but it must additionally incorporate the routing decision, i.e., which model to inspect next. We show that this policy can still be implemented efficiently via dynamic programming.

\subsection{Directed Tree}\label{subsec:directed_tree}

We describe how to generalize the previous indexing strategy to the directed tree setting. One application of this setting is cost-aware binary search over domains, where the tree corresponds to a binary search tree and the loss represents the cost of acquiring feedback, as in human-in-the-loop settings such as RLHF~\citep{xdy+23} or crowdsourcing.

\begin{figure}[H]
    \centering
    \subfloat[Subtree Identification]{%
        \includegraphics[width=0.3\textwidth]{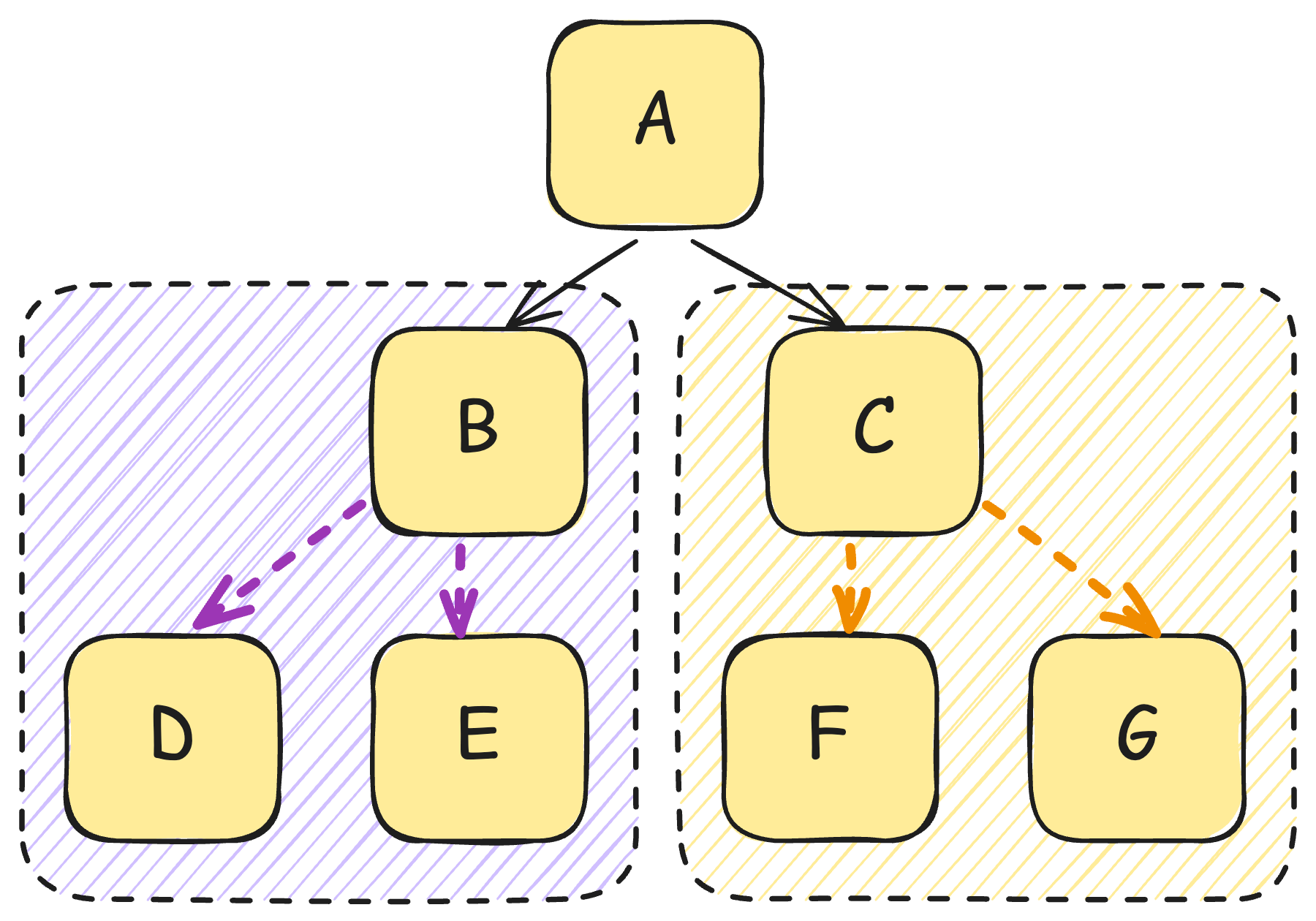}
        \label{fig:tree_step_1}
    }
    \hfill
    \subfloat[1st Contraction]{%
        \includegraphics[width=0.18\textwidth]{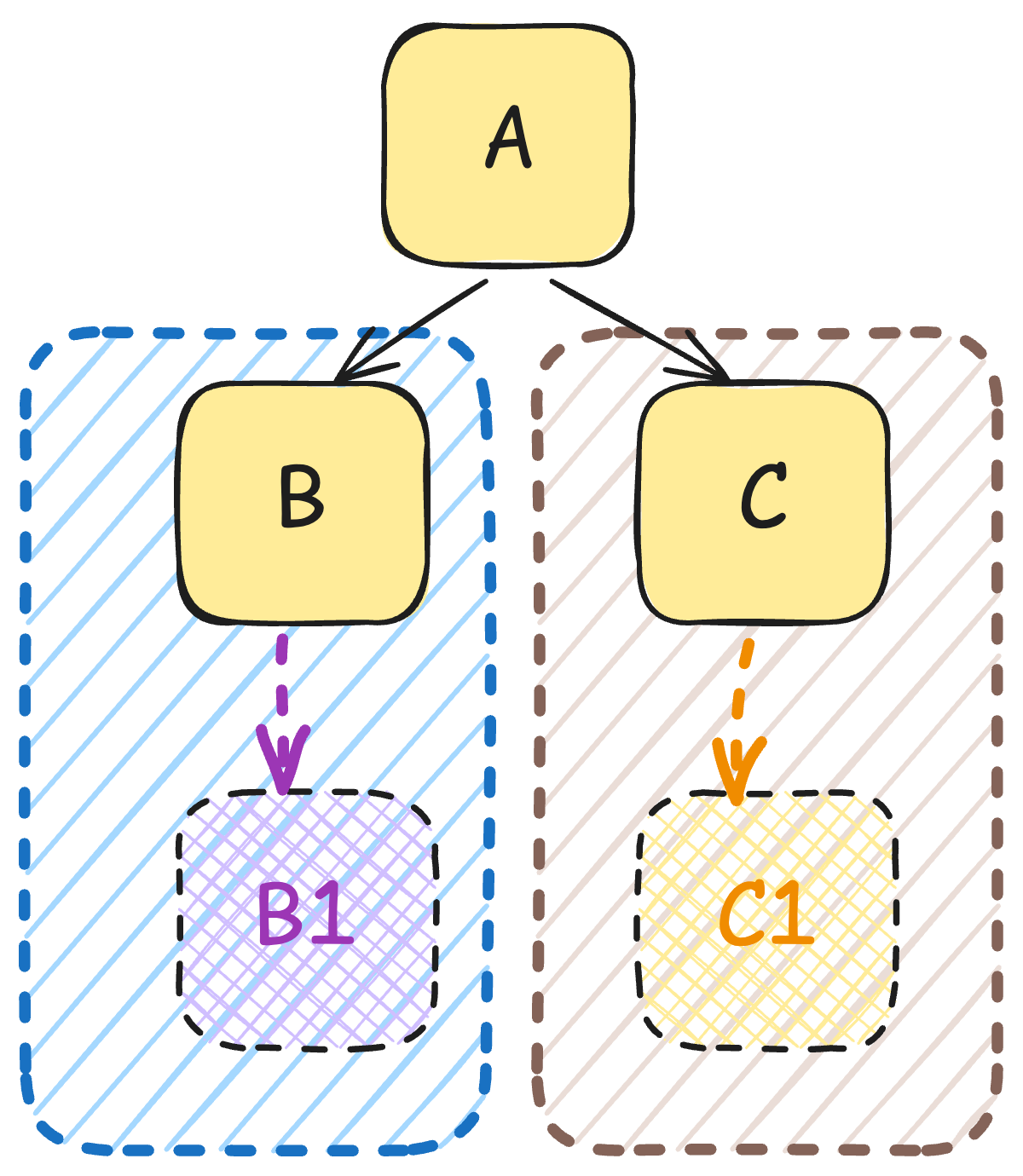}
        \label{fig:tree_step_2}
    }
    \hfill
    \subfloat[2nd Contraction]{%
        \includegraphics[width=0.25\textwidth]{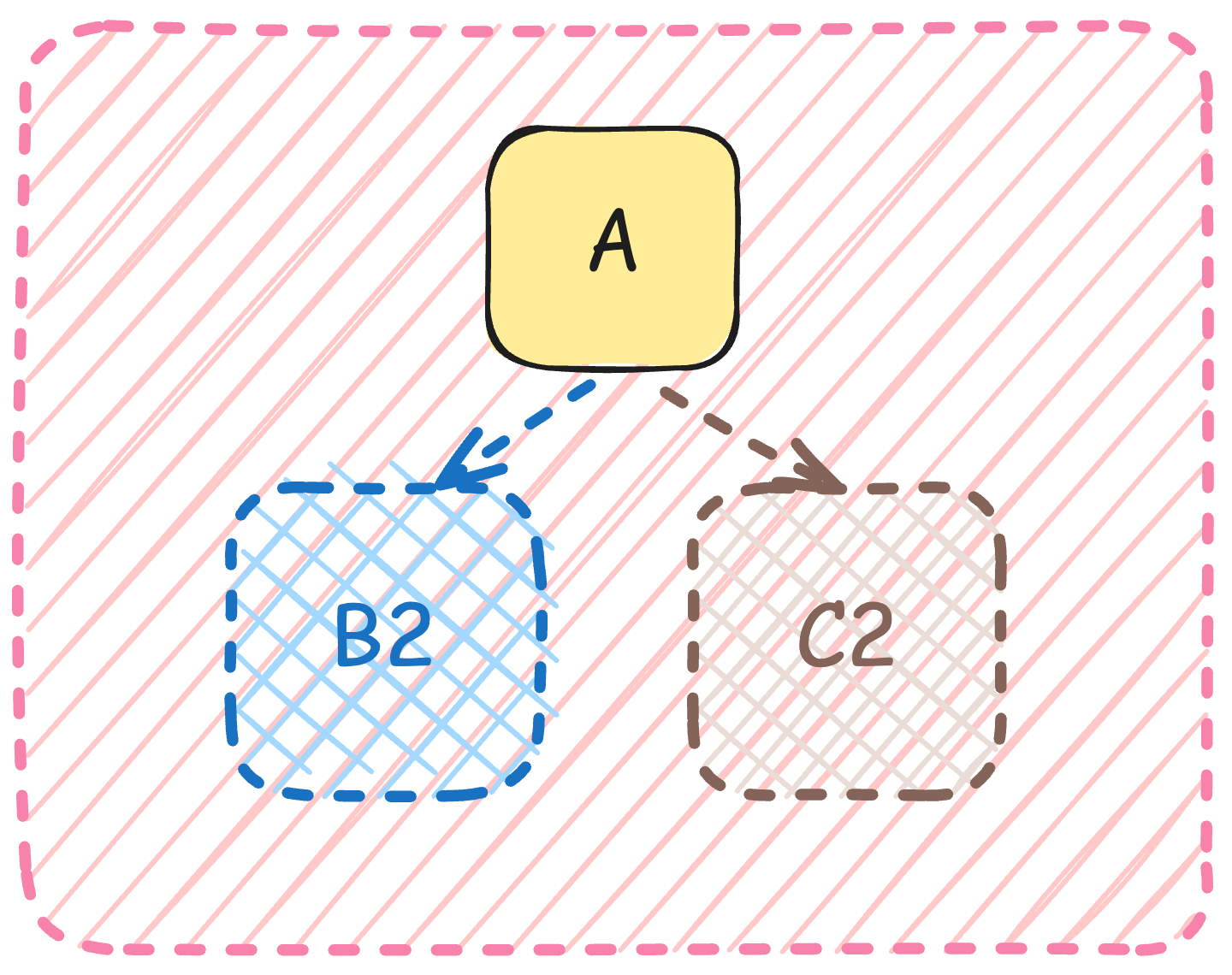}
        \label{fig:tree_step_3}
    }
    \hfill
    \subfloat[ Collapsed Line]{%
        \includegraphics[width=0.2\textwidth]{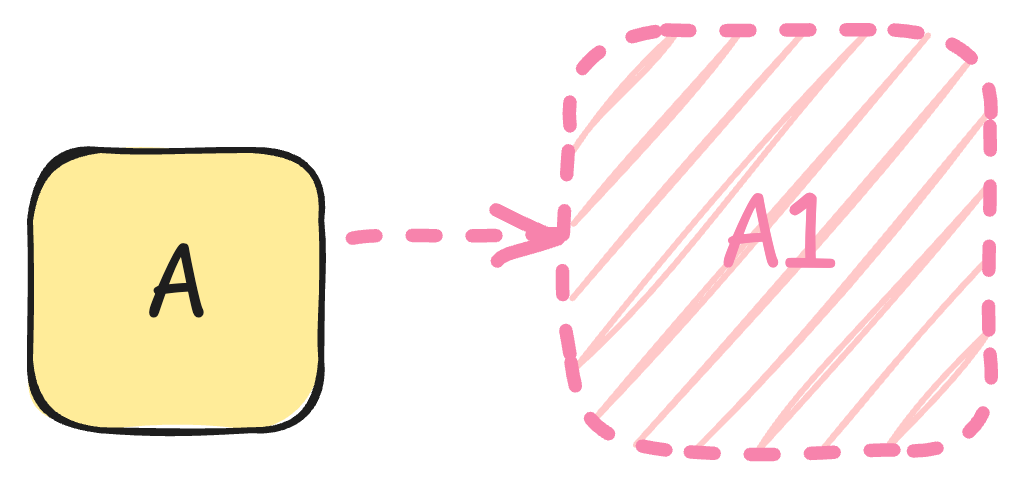}
        \label{fig:tree_step_4}
    }
    \caption{\bf An Illustration on Node Contraction}
    \label{fig:node_contraction}
\end{figure}

Illustrated by Fig.~\ref{fig:node_contraction}, the key idea of our generalized dynamic index is to define a tree contraction procedure that contracts a subtree into a single node while preserving the equivalent loss table and loss distribution of the subtree. Concretely, we first identify subtrees whose children consist only of single nodes or multi-lines, contract these into single nodes, and then repeat the process iteratively.

To carry out this indexing policy, we can still keep an if-stop matrix for each node, which only depends on the realized minimum loss and the loss of that node. The main difference is that now, when using dynamic programming, we need to combine information from all of the node’s children.

\begin{theorem}[Dynamic Indexing in Directed Trees: Optimality and Efficiency]\label{thm:dyn_index_tree}
There exist a generalization of the dynamic indexing policy (Alg.~\ref{alg:dyn_forest}), which is theoretically optimal. Furthermore, preprocessing this policy takes $O(n \cdot |V|^2 T)$ time and requires $O(n|V|^2)$ space. At inference time, for each input $x$, the policy runs in $O(1)$ per node and $O(n)$ overall per input.
\end{theorem}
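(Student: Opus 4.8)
The plan is to reduce the directed-tree problem to the single-line case of Theorem~\ref{thm:opt_dynamic_index} by showing that \emph{every} subtree is behaviorally equivalent to a single node carrying an entry (edge) cost and a conditional loss distribution, and then running the indexing dynamic program in reverse topological order --- which is exactly the node-contraction schedule of Figure~\ref{fig:node_contraction}. Fix a node $v$ with parent $u$, and let $\Phi_v(X,r)$ denote the optimal expected value of $\lambda\cdot(\text{running-min loss contribution})+(1-\lambda)\cdot(\text{edge costs})$ accrued inside the subtree rooted at $v$, conditioned on: the policy does descend into $v$, the running minimum on arrival is $X$, and $u$ realized loss $r$ --- the only relevant conditioning, since the tree Markov property gives $\ell_v\perp\ell_{(\text{ancestors of }u)}\mid\ell_u$. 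Writing $X':=\min\{X,R_v\}$, Bellman's principle yields
\[
\Phi_v(X,r)\;=\;c(u,v)\;+\;\E_{R_v\mid r}\Big[\min\Big\{\,X',\ \min_{w\,:\,v\prec w}\Phi_w(X',R_v)\,\Big\}\Big],
\]
where the first branch is ``stop at $v$'' and the inner $\min$ ranges over the children $w$ of $v$ (empty, hence $+\infty$, at a leaf).

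The heart of the argument is a structural lemma, proved by induction from the leaves up: for every $v$ and $r$, the map $X\mapsto\Phi_v(X,r)$ is non-negative, non-decreasing, concave, $1$-Lipschitz, equal to $c(u,v)$ at $X=0$, and eventually flat; equivalently $\Phi_v(X,r)=c(u,v)+\E_{W_v\mid r}[\min\{X,W_v\}]$ for a non-negative random variable $W_v$ whose conditional law given $r$ is determined by the subtree. The base case (leaf) holds with $W_v=\ell_v$. For the inductive step, the family of non-negative, non-decreasing, concave, $1$-Lipschitz functions that vanish at $0$ and are eventually flat is closed under (i) pre-composition with the running-min map $X\mapsto\min\{X,R_v\}$, (ii) taking the pointwise minimum of the ``stop'' function $\min\{X,R_v\}$ with finitely many functions of the form $c(v,w)+(\text{a family member})$, and (iii) taking conditional expectations over $R_v$; moreover every family member $f$ satisfies $f(X)=\E[\min\{X,W\}]$ for some finite $W\ge0$, because its (discrete) right-slope is non-increasing and $[0,1]$-valued and hence is a valid survival function. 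Applying (i)--(iii) to the Bellman display shows $\Phi_v(\cdot,r)-c(u,v)$ is again a family member, closing the induction. This is precisely the statement that contracting the subtree at $v$ into one node --- with edge cost $c(u,v)$ and conditional loss law $W_v\mid r$ --- preserves its equivalent-loss table; and since $W_v$ depends only on losses within $v$'s subtree, the tree Markov property also gives $W_v\perp\ell_{(\text{ancestors of }u)}\mid\ell_u$, so the Markov structure needed by subsequent contractions is intact. The single-line case (Theorem~\ref{thm:opt_dynamic_index}) is exactly the sub-case where every node has at most one child.

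Optimality of the generalized indexing policy (Alg.~\ref{alg:dyn_forest}) then follows as on the line. At node $v$ with observed loss $R_v$ and prior running minimum $X$, the continuation value $\min_w\Phi_w(\min\{X,R_v\},R_v)$ is, as a function of $X$, concave, non-decreasing, $1$-Lipschitz, and strictly above the stopping value $\min\{X,R_v\}$ at $X=0$ (it is at least $\min_w c(v,w)>0$ there, using strict positivity of costs, Assumption~\ref{def:class_loss}); hence ``stop minus continue'' is non-decreasing in $X$ and changes sign at most once. This single crossing defines a dynamic index $\sigma(R_v,\cdot)$, characterized by the indifference equation of Definition~\ref{def:ada_threshold_1_line} with the inner optimal stopping time replaced by the subtree-optimal policy (so a $\min$ over children appears inside the expectation); the decision depends on $X$ only through the test $X\lessgtr\sigma$, and Bellman optimality certifies that stopping when the running minimum drops below $\sigma$ and otherwise descending into the child attaining $\min_w\Phi_w$ is online-optimal. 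For efficiency, we first estimate the empirical PMFs and transition matrices from the $T$ samples, then run the contraction/DP bottom-up, computing for each node its table $\Phi_v$, its conditional loss law $W_v\mid r$ (read off as finite differences of $\Phi_v$ on the grid $V$), and its precomputed stop-or-which-child table indexed by $(X,R_v)\in V\times V$; each node contributes $O(|V|^2)$ combining work beyond its children (a $\min$ over their tables), so that summing over the tree, together with the estimation pass, gives $O(n|V|^2T)$ time and $O(n|V|^2)$ space; at inference, each node is a single $O(1)$ table lookup, i.e.\ $O(n)$ per input.

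The main obstacle is the structural lemma: confirming that the value-function family is genuinely closed under the tree-combination operator --- in particular under the $\min$ over several children rather than one successor, and with the additive edge-cost constants --- and pinning down the boundary and slope behavior precisely enough that the stopping and continuation curves cross exactly once, which is what makes the dynamic index well-defined and independent of $X$. Everything downstream --- the indifference-equation characterization, Bellman-optimality of the index policy, and the $O(n|V|^2T)$ accounting --- then mirrors the single-line proof, now applied node-by-node over the contracted tree.
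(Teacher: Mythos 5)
There is a genuine gap, and it sits at the heart of your reduction. Your Bellman recursion
\begin{equation*}
\Phi_v(X,r)\;=\;c(u,v)\;+\;\E_{R_v\mid r}\Big[\min\Big\{\,X',\ \min_{w\,:\,v\prec w}\Phi_w(X',R_v)\,\Big\}\Big]
\end{equation*}
defines $\Phi_w$ as the value of exploration confined to the subtree of a \emph{single} chosen child $w$, so the induced policy class is ``depth-first with commitment'': at a branch vertex you either stop or pick one child subtree and never return to a sibling. With recall this is not the full policy class, and it is strictly suboptimal in general. Take a root with two leaf children $A,B$, edge costs $\varepsilon>0$ small, and losses that are (conditionally) $0$ or $1$ with probability $1/2$ each: the optimal with-recall policy probes $A$, and if $A$ realizes the bad loss it crosses over and probes $B$, keeping the running minimum. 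Your recursion cannot express ``probe $A$, then probe $B$,'' so the value it computes is the optimum over a smaller class, and the policy you extract (``descend into the child attaining $\min_w\Phi_w$ and stay there'') is not online-optimal. Consequently the structural lemma, however correct for the quantities you define, does not certify that contracting a subtree into a node with law $W_v$ preserves the \emph{true} optimal value of the tree problem — only the value of committed policies.

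This is exactly the issue the paper's proof is built to handle. Its route is: (i) show a subtree/hypernode can be contracted into an equivalent node with \emph{random, loss-correlated cost} whose distribution encodes the optimal exploration inside it (Lem.~\ref{lem:app:eq_node}, Lem.~\ref{lem:app:dyn_index_multi_random}); (ii) prove a multi-line index theorem (Thm.~\ref{thm:ML_dynamic}) via an exchange argument on three nodes (Lem.~\ref{lem:three_node_lemma}, the payoff-table comparison of orderings $A\prec B\prec C$ vs.\ $B\prec A\prec C$), which is precisely what legitimizes interleaving among sibling branches in order of their current dynamic indices; and (iii) contract minimal trees bottom-up (Alg.~\ref{alg:dyn_forest}) so that after probing the root of a minimal tree the residual problem is a multi-line instance, then invoke (ii) (Thm.~\ref{thm:app:dyn_index_opt_tree}), with the complexity accounting in Lem.~\ref{lem:app:update_dyn_index_tree}. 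Your leaf-up induction on the value-function family (monotone, concave, $1$-Lipschitz, representable as $\E[\min\{X,W\}]$) parallels the paper's Lem.~\ref{lem:app:phi_prop} and would be a useful ingredient, but to repair the argument you must replace the single-child $\min$ in your recursion by an analysis of the genuinely interleaved problem over all currently available frontier nodes — i.e., you need the multi-line exchange/index step, or an equivalent Gittins-type argument, before any contraction of a branching subtree into a single equivalent node is justified.
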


We defer the details to 
Lem.~\ref{lem:app:update_dyn_index_tree} and Thm.~\ref{thm:app:dyn_index_opt_tree} in the Appendix~\ref{sec:app:details_tree}.  

\subsection{Transitive Closure of A Directed Line}\label{subsec:skip}

We extend the dynamic index to the transitive closure of a directed line. Unlike the directed line setting, where models must be evaluated strictly sequentially, the transitive closure allows skipping while preserving order, enabling cost savings by reducing the number of evaluations.

\begin{theorem}[Dynamic Indexing in Skipped Inference: Optimality and Efficiency]\label{thm:dyn_index_skip}
There exists a generalization of the dynamic indexing policy, which is theoretically optimal. Furthermore, preprocessing this policy takes $O(n^2 \cdot |V|^2 T)$ time and requires $O(n|V|^2)$ space. At inference time, for each input $x$, the policy runs in $O(1)$ per node and $O(n)$ overall per input.
\end{theorem}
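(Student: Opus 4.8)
\textbf{Proof proposal for Theorem~\ref{thm:dyn_index_skip} (Dynamic Indexing in Skipped Inference).}

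The plan is to reduce the transitive-closure-of-a-line setting to the directed-line case already handled in Theorem~\ref{thm:opt_dynamic_index}, by showing that the optimal routing decision collapses the skip structure onto an \emph{induced sub-line}, and then to run the backward dynamic program of Section~\ref{subsec:dynamic_index} over all such induced sub-lines. Concretely, once the policy has inspected node $v_i$, its only remaining choice is which later node $v_j$ (with $j > i$) to probe next, paying $c(i,j)$; the Markov property along the transitive closure guarantees that $\ell_j \mid \ell_i$ has a known transition matrix $P_{i\to j}$ (obtainable as a product of the single-step transitions or directly from $\D_{i,j}$). Thus the state is still $(X, R_i, i)$ together with the set of still-available successors, but because the available successors are always exactly $\{v_{i+1},\dots,v_n\}$, the state reduces to $(X, R_i, i)$ as before. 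The first step is to write the Bellman recursion
\[
\Phi(X, R_i, i) \;=\; \min\Bigl\{\, X,\ \min_{j > i}\ \bigl( c(i,j) + \E_{R_j \mid R_i}\bigl[\Phi(\min\{X,R_j\}, R_j, j)\bigr] \bigr) \Bigr\},
\]
and to argue, exactly as in Lemma~\ref{lem:app:unique_fair_cap}, that for fixed $(R_i,i)$ the continuation value is a concave, $1$-Lipschitz, nondecreasing function of $X$ that lies below the identity, so there is a well-defined largest $X$ at which stopping ties continuing; call it $\sigma(R_i,i)$, independent of $X$. The indexing rule ``stop iff $X \le \sigma(R_i,i)$'' is then optimal by the same indifference/exchange argument as in the line case.

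The second step is the complexity count. For each $i$ from $n$ down to $1$, and for each of the $|V|$ possible values of $R_i$, computing the inner $\min$ over $j > i$ requires, for each candidate $j$, forming $\E_{R_j\mid R_i}[\Phi(\min\{X,R_j\},R_j,j)]$ — but note that $\Phi(\cdot,R_j,j)$ as a function of its first argument has already been tabulated (it is piecewise linear with at most $|V|$ breakpoints), so the conditional expectation is a single inner product of length $|V|$ against the row of $P_{i\to j}$. There are $O(n)$ choices of $j$, $O(|V|)$ values of $R_i$, and $O(|V|)$ work per $(i,j,R_i)$ triple for the inner product, giving $O(n^2 |V|^2)$ per the structural DP; the extra factor of $T$ comes from estimating each needed transition matrix $P_{i\to j}$ from the $T$ training samples (there are $O(n^2)$ such matrices), yielding the stated $O(n^2 |V|^2 T)$ preprocessing. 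Storage is one if-stop matrix of size $|V|\times|V|$ per node, i.e.\ $O(n|V|^2)$. At inference, after observing $R_i$ the policy looks up $\sigma(R_i,i)$ and the stored ``best next node $j$'' in $O(1)$, so each input costs $O(n)$.

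The main obstacle I anticipate is the structural lemma underpinning the index: one must verify that the value function $X \mapsto \Phi(X,R_i,i)$ retains concavity and the ``below the diagonal'' property under the \emph{minimum over $j$} in the recursion, not just under a single successor as in the line case. A pointwise minimum of concave functions is generally not concave, so the argument cannot be a naive induction; instead I would track the stronger invariant that each continuation term $c(i,j) + \E_{R_j\mid R_i}[\Phi(\min\{X,R_j\},R_j,j)]$ is individually concave, $1$-Lipschitz and nondecreasing in $X$ (which \emph{is} preserved because $X\mapsto\min\{X,R_j\}$ is concave nondecreasing $1$-Lipschitz and $\Phi(\cdot,R_j,j)$ is too by the inductive hypothesis, and composition then expectation preserve all three), and that $\min\{X, (\text{min of such terms})\}$ again has these properties and lies weakly below the identity because each term does when $X$ is large. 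Once this invariant is in place, existence and $X$-independence of $\sigma(R_i,i)$ follow from the intermediate value theorem exactly as in Lemma~\ref{lem:app:unique_fair_cap}, and optimality follows from the same exchange/indifference argument as Theorem~\ref{thm:opt_dynamic_index}. The remaining details — correctness of the transition-matrix estimates and the routing bookkeeping — are routine.
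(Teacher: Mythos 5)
Your proposal follows essentially the same route as the paper: the same Bellman recursion with the minimum taken over all later candidate nodes, the same indifference/threshold argument giving an $X$-independent index $\sigma(R_i,i)$, and the same factor-of-$n$ accounting that yields $O(n^2\cdot|V|^2 T)$ preprocessing with unchanged $O(n|V|^2)$ space and $O(n)$ per-input inference. One minor factual slip: a pointwise minimum of concave functions is in fact concave (it is the maximum that can fail), and the paper's structural lemma only needs $1$-Lipschitzness and monotonicity anyway, so your extra invariant-tracking is harmless but unnecessary.
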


We can still use dynamic programming to pre-compute the if-stop matrix and the equivalent-loss table. The key difference from the single-line case is that, when computing the equivalent loss, we must enumerate over all possible next nodes rather than just the immediate successor. This increases the preprocessing time by a factor of $n$. We defer the technical details to Section~\ref{subsec:app:details_skip}.

\section{Experiments}\label{sec:experiment}

In this section, we evaluate our dynamic indexing strategy \recall~over real-world CV/ NLP early exit (EE) classification workloads. Experiments of synthetic data and more details are in Appendix~\ref{sec:app:exp}

\noindent \textbf{Experimental Setup}. Our testbed consists of a single server equipped with two Intel Xeon Gold 6438M processors (3.90 GHz), 512 GB of DDR5 memory, and one NVIDIA RTX 4000 Ada GPU. The server runs Ubuntu 22.04 with Linux kernel 5.15, CUDA version 13.0, and Python 3.10.

\noindent \textbf{Metrics} We use the error rate as a metric, defined as $\text{Err} = 1 - \text{Acc}$, where $\text{Acc}$ is the empirical accuracy measured against the outputs of the backbone model, which we regard as an upper bound on achievable performance given the model’s capacity.
To demonstrate latency reduction, we \emph{normalize} the achieved latency against the original latency.

\noindent \textbf{Evaluations}. 
In Figure \ref{fig:vision_exp}, we evaluate our dynamic index strategy over vision classification tasks, using the video streaming dataset collected from~\citep{boggart,focus}, with VGG-\{11, 13\} models~\citep{vgg} as the backbone models for EE. 

\begin{figure}[H]   
    \centering
    \subfloat[VGG-13, Auburn]{%
        \includegraphics[width=0.24\linewidth]{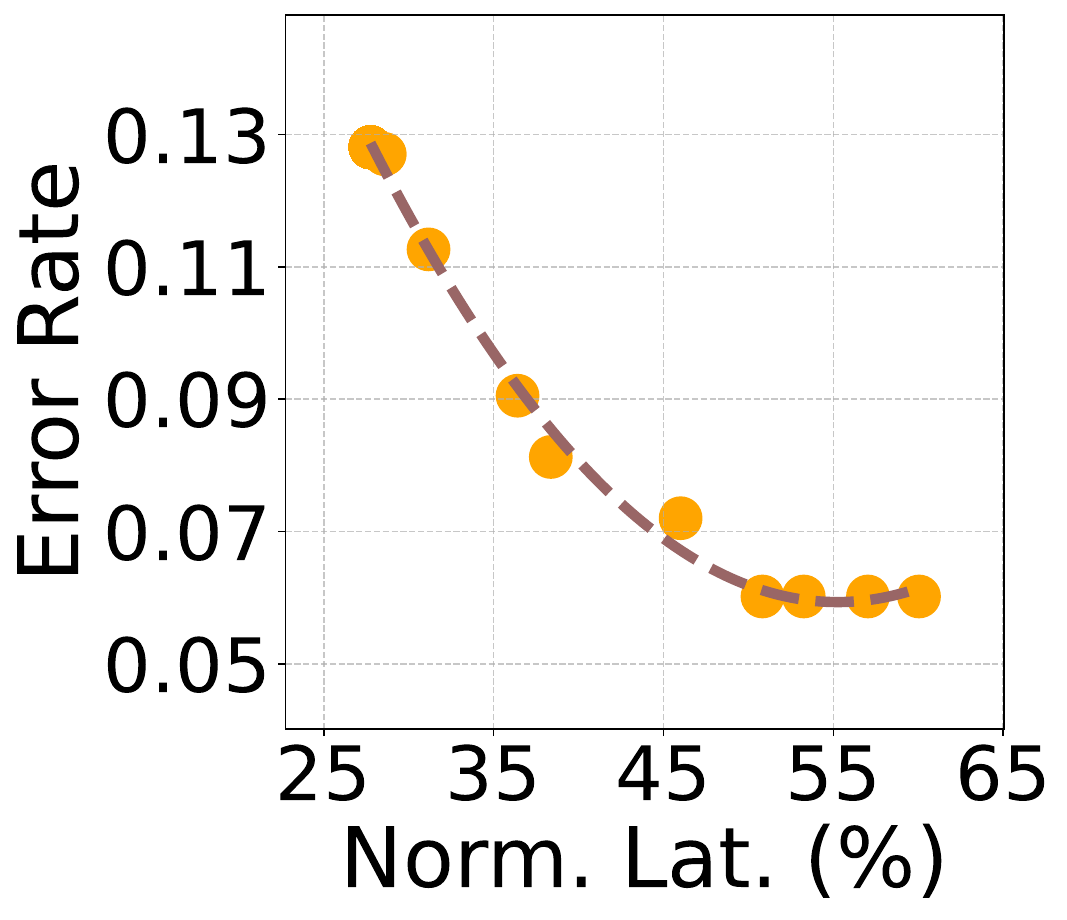}%
        \label{fig:vgg13_if_stop_auburn}
    }
    \hfill
    \subfloat[VGG-13, Oxford]{%
        \includegraphics[width=0.23\linewidth]{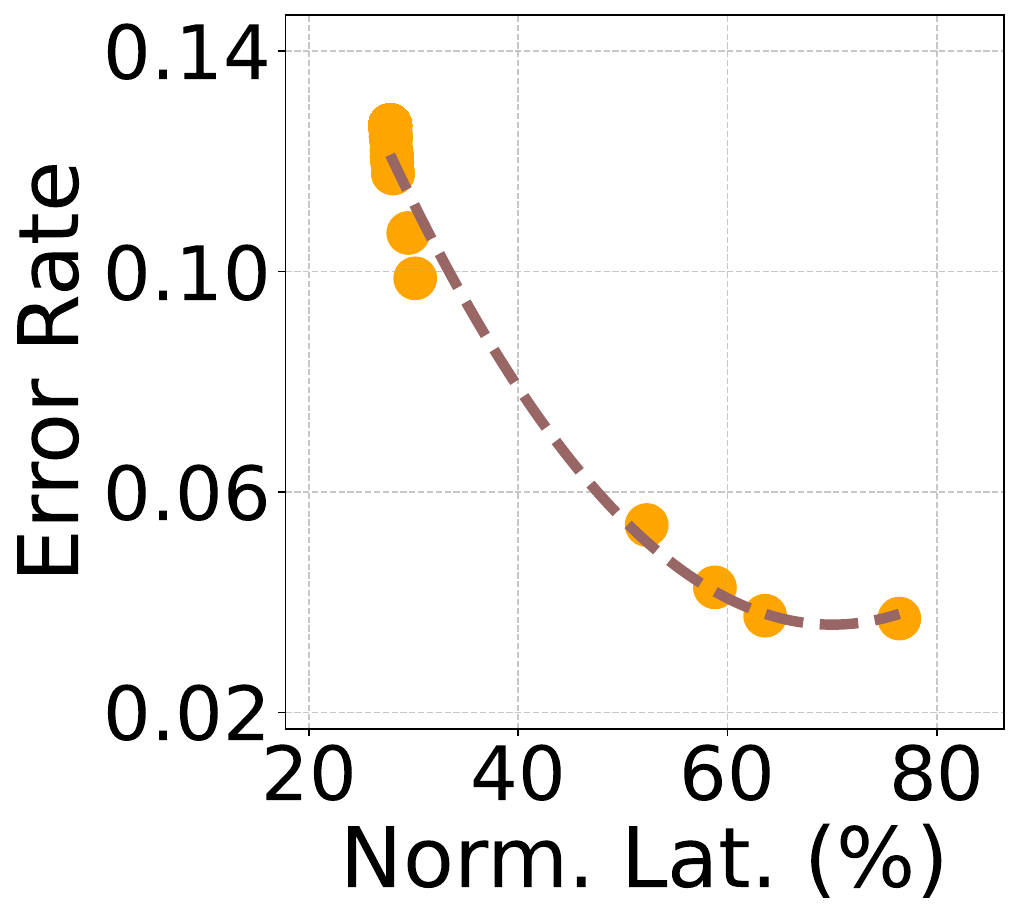}%
        \label{fig:vgg16_if_stop}
    }
    \hfill
    \subfloat[VGG-11, Auburn]{%
        \includegraphics[width=0.23\linewidth]{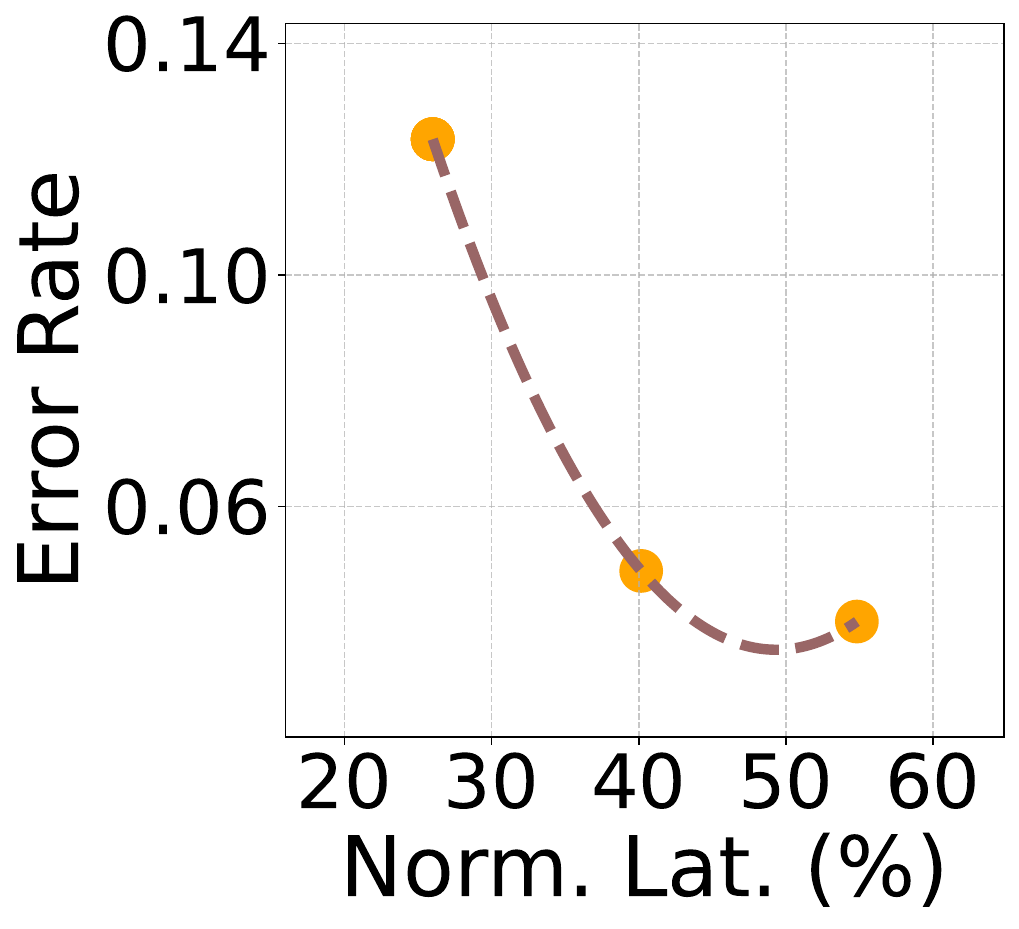}%
        \label{fig:vgg11_if_stop}
    }
    \hfill
    \subfloat[VGG-11, Oxford]{%
        \includegraphics[width=0.245\linewidth]{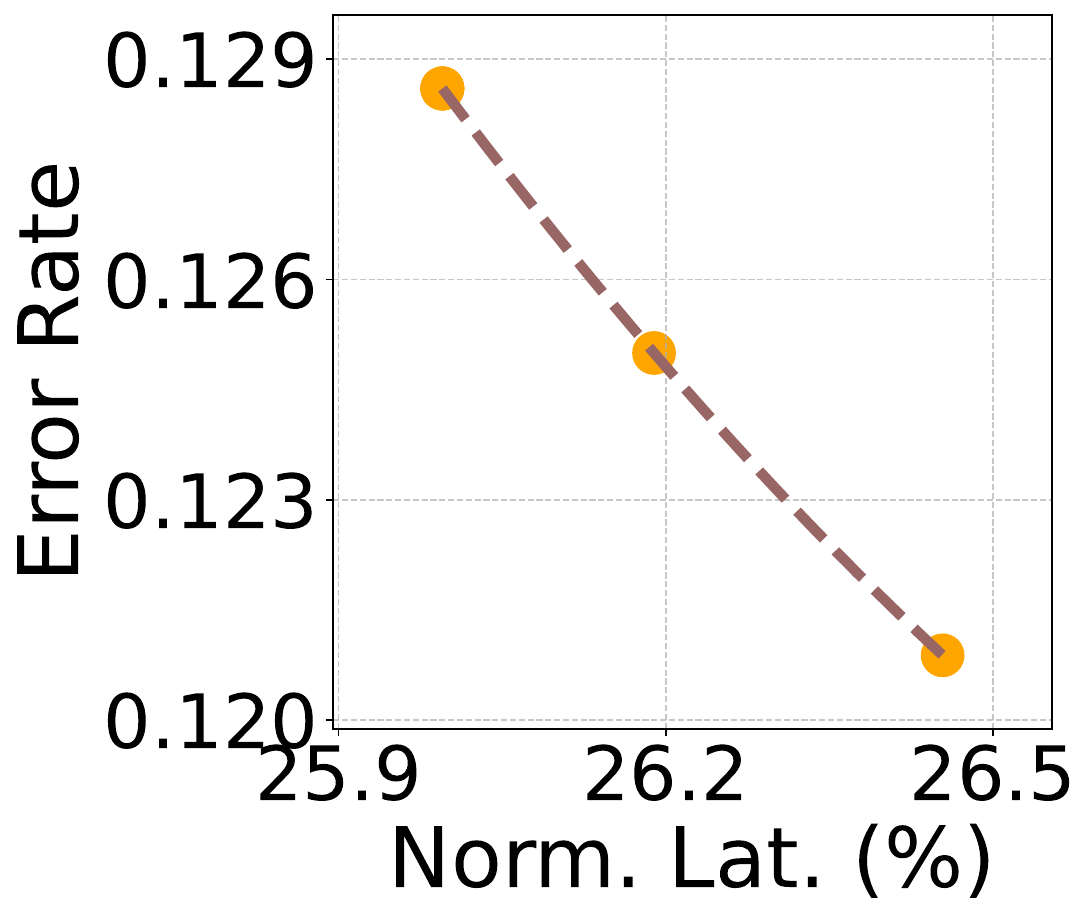}%
        \label{fig:vgg13_if_stop}
    }
    \caption{\textbf{ Pareto Frontiers across Vision Classification tasks.} The frontiers highlight regions where latency is significantly reduced with only limited accuracy degradation. For instance, Fig. \ref{fig:vgg13_if_stop_auburn} shows that latency is reduced to 45\% of the original, while sacrificing less than 7\% accuracy.
    }
    \label{fig:vision_exp}
\end{figure}

In Figure \ref{fig:NLP_exp}, we also evaluate the dynamic index strategy on NLP classification tasks using the IMDB~\citep{imdb} and Amazon Review~\citep{amazonreview} datasets, with BERT-base~\citep{bert} and GPT-2~\citep{gpt2} as backbone models for EE.

\begin{figure}[H]   
    \centering
    \subfloat[BertBase, IMDB]{%
        \includegraphics[width=0.24\linewidth]{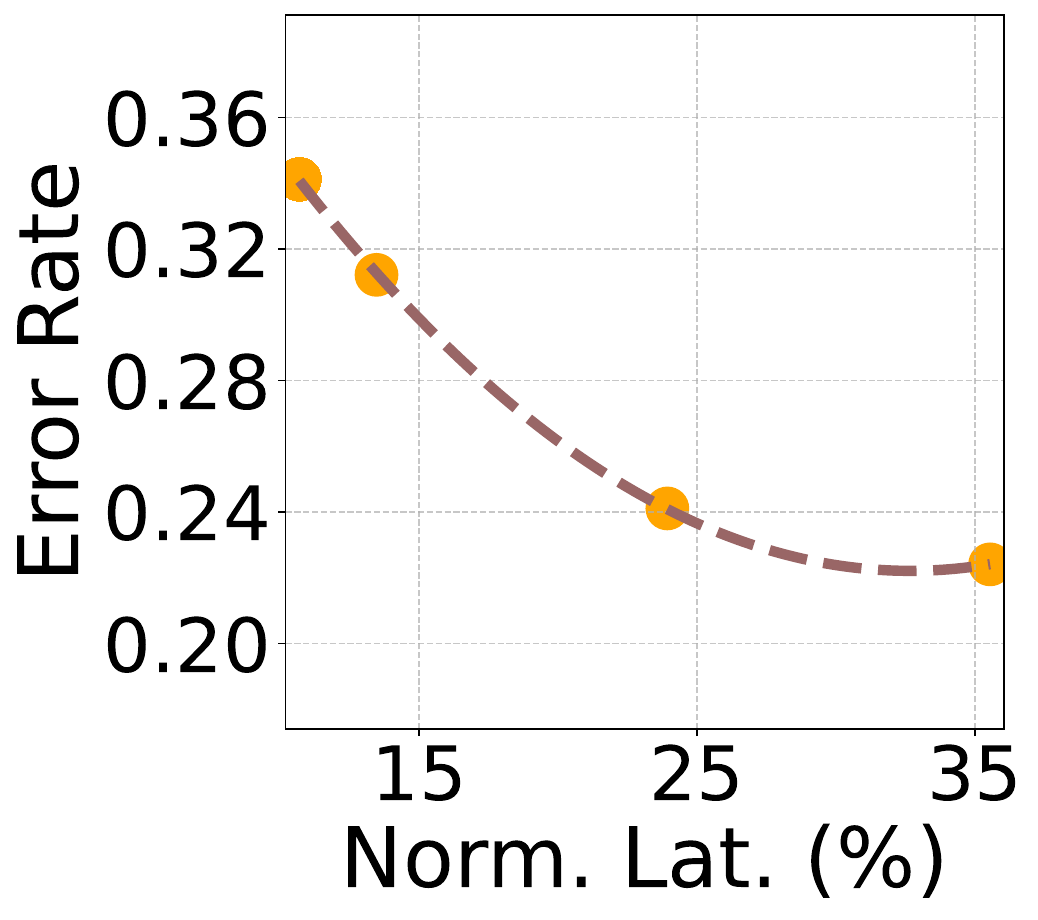}%
    }
    \hfill
    \subfloat[BertBase, Amazon]{%
        \includegraphics[width=0.24\linewidth]{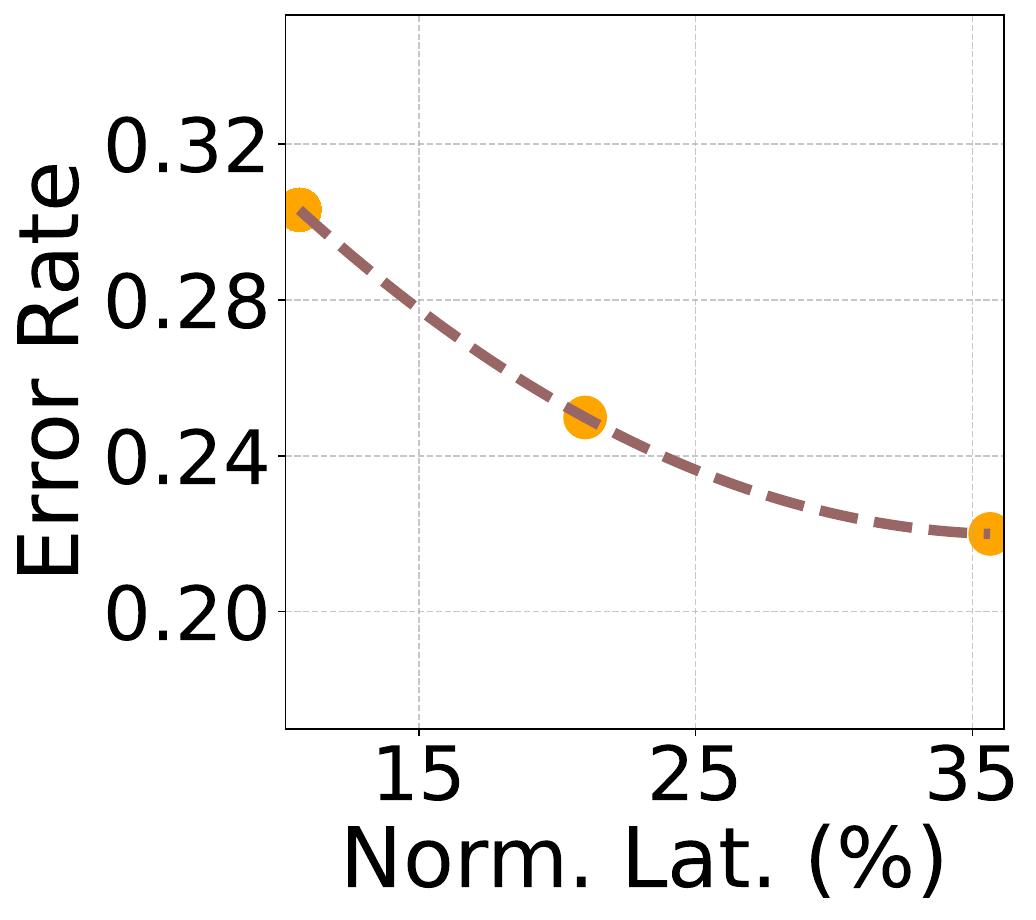}%
    }
    \hfill
    \subfloat[GPT-2, IMDB]{%
        \includegraphics[width=0.235\linewidth]{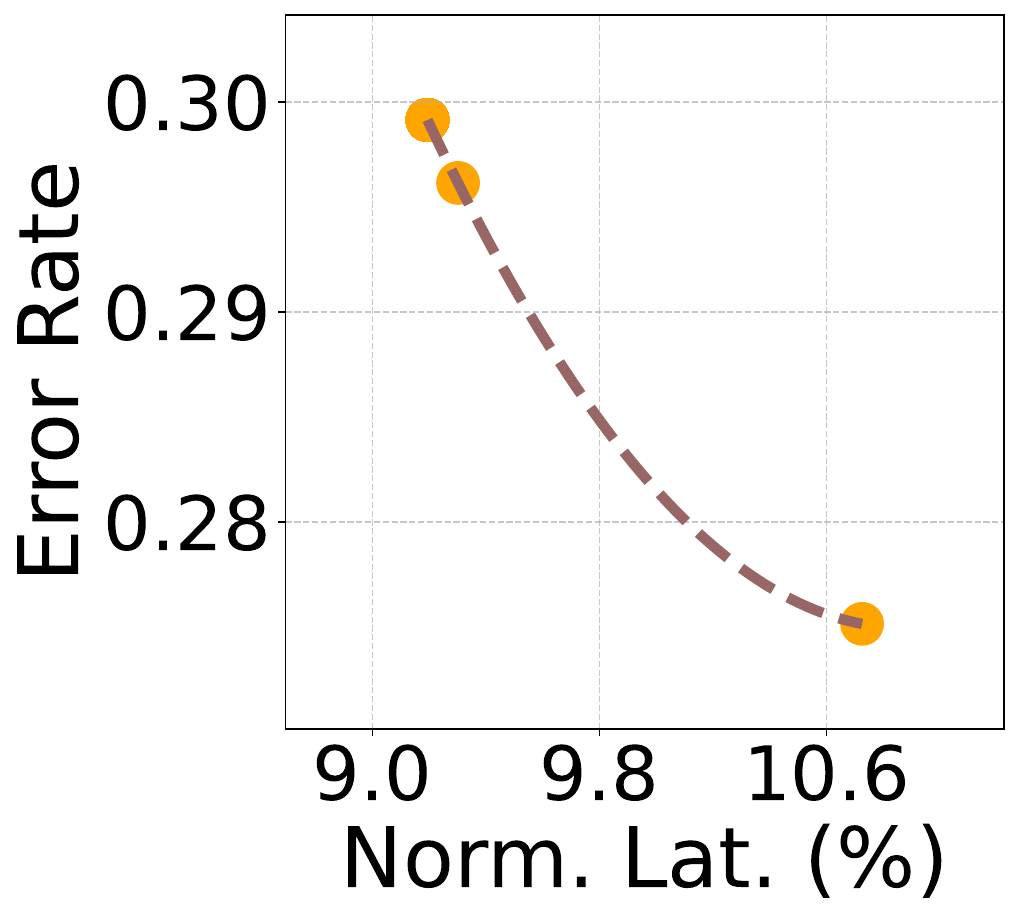}%
    }
    \hfill
    \subfloat[GPT-2, Amazon]{%
        \includegraphics[width=0.24\linewidth]{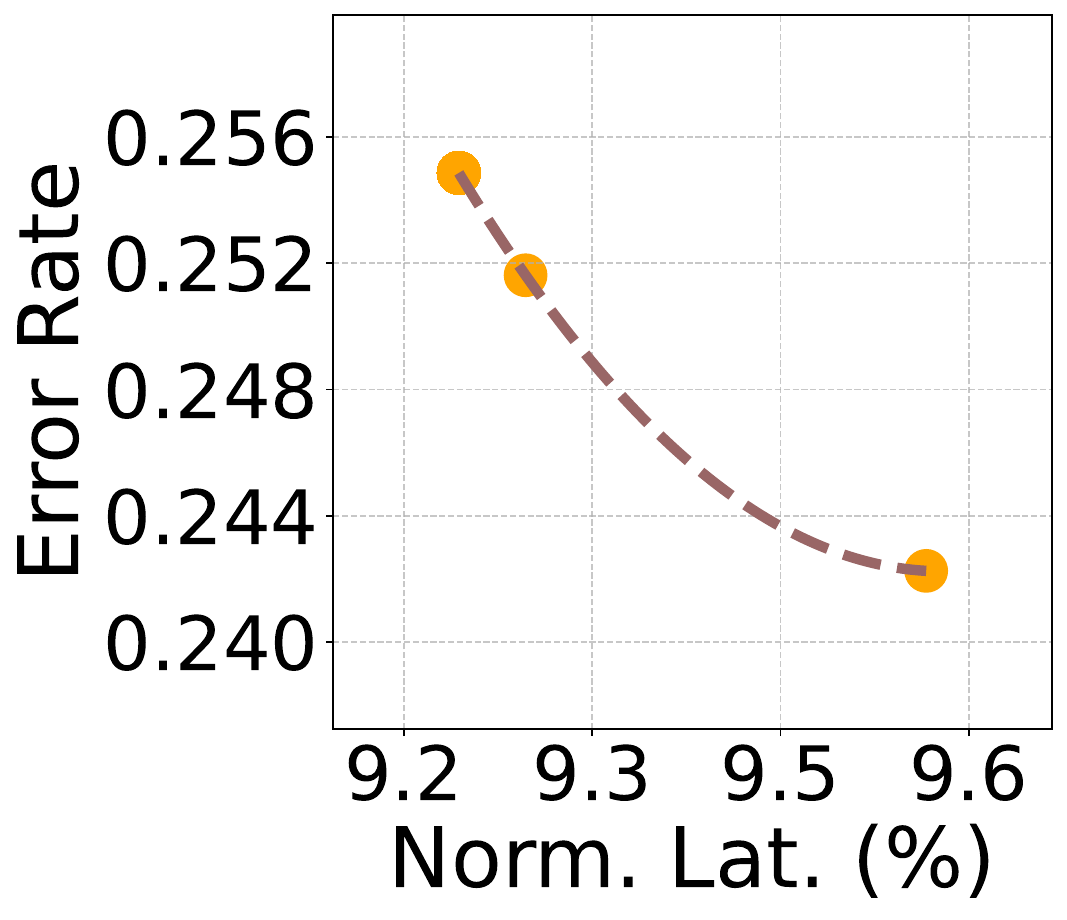}%
    }
    \caption{\textbf{ Pareto Frontier for Language Models.} The frontier shows a sharp accuracy–latency trade-off region, where latency is reduced by up to 90\%.}
    \label{fig:NLP_exp}
\end{figure}

\section{Conclusion}\label{sec:conclusion}



We introduced T-Tamer, a principled framework for taming bi-objective trade-offs in cascaded inference. By formulating routing and stopping as a costly exploration problem over DAGs, we developed a dynamic indexing strategy that achieves provable optimality with polynomial-time complexity. Experiments on synthetic data and real-world CV/NLP benchmarks confirm that T-Tamer consistently delivers favorable accuracy–latency trade-offs, providing a general-purpose and efficient foundation for modern inference platforms.  





\newpage
\bibliography{ref}

@article{w79,
  title={Optimal search for the best alternative},
  author={Weitzman, Martin L},
  journal={Econometrica: Journal of the Econometric Society},
  pages={641--654},
  year={1979},
  publisher={JSTOR}
}

@inproceedings{cgt+20,
  title={Pandora's box with correlations: Learning and approximation},
  author={Chawla, Shuchi and Gergatsouli, Evangelia and Teng, Yifeng and Tzamos, Christos and Zhang, Ruimin},
  booktitle={2020 IEEE 61st Annual Symposium on Foundations of Computer Science (FOCS)},
  pages={1214--1225},
  year={2020},
  organization={IEEE}
}

@inproceedings{gt22,
  title={Online learning for min sum set cover and pandora’s box},
  author={Gergatsouli, Evangelia and Tzamos, Christos},
  booktitle={International Conference on Machine Learning},
  pages={7382--7403},
  year={2022},
  organization={PMLR}
}

@article{afrt24,
  title={Pandora's Box Problem Over Time},
  author={Amanatidis, Georgios and Fusco, Federico and Reiffenh{\"a}user, Rebecca and Tsikiridis, Artem},
  journal={arXiv preprint arXiv:2407.15261},
  year={2024}
}

@inproceedings{beff24,
  title={Pandora’s Problem with Deadlines},
  author={Berger, Ben and Ezra, Tomer and Feldman, Michal and Fusco, Federico},
  booktitle={Proceedings of the AAAI Conference on Artificial Intelligence},
  volume={38},
  number={18},
  pages={20337--20343},
  year={2024}
}

@article{bdd+19,
  title={How much data is sufficient to learn high-performing algorithms?},
  author={Balcan, Maria-Florina and DeBlasio, Dan and Dick, Travis and Kingsford, Carl and Sandholm, Tuomas and Vitercik, Ellen},
  journal={Proceedings of the 53rd Annual ACM SIGACT Symposium on Theory of Computing (STOC) 2021 },
  year={2019}
}

@inproceedings{akl+19,
  title={Learning to prune: Speeding up repeated computations},
  author={Alabi, Daniel and Kalai, Adam Tauman and Liggett, Katrina and Musco, Cameron and Tzamos, Christos and Vitercik, Ellen},
  booktitle={Conference on Learning Theory},
  pages={30--33},
  year={2019},
  organization={PMLR}
}

@article{ldrt17,
  title={Hyperband: A novel bandit-based approach to hyperparameter optimization},
  author={Li, Lisha and Jamieson, Kevin and DeSalvo, Giulia and Rostamizadeh, Afshin and Talwalkar, Ameet},
  journal={The Journal of Machine Learning Research},
  volume={18},
  number={1},
  pages={6765--6816},
  year={2017},
  publisher={JMLR. org}
}

@inproceedings{jt16,
  title={Non-stochastic best arm identification and hyperparameter optimization},
  author={Jamieson, Kevin and Talwalkar, Ameet},
  booktitle={Artificial intelligence and statistics},
  pages={240--248},
  year={2016},
  organization={PMLR}
}

@inproceedings{gr16,
  title={A PAC approach to application-specific algorithm selection},
  author={Gupta, Rishi and Roughgarden, Tim},
  booktitle={Proceedings of the 2016 ACM Conference on Innovations in Theoretical Computer Science},
  pages={123--134},
  year={2016}
}

@article{acc+11,
  title={Self-improving algorithms},
  author={Ailon, Nir and Chazelle, Bernard and Clarkson, Kenneth L and Liu, Ding and Mulzer, Wolfgang and Seshadhri, C},
  journal={SIAM Journal on Computing},
  volume={40},
  number={2},
  pages={350--375},
  year={2011},
  publisher={SIAM}
}

@inproceedings{cms10,
  title={Self-improving algorithms for convex hulls},
  author={Clarkson, Kenneth L and Mulzer, Wolfgang and Seshadhri, C},
  booktitle={Proceedings of the Twenty-First Annual ACM-SIAM Symposium on Discrete Algorithms},
  pages={1546--1565},
  year={2010},
  organization={SIAM}
}

@inproceedings{bnvw17,
  title={Learning-theoretic foundations of algorithm configuration for combinatorial partitioning problems},
  author={Balcan, Maria-Florina and Nagarajan, Vaishnavh and Vitercik, Ellen and White, Colin},
  booktitle={Conference on Learning Theory},
  pages={213--274},
  year={2017},
  organization={PMLR}
}

@inproceedings{bdsv18,
  title={Learning to branch},
  author={Balcan, Maria-Florina and Dick, Travis and Sandholm, Tuomas and Vitercik, Ellen},
  booktitle={International conference on machine learning},
  pages={344--353},
  year={2018},
  organization={PMLR}
}

@inproceedings{kll17,
  title={Efficiency Through Procrastination: Approximately Optimal Algorithm Configuration with Runtime Guarantees.},
  author={Kleinberg, Robert and Leyton-Brown, Kevin and Lucier, Brendan},
  booktitle={IJCAI},
  volume={3},
  pages={1},
  year={2017}
}

@inproceedings{wgs18,
  title={LeapsAndBounds: A method for approximately optimal algorithm configuration},
  author={Weisz, Gell{\'e}rt and Gyorgy, Andras and Szepesv{\'a}ri, Csaba},
  booktitle={International Conference on Machine Learning},
  pages={5257--5265},
  year={2018},
  organization={PMLR}
}

@inproceedings{bk19,
  title={Pandora's problem with nonobligatory inspection},
  author={Beyhaghi, Hedyeh and Kleinberg, Robert},
  booktitle={Proceedings of the 2019 ACM Conference on Economics and Computation},
  pages={131--132},
  year={2019}
}

@inproceedings{bc23,
  title={Pandora’s problem with nonobligatory inspection: Optimal structure and a PTAS},
  author={Beyhaghi, Hedyeh and Cai, Linda},
  booktitle={Proceedings of the 55th Annual ACM Symposium on Theory of Computing},
  pages={803--816},
  year={2023}
}

@inproceedings{fll23,
  title={Pandora box problem with nonobligatory inspection: Hardness and approximation scheme},
  author={Fu, Hu and Li, Jiawei and Liu, Daogao},
  booktitle={Proceedings of the 55th Annual ACM Symposium on Theory of Computing},
  pages={789--802},
  year={2023}
}

@article{bbs18,
  title={The design and price of information},
  author={Bergemann, Dirk and Bonatti, Alessandro and Smolin, Alex},
  journal={American economic review},
  volume={108},
  number={1},
  pages={1--48},
  year={2018},
  publisher={American Economic Association 2014 Broadway, Suite 305, Nashville, TN 37203}
}

@article{ls17,
  title={Discriminatory information disclosure},
  author={Li, Hao and Shi, Xianwen},
  journal={American Economic Review},
  volume={107},
  number={11},
  pages={3363--3385},
  year={2017},
  publisher={American Economic Association 2014 Broadway, Suite 305, Nashville, TN 37203}
}

@article{o15,
  title={A more general Pandora rule?},
  author={Olszewski, Wojciech and Weber, Richard},
  journal={Journal of Economic Theory},
  volume={160},
  pages={429--437},
  year={2015},
  publisher={Elsevier}
}

@article{d18,
  title={Whether or not to open Pandora's box},
  author={Doval, Laura},
  journal={Journal of Economic Theory},
  volume={175},
  pages={127--158},
  year={2018},
  publisher={Elsevier}
}

@article{ajs20,
  title={The Pandora's Box Problem with Sequential Inspections},
  author={Aouad, Ali and Ji, Jingwei and Shaposhnik, Yaron},
  journal={Available at SSRN 3726167},
  year={2020}
}

@inproceedings{gksw24,
  title={Bandit Algorithms for Prophet Inequality and Pandora's Box},
  author={Gatmiry, Khashayar and Kesselheim, Thomas and Singla, Sahil and Wang, Yifan},
  booktitle={Proceedings of the 2024 Annual ACM-SIAM Symposium on Discrete Algorithms (SODA)},
  pages={462--500},
  year={2024},
  organization={SIAM}
}

@inproceedings{kk18,
  title={Delegated search approximates efficient search},
  author={Kleinberg, Jon and Kleinberg, Robert},
  booktitle={Proceedings of the 2018 ACM Conference on Economics and Computation},
  pages={287--302},
  year={2018}
}

@article{dgmm22,
  title={Product ranking on online platforms},
  author={Derakhshan, Mahsa and Golrezaei, Negin and Manshadi, Vahideh and Mirrokni, Vahab},
  journal={Management Science},
  volume={68},
  number={6},
  pages={4024--4041},
  year={2022},
  publisher={INFORMS}
}

@article{a17,
  title={Ordered consumer search},
  author={Armstrong, Mark},
  journal={Journal of the European Economic Association},
  volume={15},
  number={5},
  pages={989--1024},
  year={2017},
  publisher={Oxford University Press}
}

@inproceedings{dfh+23,
  title={Competitive information design for pandora's box},
  author={Ding, Bolin and Feng, Yiding and Ho, Chien-Ju and Tang, Wei and Xu, Haifeng},
  booktitle={Proceedings of the 2023 Annual ACM-SIAM Symposium on Discrete Algorithms (SODA)},
  pages={353--381},
  year={2023},
  organization={SIAM}
}

@article{cchs24,
  title={Combinatorial Selection with Costly Information},
  author={Chawla, Shuchi and Christou, Dimitris and Harlev, Amit and Scully, Ziv},
  journal={arXiv preprint arXiv:2412.03860},
  year={2024}
}

@inproceedings{ll22,
  title={Multi-token Markov Game with Switching Costs},
  author={Li, Jian and Liu, Daogao},
  booktitle={Proceedings of the 2022 Annual ACM-SIAM Symposium on Discrete Algorithms (SODA)},
  pages={1780--1807},
  year={2022},
  organization={SIAM}
}

@inproceedings{cfg+00,
  title={Query strategies for priced information},
  author={Charikar, Moses and Fagin, Ronald and Guruswami, Venkatesan and Kleinberg, Jon and Raghavan, Prabhakar and Sahai, Amit},
  booktitle={Proceedings of the thirty-second annual ACM symposium on Theory of computing},
  pages={582--591},
  year={2000}
}

@inproceedings{gk01,
  title={Sorting and selection with structured costs},
  author={Gupta, Anupam and Kumar, Amit},
  booktitle={Proceedings 42nd IEEE Symposium on Foundations of Computer Science},
  pages={416--425},
  year={2001},
  organization={IEEE}
}

@inproceedings{cjk+15,
  title={Submodular surrogates for value of information},
  author={Chen, Yuxin and Javdani, Shervin and Karbasi, Amin and Bagnell, J and Srinivasa, Siddhartha and Krause, Andreas},
  booktitle={Proceedings of the AAAI Conference on Artificial Intelligence},
  volume={29},
  number={1},
  year={2015}
}

@inproceedings{chkk15,
  title={Sequential information maximization: When is greedy near-optimal?},
  author={Chen, Yuxin and Hassani, S Hamed and Karbasi, Amin and Krause, Andreas},
  booktitle={Conference on Learning Theory},
  pages={338--363},
  year={2015},
  organization={PMLR}
}

@inproceedings{s18,
  title={The price of information in combinatorial optimization},
  author={Singla, Sahil},
  booktitle={Proceedings of the twenty-ninth annual ACM-SIAM symposium on discrete algorithms},
  pages={2523--2532},
  year={2018},
  organization={SIAM}
}

@inproceedings{ggm06,
  title={Asking the right questions: Model-driven optimization using probes},
  author={Goel, Ashish and Guha, Sudipto and Munagala, Kamesh},
  booktitle={Proceedings of the twenty-fifth ACM SIGMOD-SIGACT-SIGART symposium on Principles of database systems},
  pages={203--212},
  year={2006}
}

@inproceedings{gn13,
  title={A stochastic probing problem with applications},
  author={Gupta, Anupam and Nagarajan, Viswanath},
  booktitle={Integer Programming and Combinatorial Optimization: 16th International Conference, IPCO 2013, Valparaiso, Chile, March 18-20, 2013. Proceedings 16},
  pages={205--216},
  year={2013},
  organization={Springer}
}

@article{asw16,
  title={Submodular stochastic probing on matroids},
  author={Adamczyk, Marek and Sviridenko, Maxim and Ward, Justin},
  journal={Mathematics of Operations Research},
  volume={41},
  number={3},
  pages={1022--1038},
  year={2016},
  publisher={Informs}
}

@inproceedings{gns16,
  title={Algorithms and adaptivity gaps for stochastic probing},
  author={Gupta, Anupam and Nagarajan, Viswanath and Singla, Sahil},
  booktitle={Proceedings of the twenty-seventh annual ACM-SIAM symposium on Discrete algorithms},
  pages={1731--1747},
  year={2016},
  organization={SIAM}
}

@inproceedings{gns17,
  title={Adaptivity gaps for stochastic probing: Submodular and XOS functions},
  author={Gupta, Anupam and Nagarajan, Viswanath and Singla, Sahil},
  booktitle={Proceedings of the Twenty-Eighth Annual ACM-SIAM Symposium on Discrete Algorithms},
  pages={1688--1702},
  year={2017},
  organization={SIAM}
}

@article{cdkt19,
  title={Revenue maximization for query pricing},
  author={Chawla, Shuchi and Deep, Shaleen and Koutrisw, Paraschos and Teng, Yifeng},
  journal={Proceedings of the VLDB Endowment},
  volume={13},
  number={1},
  pages={1--14},
  year={2019},
  publisher={VLDB Endowment}
}

@inproceedings{hky18,
  title={Hyperparameter optimization: a spectral approach},
  author={Hazan, Elad and Klivans, Adam and Yuan, Yang},
  booktitle={International Conference on Learning Representations},
  year={2018}
}

@article{bb12,
  title={Random search for hyper-parameter optimization.},
  author={Bergstra, James and Bengio, Yoshua},
  journal={Journal of machine learning research},
  volume={13},
  number={2},
  year={2012}
}

@article{sla12,
  title={Practical bayesian optimization of machine learning algorithms},
  author={Snoek, Jasper and Larochelle, Hugo and Adams, Ryan P},
  journal={Advances in neural information processing systems},
  volume={25},
  year={2012}
}

@article{kthh+19,
  title={Auto-WEKA: Automatic model selection and hyperparameter optimization in WEKA},
  author={Kotthoff, Lars and Thornton, Chris and Hoos, Holger H and Hutter, Frank and Leyton-Brown, Kevin},
  journal={Automated machine learning: methods, systems, challenges},
  pages={81--95},
  year={2019},
  publisher={Springer International Publishing}
}

@inproceedings{bfll20,
  title={Pandora's box problem with order constraints},
  author={Boodaghians, Shant and Fusco, Federico and Lazos, Philip and Leonardi, Stefano},
  booktitle={Proceedings of the 21st ACM Conference on Economics and Computation},
  pages={439--458},
  year={2020}
}

@article{gt23,
  title={Weitzman's Rule for Pandora's Box with Correlations},
  author={Gergatsouli, Evangelia and Tzamos, Christos},
  journal={arXiv preprint arXiv:2301.13534},
  year={2023}
}

@inproceedings{kww16,
  title={Descending Price Optimally Coordinates Search},
  author={Kleinberg, Robert and Waggoner, Bo and Weyl, E Glen},
  booktitle={Proceedings of the 2016 ACM Conference on Economics and Computation},
  pages={23--24},
  year={2016}
}

@article{cgmt21,
  title={Approximating pandora's box with correlations},
  author={Chawla, Shuchi and Gergatsouli, Evangelia and McMahan, Jeremy and Tzamos, Christos},
  journal={arXiv preprint arXiv:2108.12976},
  year={2021}
}

@inproceedings{smv+20,
  title={Optimizer benchmarking needs to account for hyperparameter tuning},
  author={Sivaprasad, Prabhu Teja and Mai, Florian and Vogels, Thijs and Jaggi, Martin and Fleuret, Fran{\c{c}}ois},
  booktitle={International conference on machine learning},
  pages={9036--9045},
  year={2020},
  organization={PMLR}
}

@inproceedings{wyd+18,
  title={Skipnet: Learning dynamic routing in convolutional networks},
  author={Wang, Xin and Yu, Fisher and Dou, Zi-Yi and Darrell, Trevor and Gonzalez, Joseph E},
  booktitle={Proceedings of the European conference on computer vision (ECCV)},
  pages={409--424},
  year={2018}
}

@inproceedings{gjss19,
  title={The markovian price of information},
  author={Gupta, Anupam and Jiang, Haotian and Scully, Ziv and Singla, Sahil},
  booktitle={Integer Programming and Combinatorial Optimization: 20th International Conference, IPCO 2019, Ann Arbor, MI, USA, May 22-24, 2019, Proceedings 20},
  pages={233--246},
  year={2019},
  organization={Springer}
}

@inproceedings{branchynet16,
  title={Branchynet: Fast inference via early exiting from deep neural networks},
  author={Teerapittayanon, Surat and McDanel, Bradley and Kung, Hsiang-Tsung},
  booktitle={2016 23rd international conference on pattern recognition (ICPR)},
  pages={2464--2469},
  year={2016},
  organization={IEEE}
}

@article{deebert20,
  title={DeeBERT: Dynamic early exiting for accelerating BERT inference},
  author={Xin, Ji and Tang, Raphael and Lee, Jaejun and Yu, Yaoliang and Lin, Jimmy},
  journal={arXiv preprint arXiv:2004.12993},
  year={2020}
}

@article{rsc+24,
  title={Early-exit deep neural network-a comprehensive survey},
  author={Rahmath P, Haseena and Srivastava, Vishal and Chaurasia, Kuldeep and Pacheco, Roberto G and Couto, Rodrigo S},
  journal={ACM Computing Surveys},
  volume={57},
  number={3},
  pages={1--37},
  year={2024},
  publisher={ACM New York, NY}
}

@inproceedings{lm24,
  title={Minimization is harder in the prophet world},
  author={Livanos, Vasilis and Mehta, Ruta},
  booktitle={Proceedings of the 2024 Annual ACM-SIAM Symposium on Discrete Algorithms (SODA)},
  pages={424--461},
  year={2024},
  organization={SIAM}
}

@inproceedings{app24,
  title={Apparate: Rethinking early exits to tame latency-throughput tensions in ml serving},
  author={Dai, Yinwei and Pan, Rui and Iyer, Anand and Li, Kai and Netravali, Ravi},
  booktitle={Proceedings of the ACM SIGOPS 30th Symposium on Operating Systems Principles},
  pages={607--623},
  year={2024}
}

@book{gbcb16,
  title={Deep learning},
  author={Goodfellow, Ian and Bengio, Yoshua and Courville, Aaron and Bengio, Yoshua},
  volume={1},
  number={2},
  year={2016},
  publisher={MIT press Cambridge}
}

@book{ek09,
  title={Markov processes: characterization and convergence},
  author={Ethier, Stewart N and Kurtz, Thomas G},
  year={2009},
  publisher={John Wiley \& Sons}
}

@article{zxg+20,
  title={Bert loses patience: Fast and robust inference with early exit},
  author={Zhou, Wangchunshu and Xu, Canwen and Ge, Tao and McAuley, Julian and Xu, Ke and Wei, Furu},
  journal={Advances in Neural Information Processing Systems},
  volume={33},
  pages={18330--18341},
  year={2020}
}

@inproceedings{xtyl21,
  title={BERxiT: Early exiting for BERT with better fine-tuning and extension to regression},
  author={Xin, Ji and Tang, Raphael and Yu, Yaoliang and Lin, Jimmy},
  booktitle={Proceedings of the 16th conference of the European chapter of the association for computational linguistics: Main Volume},
  pages={91--104},
  year={2021}
}

@inproceedings{khd19,
  title={Shallow-deep networks: Understanding and mitigating network overthinking},
  author={Kaya, Yigitcan and Hong, Sanghyun and Dumitras, Tudor},
  booktitle={International conference on machine learning},
  pages={3301--3310},
  year={2019},
  organization={PMLR}
}

@article{ks77,
  title={Semiamarts and finite values},
  author={Krengel, Ulrich and Sucheston, Louis},
  year={1977}
}

@article{ks78,
  title={On semiamarts, amarts, and processes with finite value},
  author={Krengel, Ulrich and Sucheston, Louis},
  journal={Probability on Banach spaces},
  volume={4},
  number={197-266},
  pages={1--2},
  year={1978},
  publisher={Dekker New York}
}

@inproceedings{lkl21,
  title={Adaptive inference through early-exit networks: Design, challenges and directions},
  author={Laskaridis, Stefanos and Kouris, Alexandros and Lane, Nicholas D},
  booktitle={Proceedings of the 5th International Workshop on Embedded and Mobile Deep Learning},
  pages={1--6},
  year={2021}
}

@inproceedings{hzrs16,
  title={Deep residual learning for image recognition},
  author={He, Kaiming and Zhang, Xiangyu and Ren, Shaoqing and Sun, Jian},
  booktitle={Proceedings of the IEEE conference on computer vision and pattern recognition},
  pages={770--778},
  year={2016}
}

@inproceedings{sz15,
  title={Very deep convolutional networks for large-scale image recognition},
  author={Simonyan, K and Zisserman, A},
  booktitle={3rd International Conference on Learning Representations (ICLR 2015)},
  year={2015},
  organization={Computational and Biological Learning Society}
}

@inproceedings{dclt19,
  title={Bert: Pre-training of deep bidirectional transformers for language understanding},
  author={Devlin, Jacob and Chang, Ming-Wei and Lee, Kenton and Toutanova, Kristina},
  booktitle={Proceedings of the 2019 conference of the North American chapter of the association for computational linguistics: human language technologies, volume 1 (long and short papers)},
  pages={4171--4186},
  year={2019}
}

@article{vgg,
  title={Very deep convolutional networks for large-scale image recognition},
  author={Simonyan, Karen and Zisserman, Andrew},
  journal={arXiv preprint arXiv:1409.1556},
  year={2014}
}

@inproceedings{bert,
  title={Bert: Pre-training of deep bidirectional transformers for language understanding},
  author={Devlin, Jacob and Chang, Ming-Wei and Lee, Kenton and Toutanova, Kristina},
  booktitle={Proceedings of the 2019 conference of the North American chapter of the association for computational linguistics: human language technologies, volume 1 (long and short papers)},
  pages={4171--4186},
  year={2019}
}

@article{gpt2,
  title={Language models are unsupervised multitask learners},
  author={Radford, Alec and Wu, Jeffrey and Child, Rewon and Luan, David and Amodei, Dario and Sutskever, Ilya and others},
  journal={OpenAI blog},
  volume={1},
  number={8},
  pages={9},
  year={2019}
}

@inproceedings{boggart,
  title={Boggart: Towards $\{$General-Purpose$\}$ acceleration of retrospective video analytics},
  author={Agarwal, Neil and Netravali, Ravi},
  booktitle={20th USENIX Symposium on Networked Systems Design and Implementation (NSDI 23)},
  pages={933--951},
  year={2023}
}

@inproceedings{focus,
  title={Focus: Querying large video datasets with low latency and low cost},
  author={Hsieh, Kevin and Ananthanarayanan, Ganesh and Bodik, Peter and Venkataraman, Shivaram and Bahl, Paramvir and Philipose, Matthai and Gibbons, Phillip B and Mutlu, Onur},
  booktitle={13th USENIX Symposium on Operating Systems Design and Implementation (OSDI 18)},
  pages={269--286},
  year={2018}
}

@inproceedings{amazonreview,
  title={Hidden factors and hidden topics: understanding rating dimensions with review text},
  author={McAuley, Julian and Leskovec, Jure},
  booktitle={Proceedings of the 7th ACM conference on Recommender systems},
  pages={165--172},
  year={2013}
}

@data{imdb,
doi = {10.21227/zm1y-b270},
url = {https://dx.doi.org/10.21227/zm1y-b270},
author = {Aditya Pal and Abhilash Barigidad and Abhijit Mustafi},
publisher = {IEEE Dataport},
title = {IMDb Movie Reviews Dataset},
year = {2020} }

@InProceedings{NDH+24,
  title = 	 {Online Cascade Learning for Efficient Inference over Streams},
  author =       {Nie, Lunyiu and Ding, Zhimin and Hu, Erdong and Jermaine, Christopher and Chaudhuri, Swarat},
  booktitle = 	 {Proceedings of the 41st International Conference on Machine Learning},
  pages = 	 {38071--38090},
  year = 	 {2024},
  editor = 	 {Salakhutdinov, Ruslan and Kolter, Zico and Heller, Katherine and Weller, Adrian and Oliver, Nuria and Scarlett, Jonathan and Berkenkamp, Felix},
  volume = 	 {235},
  series = 	 {Proceedings of Machine Learning Research},
  month = 	 {21--27 Jul},
  publisher =    {PMLR},
  pdf = 	 {https://raw.githubusercontent.com/mlresearch/v235/main/assets/nie24a/nie24a.pdf},
}

@article{SCEEsurvey,
  title={Split computing and early exiting for deep learning applications: Survey and research challenges},
  author={Matsubara, Yoshitomo and Levorato, Marco and Restuccia, Francesco},
  journal={ACM Computing Surveys},
  volume={55},
  number={5},
  pages={1--30},
  year={2022},
  publisher={ACM New York, NY}
}

@article{hhs+21,
  title={Dynamic neural networks: A survey},
  author={Han, Yizeng and Huang, Gao and Song, Shiji and Yang, Le and Wang, Honghui and Wang, Yulin},
  journal={IEEE transactions on pattern analysis and machine intelligence},
  volume={44},
  number={11},
  pages={7436--7456},
  year={2021},
  publisher={IEEE}
}

@article{mgsn25,
  title={A Survey on Dynamic Neural Networks: from Computer Vision to Multi-modal Sensor Fusion},
  author={Montello, Fabio and G{\"u}ldenring, Ronja and Scardapane, Simone and Nalpantidis, Lazaros},
  journal={arXiv preprint arXiv:2501.07451},
  year={2025}
}

@inproceedings{gyw+19,
  title={Dynamic recursive neural network},
  author={Guo, Qiushan and Yu, Zhipeng and Wu, Yichao and Liang, Ding and Qin, Haoyu and Yan, Junjie},
  booktitle={Proceedings of the IEEE/CVF Conference on Computer Vision and Pattern Recognition},
  pages={5147--5156},
  year={2019}
}

@article{lcmm23,
  title={Efficient inference with model cascades},
  author={Lebovitz, Luzian and Cavigelli, Lukas and Magno, Michele and Muller, Lorenz K},
  journal={Transactions on Machine Learning Research},
  year={2023}
}

@article{vb22,
  title={Model cascading: Towards jointly improving efficiency and accuracy of nlp systems},
  author={Varshney, Neeraj and Baral, Chitta},
  journal={arXiv preprint arXiv:2210.05528},
  year={2022}
}

@article{dbv24,
  title={A unified approach to routing and cascading for llms},
  author={Dekoninck, Jasper and Baader, Maximilian and Vechev, Martin},
  journal={arXiv preprint arXiv:2410.10347},
  year={2024}
}

@article{scw+25,
  title={Stop overthinking: A survey on efficient reasoning for large language models},
  author={Sui, Yang and Chuang, Yu-Neng and Wang, Guanchu and Zhang, Jiamu and Zhang, Tianyi and Yuan, Jiayi and Liu, Hongyi and Wen, Andrew and Zhong, Shaochen and Zou, Na and others},
  journal={arXiv preprint arXiv:2503.16419},
  year={2025}
}

@article{xdy+23,
  title={Iterative preference learning from human feedback: Bridging theory and practice for rlhf under kl-constraint},
  author={Xiong, Wei and Dong, Hanze and Ye, Chenlu and Wang, Ziqi and Zhong, Han and Ji, Heng and Jiang, Nan and Zhang, Tong},
  journal={arXiv preprint arXiv:2312.11456},
  year={2023}
}

@inproceedings{ks05,
  author    = {Robert Kleinberg and Sergei M. Krakovski},
  title     = {Multiple-Choice Prophet Inequalities},
  booktitle = {Proceedings of the 16th Annual ACM-SIAM Symposium on Discrete Algorithms (SODA)},
  year      = {2005}
}

@inproceedings{dh12,
  author    = {Sayan Bhattacharya and Sanjeev Khanna},
  title     = {Prophet Inequalities with Limited Information},
  booktitle = {Proceedings of the 23rd Annual ACM-SIAM Symposium on Discrete Algorithms (SODA)},
  year      = {2012}
}

@inproceedings{jprt19,
  author    = {José Correa and Pablo D. A. Foncea and Ruben Hoeksma and Tim Roughgarden},
  title     = {Prophet Inequalities for Independent Random Variables from an Unknown Distribution},
  booktitle = {Proceedings of the 30th Annual ACM-SIAM Symposium on Discrete Algorithms (SODA)},
  year      = {2019}
}

@article{y11,
  author    = {Qiqi Yan},
  title     = {Mechanism Design via Correlation Gap},
  journal   = {ACM Transactions on Economics and Computation},
  volume    = {1},
  number    = {1},
  pages     = {1--37},
  year      = {2011}
}

@inproceedings{ehbm17,
  author    = {Hossein Esfandiari and Mohammad Taghi Hajiaghayi and Brendan Lucier and Morteza Zadimoghaddam},
  title     = {Prophet Inequalities with Unknown Distributions},
  booktitle = {Proceedings of the 49th Annual ACM SIGACT Symposium on Theory of Computing (STOC)},
  year      = {2017}
}

@inproceedings{ftb16,
  author    = {Michal Feldman and Thomas Kesselheim and Brendan Lucier},
  title     = {Online Stochastic Matching: Beating 1 - 1/e},
  booktitle = {Proceedings of the 48th Annual ACM Symposium on Theory of Computing (STOC)},
  year      = {2016}
}

@inproceedings{sjdb10,
  author    = {Shuchi Chawla and Jason D. Hartline and David L. Malec and Balasubramanian Sivan},
  title     = {Multi-parameter Mechanism Design and Sequential Posted Pricing},
  booktitle = {Proceedings of the 42nd ACM Symposium on Theory of Computing (STOC)},
  year      = {2010}
}

@article{ckh22,
  author    = {José Correa and Anja Korol and Ruben Hoeksma},
  title     = {Prophet Inequalities and Posted Pricing Mechanisms},
  journal   = {Mathematics of Operations Research},
  volume    = {47},
  number    = {4},
  pages     = {2580--2600},
  year      = {2022}
}

@article{gittins79,
  author       = {Gittins, John C.},
  title        = {Bandit processes and dynamic allocation indices},
  journal      = {Journal of the Royal Statistical Society: Series B (Methodological)},
  volume       = {41},
  number       = {2},
  pages        = {148--177},
  year         = {1979},
  publisher    = {Wiley}
}

@book{gittins89,
  author       = {Gittins, John C.},
  title        = {Multi-armed Bandit Allocation Indices},
  year         = {1989},
  publisher    = {John Wiley \& Sons},
  address      = {Chichester, UK}
}

@article{weber92,
  author       = {Weber, Richard R.},
  title        = {On the Gittins index for multiarmed bandits},
  journal      = {The Annals of Applied Probability},
  volume       = {2},
  number       = {4},
  pages        = {1024--1033},
  year         = {1992},
  publisher    = {Institute of Mathematical Statistics},
  doi          = {10.1214/aoap/1177005775}
}

@article{xct+25,
  title={Cost-aware Stopping for Bayesian Optimization},
  author={Xie, Qian and Cai, Linda and Terenin, Alexander and Frazier, Peter I and Scully, Ziv},
  journal={arXiv preprint arXiv:2507.12453},
  year={2025}
}

@article{xaf+24,
  title={Cost-aware Bayesian optimization via the Pandora's Box Gittins index},
  author={Xie, Qian and Astudillo, Raul and Frazier, Peter and Scully, Ziv and Terenin, Alexander},
  journal={Advances in Neural Information Processing Systems},
  volume={37},
  pages={115523--115562},
  year={2024}
}
\bibliographystyle{iclr2026_conference}





\newpage
\appendix
\section*{Appendix}
\section{More Details from Literature Review}\label{sec:app:literature_review}

\textbf{Early Exit Models}. Among intra-model cascaded inference methods, \emph{early-exit} (EE) architectures have been widely adopted to accelerate inference in both computer vision and natural language processing, including ResNet~\citep{hzrs16}, VGG~\citep{sz15}, and BERT-based~\citep{dclt19} models (see~\cite{rsc+24,lkl21} for a recent survey). The ramp architectures in these models are typically designed to align with the structural properties of the backbone. Existing exit strategies commonly rely on metrics such as label confidence~\citep{xtyl21}, prediction entropy~\citep{deebert20,branchynet16}, or more advanced mechanisms such as ramp-level counters~\citep{zxg+20}. However, these approaches do not allow for \emph{prediction with recall}, where the system may return to the output of a previously visited exit—a setting where empirical evidence suggests that earlier exits can occasionally outperform later ones~\citep{khd19}.


\noindent \textbf{Relation to Pandora's Box and Prophet Inequality}. The Pandora’s Box framework~\citep{w79,bfll20,cgt+20} and the Prophet Inequality framework~\citep{ks77,ks78,lm24} are two classical models for decision making under uncertainty. Specifically, our exit-with-recall model bears structural similarity to the Pandora’s Box problem, while the no-recall variant aligns closely with the Prophet Inequality setting—both situated within a cost minimization framework. However, our setting imposes two key constraints: (i) a precedence constraint on the order in which boxes (or exits) may be inspected, and (ii) a Markovian correlation that the underlying distributions conform to. These constraints render existing algorithms for the aforementioned problems inapplicable. 

Since Weitzman’s classic formulation~\citep{w79}, Pandora’s Box has been extended in numerous ways.  Variants with order constraints~\citep{bfll20} and correlated rewards~\citep{cdkt19,cgt+20,cgmt21,gt23} highlight the challenges of adaptivity, with the latter proving constant-factor approximation NP-hard in the fully adaptive case.  
Online variants connect to bandits and set cover, yielding competitive or regret-minimizing algorithms~\citep{gt22,gksw24}. Nonobligatory inspection~\citep{d18,bk19,bc23,fll23} further departs from Weitzman’s ranking-based solution, with NP-hardness results and PTAS guarantees.  
Other extensions address partial openings~\citep{ajs20}, generalized objectives~\citep{o15}, deadlines~\citep{beff24}, time-dependent costs~\citep{afrt24}, and strategic information revelation~\citep{dfh+23}. 

Prophet inequalities, first studied by~\cite{ks77,ks78}, compare the performance of an online stopping rule to a prophet with full foresight.  
Classical results (reward maximization) guarantee a $1/2$-approximation in general settings, with improvements under additional structure.  
Recent work has extended this line to matroids and combinatorial constraints~\citep{ks05,dh12}, correlated and non-i.i.d. distributions~\citep{jprt19,y11}, and online matching and allocation problems~\citep{ehbm17,ftb16}.  
Further advances include connections to posted pricing in mechanism design~\citep{sjdb10,ckh22} and to multi-armed bandits~\citep{gksw24}.

\noindent \textbf{Data-Driven Algorithm Design}. Our framework relates to data-driven algorithm design~\citep{gr16} with cost, where practitioners refine parameterized algorithms via training instances to maximize expected future performance. Data-driven algorithm design~\cite{gr16} includes greedy heuristic selection, self-improving algorithms~\citep{cms10,acc+11}, and parameter tuning in optimization and machine learning~\cite{ggm06, bb12,sla12,jt16,bnvw17,ldrt17, kll17, hky18, wgs18, bdsv18, akl+19, bdd+19,kthh+19,smv+20}.

 \noindent \textbf{Gittin's Index} In particular, in the single-line and multi-line cases our adaptive index reduces to the well-known non-\emph{discounted} Gittins index~\citep{gittins79,gittins89,weber92}. By contrast, for directed-tree and skip-graph structures, the indexing strategy developed here appears to be novel, as no analogous formulation has been established in prior work. Other recent studies have also leveraged Gittins indices to address efficiency challenges in machine learning, including applications such as hyperparameter tuning~\citep{xaf+24,xct+25}.

\noindent \textbf{Other Related Work}. Another relevant direction is the \emph{stochastic probing} problem, where one must decide both which elements to probe and when to probe them. \citet{gn13} introduced this setting under matroid and knapsack intersection constraints, giving the first polynomial-time $\Omega(1/k)$-approximate sequential posted price mechanism for $k$-matroid intersections. \citet{asw16} extended the model to monotone submodular objectives, achieving a $\tfrac{1-1/e}{k_{\text{in}}+k_{\text{out}}+1}$-approximation in general and a tighter $\tfrac{1}{k_{\text{in}}+k_{\text{out}}}$ bound for linear objectives. Subsequent work~\citep{gns16,gns17} investigated adaptivity gaps, quantifying the performance loss of non-adaptive strategies relative to optimal adaptive ones under prefix-closed constraints.  

Beyond stochastic probing, other problems share similar information structures or solution concepts, including search~\citep{a17,kk18}, ranking~\citep{dgmm22}, Markov games~\citep{ll22}, sorting and selection~\citep{gk01}, revenue maximization~\citep{kww16,cdkt19}, and costly information acquisition~\citep{cfg+00,cjk+15,chkk15,ls17,s18,bbs18,gjss19,cchs24}.
\newpage
\section{More Details from Single Line Costly Exploration}\label{sec:app:details_1_line}

\subsection{More Details for the Dynamic Index}

We introduce additional notation to facilitate the formal analysis of the dynamic index. Specifically, we use $\Phi$ and $\phi$ interchangeably to denote the equivalent loss.  

\begin{lemma}[Properties of $\Phi$ and $H_i$]\label{lem:app:phi_prop}
Given any state $(x, R_{i-1}, i)$, 

\squishlist
    \item $\Phi(\cdot, R_{i-1}, i )$ is 1-Lipschitz and monotone non-decreasing.
    \item Let $H_i(x, R_{i-1}) := \Phi (x, R_{i-1}, i) - x$, then $H_i(\cdot, R_{i-1})$ is nonnegative, 1-Lipschitz and monotone non-increasing.
    \item For $\sigma_i$ as the index (Def.~\ref{def:ada_threshold_1_line}) of the $i$-th node of the hypernode, then $\Phi(x,R_{i-1}, i ) = x$ for any $x\ge \sigma_i$.
\squishend
\end{lemma}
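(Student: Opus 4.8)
The plan is to establish all three items simultaneously by \emph{backward} induction on $i$, running from the terminal stage $i=n+1$ down to $i=1$ and reading off the properties of $\Phi(\cdot,R_{i-1},i)$ directly from Bellman's recursion
\[
\Phi(x,R_{i-1},i)=\min\Bigl\{\,x,\; c_i+\E_{R_i\mid R_{i-1}}\bigl[\Phi\bigl(\min\{x,R_i\},R_i,i{+}1\bigr)\bigr]\,\Bigr\},
\]
together with the terminal identity $\Phi(x,\cdot,n{+}1)=x$. The base case is immediate: the identity map is $1$-Lipschitz and non-decreasing, $H_{n+1}\equiv 0$ has all the stated properties, and item~3 is vacuous.

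For the inductive step of items~1 and~2, write $G_i(x,R_{i-1}):=c_i+\E_{R_i\mid R_{i-1}}[\Phi(\min\{x,R_i\},R_i,i{+}1)]$, so that $\Phi(x,R_{i-1},i)=\min\{x,\,G_i(x,R_{i-1})\}$. I would use three elementary facts: (a) for every fixed $v\ge 0$ the clipping map $x\mapsto\min\{x,v\}$ is non-decreasing and $1$-Lipschitz (it is a contraction); (b) pre-composing a non-decreasing $1$-Lipschitz function with (a), then integrating over the conditional law of $R_i$ given $R_{i-1}$, and finally adding the constant $c_i$, preserves both properties --- so, by the induction hypothesis applied to $\Phi(\cdot,\cdot,i{+}1)$, the function $G_i(\cdot,R_{i-1})$ is non-decreasing and $1$-Lipschitz; (c) the pointwise minimum of two non-decreasing $1$-Lipschitz functions is again non-decreasing and $1$-Lipschitz. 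Facts~(a)--(c) give item~1 at level $i$. Item~2 follows mechanically: for $x\le x'$ the sandwich $0\le\Phi(x',R_{i-1},i)-\Phi(x,R_{i-1},i)\le x'-x$ yields $-(x'-x)\le H_i(x',R_{i-1})-H_i(x,R_{i-1})\le 0$, so $H_i(\cdot,R_{i-1})$ is $1$-Lipschitz and monotone non-increasing; the sign of $H_i$ (and hence the one-sided comparison between $\Phi(x,R_{i-1},i)$ and $x$) comes from the Bellman recursion itself, since the ``stop immediately'' action is always available.

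For item~3 the key observation is that the gap $G_i(x,R_{i-1})-x$ is monotone non-increasing in $x$ (it is a $1$-Lipschitz non-decreasing function minus the identity), so the set of $x$ at which the ``stop'' branch of the recursion is the active one is a half-line, on which $\Phi(x,R_{i-1},i)=x$. It then remains to identify the endpoint of this half-line with the index $\sigma_i$ of Def.~\ref{def:ada_threshold_1_line}. Here I would rewrite the continuation value using $\min\{x,R_j\}=x-(x-R_j)_+$: the cost of continuing and then playing the optimal $\tau^*$ equals $x-\E\bigl[(x-\min_{j=i}^{\tau^*}R_j)_+-\sum_{j=i}^{\tau^*}c_j\bigr]$, so stopping is (weakly) preferred precisely when the bracketed expectation is nonpositive, and $\sigma_i$ is by definition the threshold at which that expectation changes sign (equation~\eqref{eq:grve}). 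Combining this with the monotonicity of the gap pins the half-line to $\{x\ge\sigma_i\}$ and finishes the proof; the independence of $\sigma_i$ from the running minimum $X$ also drops out of the monotonicity in item~1.

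I expect item~3 to be the main obstacle, specifically making the identification with $\sigma_i$ rigorous without circularity, since $\sigma_i$ is defined through the optimal stopping time $\tau^*$, whose optimality is part of the surrounding development. The cleanest route is to first define the threshold intrinsically as $\inf\{x:\text{stopping is optimal at }(x,R_{i-1},i)\}$ --- well-defined and independent of $X$ by item~1 --- prove items~1--3 with this threshold in hand, and only then verify that it solves~\eqref{eq:grve}, so that it agrees with $\sigma_i$. By contrast, items~1 and~2 are routine once the contraction property of the clipping map and the stability of Lipschitzness and monotonicity under conditional expectation and pointwise minimum are isolated.
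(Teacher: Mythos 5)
For items 1 and 2 (the Lipschitz and monotonicity claims) your route is genuinely different from the paper's and is sound: you run a backward induction on the Bellman recursion, using that clipping $x\mapsto\min\{x,v\}$, conditional expectation, adding a constant, and taking a pointwise minimum all preserve the class of non-decreasing $1$-Lipschitz functions. The paper instead argues directly from Definition~\ref{def:exp_loss} by exchanging optimal stopping rules between the two states: $\tau^*(a,R_{i-1},i)$ is feasible at $b$ (the cost terms cancel and the two min-terms differ by at most $b-a$), giving $1$-Lipschitzness, and $\tau^*(b,R_{i-1},i)$ is feasible at $a$, giving monotonicity; the $H_i$ statement is then read off by subtraction exactly as you do. Your version buys a more mechanical argument but presupposes the recursion; the paper's works at the level of stopping rules without it. Either is acceptable for these two items.

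The genuine gap is the sign of $H_i$ and item 3, and it is a direction flip that your write-up never resolves. Your justification for the sign (``stop immediately is always available'') yields $\Phi(x,R_{i-1},i)\le x$, i.e.\ $H_i\le 0$ --- the opposite of the stated nonnegativity. Likewise your own monotone-gap argument shows that the set where the stop branch is active is a \emph{lower} half-line: since $G_i(x,R_{i-1})-x$ is non-increasing, and stopping is weakly preferred exactly when $\E[(x-\min_{j}R_j)_+ - \sum_{j}c_j]\le 0$, you obtain $\Phi(x,R_{i-1},i)=x$ for all $x$ \emph{below} a threshold, yet you then assert the half-line is $\{x\ge\sigma_i\}$; nothing in your steps delivers that conclusion. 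The paper's proof of item 3 is a one-liner that uses precisely the asserted nonnegativity of $H_i$ together with monotone non-increase and the minimality of the zero $\sigma_i$ --- under that sign convention the claim for $x\ge\sigma_i$ does follow, but you cannot reach it from your premises, which point the other way (indeed, the stop-for-small-$X$ direction you derive is the one Algorithm~\ref{alg:recall_exploration} actually implements, continuing while $X>\sigma$). Your fallback plan --- define the threshold intrinsically as the infimum of $x$ at which stopping is optimal, prove the structural facts, and only then verify~\eqref{eq:grve} --- is reasonable and close in spirit to what the paper does in Lemma~\ref{lem:app:unique_fair_cap}, but as written your item 3 (and the sign claim in item 2) asserts the opposite of what your own argument establishes, so the proof of those parts is incomplete until the sign convention for $\Phi$ versus $x$ is settled and the half-line direction is reconciled with the definition of $\sigma_i$.
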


\begin{proof}
Given any $a < b$, 
\begin{align*}
    &\Phi (b, R_{i-1}, i) - \Phi (a, R_{i-1}, i)\\
	&\le \E \Big[\min \{b , \min _{j=i} ^{\tau^*(a, R_{i-1}, i)}R_j\} - \min \{a , \min _{j=i} ^{\tau^*(a, R_{i-1}, i)}R_j\}\Big]\\
	& \le b-a
\end{align*}
where in the first inequality, we used that $\tau^*(a, R_{i-1}, i)$ is a suboptimal strategy for $\Phi^\tau(b, R_{i-1}, i)$. Using the same reasoning, we have:
\begin{align*}
    \Phi (b, R_{i-1}, i) &= \E \Big[\min \{b , \min _{j=i} ^{\tau^*(b, R_{i-1}, i)}R_j\}\Big] + \sum_{j=i}^{\tau^*(b, R_{i-1}, i)} c_j \\
    & \geq \E \Big[\min \{a , \min _{j=i} ^{\tau^*(b, R_{i-1}, i)}R_j\}\Big] + \sum_{j=i}^{\tau^*(b, R_{i-1}, i)} c_j\\
    & \geq \E \Big[\min \{a , \min _{j=i} ^{\tau^*(a, R_{i-1}, i)}R_j\}+\sum_{j=i}^{\tau^*(a, R_{i-1}, i)} c_j \Big] = \Phi (a, R_{i-1}, i)
\end{align*}
Thus, $\Phi (\cdot, R_{i-1}, i)$ is monotone non-decreasing. Now consider $H_i$, we have 
\[
H_i(b, R_{i-1})- H_i(a, R_{i-1}) = \Phi (b, R_{i-1}, i) - \Phi (a, R_{i-1}, i) - (b-a) \le 0
\]
where we use that $\Phi (\cdot, R_{i-1}, i)$ is 1-Lipschitz. The above inequality implies that $H_i(x, R_{i-1})$ is 1-Lipschitz and monotone non-increasing. Lastly, $\Phi(x,R_{i-1}, i ) - x = 0$ for all $x\ge \sigma_i$ follows from the that fact that $\sigma_i$ is the smallest such that $H_i(\sigma_i, R_{i-1}) = 0$ and $H_i$ is non-negative and monotone non-increasing.

\end{proof}

\begin{lemma}[Properties of the Dynamic Index]\label{lem:app:properties_GRV}
 Given a costly exploration problem with line precedence graph $\mathcal{L}=\left[b_1, \ldots, b_n\right]$, the dynamic index of every node $i \in [n]$ satisfies the following property: Given any state $R_{i-1}$ as the state of $(i-1)$-th node,
\begin{itemize}
    \item $\sigma_i(R_{i-1}, i)$ is nonincreasing as additional nodes are appended to $\hyperb$.
    \item Let $\eta$ be the (random) index of the first node that has dynamic index larger than $\sigma_i(R_{i-1}, i)$, then $\sigma_i(R_{i-1}, i)$ depends only on the (sub)hypernode $\hat{\hyperb} := \{b_i, \ldots, b_{\eta}\}$. If $i = \eta$ with probability $1$, then $\sigma_i(R_{i-1}, i)$ depends only on $b_i$. 
\end{itemize}
\end{lemma}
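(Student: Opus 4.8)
The plan is to prove both properties by leveraging the backward dynamic-programming structure of $\Phi$ established in Lemma~\ref{lem:app:phi_prop}, together with the defining indifference equation \eqref{eq:grve} for $\sigma_i$. The key observation is that the dynamic index $\sigma_i(R_{i-1},i)$ is, by Theorem~\ref{thm:opt_dynamic_index}, the largest $X$ at which stopping and continuing are equally good, i.e. the unique zero of $H_i(\cdot,R_{i-1})$; and $H_i$ is determined by $\Phi(\cdot,R_{i-1},i)$, which in turn is built up recursively from $\Phi(\cdot,\cdot,i{+}1),\dots,\Phi(\cdot,\cdot,n)$ via Bellman's equation. So both claims reduce to monotonicity/locality statements about this recursion.

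\emph{Monotonicity under appending nodes.} First I would fix the prefix $b_i,\dots,b_m$ and compare $\sigma_i$ computed for the truncated instance $\hyperb = [b_i,\dots,b_m]$ against $\sigma_i$ for the extended instance $[b_i,\dots,b_{m'}]$ with $m' > m$. The cleanest route is backward induction on the last node: when we append $b_{m+1}$, the decision maker at node $m$ gains the option to continue, so $\Phi_{\text{ext}}(X,R_{m-1},m) \le \Phi_{\text{old}}(X,R_{m-1},m) = X$ for all $X$ (the old instance has no further node, so its $\Phi$ at the last slot is just $X$; hence $H$ can only increase pointwise, never decrease — wait, more carefully: adding an option lowers $\Phi$, so $H_i = \Phi - x$ decreases pointwise at node $m$, and then by the Bellman recursion this pointwise decrease of $\Phi$ propagates backward to node $i$, since $\Phi(\cdot,R_{i-1},i) = \min\{X,\,c_i + \E[\Phi(\min\{X,R_i\},R_i,i{+}1)]\}$ is monotone in the tail $\Phi(\cdot,\cdot,i{+}1)$). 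Thus $H_i(\cdot,R_{i-1})$ is pointwise nonincreasing in the instance length. Since $\sigma_i$ is the smallest (equivalently, by Lemma~\ref{lem:app:phi_prop}, the unique) zero of the nonnegative nonincreasing function $H_i(\cdot,R_{i-1})$, a pointwise decrease of $H_i$ can only move this zero to the left, giving $\sigma_i^{\text{ext}} \le \sigma_i^{\text{old}}$.

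\emph{Locality via the first ``high-index'' node.} For the second bullet, let $\eta$ be the first node $j \ge i$ whose own dynamic index $\sigma_j(R_{j-1},j)$ exceeds the threshold $\sigma_i(R_{i-1},i)$. The idea is that at node $\eta$ the minimum-so-far $X$ will already have dropped below $\sigma_\eta$ on every realization path consistent with the policy not having stopped earlier — so the optimal continuation from $\eta$ onward is to stop immediately, meaning $\Phi(\cdot,R_{\eta-1},\eta)$ behaves, for the relevant range of $X$, exactly as if nodes past $\eta$ did not exist. Concretely I would argue: conditioned on reaching node $\eta$ under $\tau^*$, the running minimum satisfies $X \le \sigma_i(R_{i-1},i) < \sigma_\eta$, so $\Phi(X,R_{\eta-1},\eta) = X$ by the third item of Lemma~\ref{lem:app:phi_prop}; therefore truncating the instance at $b_\eta$ does not change the Bellman recursion values $\Phi(\cdot,R_{j-1},j)$ for $i \le j \le \eta$ along any path the policy actually follows, and in particular does not change $H_i(\cdot,R_{i-1})$ near its zero, hence does not change $\sigma_i(R_{i-1},i)$. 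The degenerate case $i = \eta$ a.s. means $\sigma_i$ already exceeds its own recomputed value with no tail influence, i.e. $\sigma_i(R_{i-1},i)$ depends only on $b_i$ (formally: the indifference equation at $i$ then reduces to $\E[(\sigma - R_i)_+ - c_i] = 0$ with $\tau^* \equiv i$).

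\emph{Main obstacle.} The delicate point is the locality argument: I need to make rigorous the claim that ``$X < \sigma_\eta$ whenever the policy reaches $\eta$,'' because $\eta$ is itself a random quantity defined in terms of the thresholds, so there is a potential circularity between the definition of $\eta$, the stopping behavior of $\tau^*$, and the value of $\sigma_i$. I expect to resolve this by induction on the distance $\eta - i$, using the first bullet (monotonicity under truncation) as the base tool — truncating at $\eta$ is legitimate precisely because the monotonicity result shows truncation never raises $\sigma$ below $\sigma_\eta$, and the optimal-stopping characterization of Theorem~\ref{thm:opt_dynamic_index} shows $\tau^*$ stops at the first node where $X$ falls below the current index. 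Handling the edge case where $\sigma_\eta$ is attained with equality (the ``indifferent'' boundary) will require the tie-breaking convention built into Alg.~\ref{alg:recall_exploration} (stop when $X > \sigma$, i.e. continue on ties) and the fact that $\sigma_i$ is defined as the \emph{smallest} solution of \eqref{eq:grve}.
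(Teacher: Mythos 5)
Your proposal is correct and takes essentially the same route as the paper: the paper's (much terser) proof likewise argues that appending nodes can only lower the expected loss at every state, hence the index is nonincreasing, and that the optimal stopping rule at competing value $\sigma_i(R_{i-1},i)$ never probes past node $\eta-1$, so the index cannot depend on nodes from $\eta$ onward. Your version simply makes these two steps explicit through the $\Phi$/$H$ machinery of Lemma~\ref{lem:app:phi_prop} (pointwise decrease of $H_i$ moves its smallest zero left; $\Phi(X,R_{\eta-1},\eta)=X$ for the relevant $X<\sigma_\eta$), which is a faithful, more careful rendering of the paper's argument.
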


\begin{proof}
\squishlist
    \item The first property holds because the optimal policy stops at the additional nodes only if they yield a lower expected loss. Consequently, appending nodes at the end of $\hyperb$ can only decrease the expected loss for any given state. As a result, this operation leads to a nonincreasing dynamic index.
    \item The second property is due to that the optimal stopping time will stop at $(\eta-1)$-th node, hence the dynamic index doesn't depend on any nodes starting from $\eta$.
\squishend
\end{proof}

\begin{lemma}\label{lem:app:unique_fair_cap}
The smallest solution to (\ref{eq:grve}) exists, and hence Definition \ref{def:ada_threshold_1_line} is well defined.
Given current state $(x, R_{i-1}, i)$, if the dynamic index $\sigma_i = x$, then there exists some optimal stopping time $\tau^*(x, R_{i-1}, i) \geq i$. 
\end{lemma}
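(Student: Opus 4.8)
The plan is to reduce Eq.~\eqref{eq:grve} to a one-dimensional fixed-point condition and then read off both claims from the Lipschitz/monotonicity properties of $\Phi$ already proved in Lem.~\ref{lem:app:phi_prop}. Let $\tilde\Phi(\sigma, R_{i-1}, i) := \min_{\tau \ge i} \Phi^{\tau}(\sigma, R_{i-1}, i)$ denote the optimal expected loss of state $(\sigma, R_{i-1}, i)$ \emph{conditioned on probing node $v_i$}; by Bellman's principle this equals $c_i + \E_{R_i \mid R_{i-1}}\big[\Phi(\min\{\sigma, R_i\}, R_i, i{+}1)\big]$, and $\tau^*(\sigma, R_{i-1}, i)$ in Eq.~\eqref{eq:grve} is (by convention) a stopping rule attaining this minimum. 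Applying the identity $(\sigma - m)_+ = \sigma - \min\{\sigma, m\}$ inside the expectation, the left-hand side of Eq.~\eqref{eq:grve} equals $\sigma - \Phi^{\tau^*(\sigma, R_{i-1}, i)}(\sigma, R_{i-1}, i) = \sigma - \tilde\Phi(\sigma, R_{i-1}, i)$, so Eq.~\eqref{eq:grve} is equivalent to $g(\sigma) = 0$ with $g(\sigma) := \tilde\Phi(\sigma, R_{i-1}, i) - \sigma$. Since no $X$ appears in $g$, this also records that the index is independent of the incumbent minimum, as asserted in Thm.~\ref{thm:opt_dynamic_index}.

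For existence of the smallest solution, I would first argue $g$ is continuous and non-increasing: by Lem.~\ref{lem:app:phi_prop}, $\Phi(\cdot, R_i, i{+}1)$ is $1$-Lipschitz and monotone non-decreasing, hence so is $\sigma \mapsto \Phi(\min\{\sigma, R_i\}, R_i, i{+}1)$ for each fixed $R_i$, and both properties pass through the conditional expectation; subtracting $\sigma$ leaves $g$ $1$-Lipschitz and non-increasing. Next I would bound the endpoints. At $\sigma = 0$, using $R_i > 0$ and $\Phi \ge 0$, we get $g(0) = c_i + \E_{R_i \mid R_{i-1}}[\Phi(0, R_i, i{+}1)] \ge c_i > 0$ by Assumption~\ref{def:class_loss}. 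For large $\sigma$, comparing $\tilde\Phi$ to the suboptimal rule ``probe $v_i$, then stop'' gives $\tilde\Phi(\sigma, R_{i-1}, i) \le c_i + \E[\min\{\sigma, R_i\}] \le c_i + \E[R_i] < \infty$ (finite since $\D_i$ has finite support), so $g(\sigma) \le c_i + \E[R_i] - \sigma \to -\infty$. By the intermediate value theorem $g$ has a zero; the zero set is closed (continuity of $g$) and bounded below by $0$, hence has a least element, which we take as $\sigma_i$. This makes Def.~\ref{def:ada_threshold_1_line} well-posed.

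For the second claim, suppose $\sigma_i = x$, so $g(x) = 0$, i.e. $\tilde\Phi(x, R_{i-1}, i) = x$. Substituting into the Bellman recursion $\Phi(x, R_{i-1}, i) = \min\{x, \tilde\Phi(x, R_{i-1}, i)\}$ yields $\Phi(x, R_{i-1}, i) = x$. Because the horizon is finite, backward induction supplies a stopping rule $\tau^*$ attaining $\tilde\Phi(x, R_{i-1}, i)$; by construction it probes $v_i$, so $\tau^*(x, R_{i-1}, i) \ge i$, and $\Phi^{\tau^*}(x, R_{i-1}, i) = \tilde\Phi(x, R_{i-1}, i) = x = \Phi(x, R_{i-1}, i)$, so $\tau^*$ is optimal for state $(x, R_{i-1}, i)$. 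This is exactly the ``indifference at the index'': both stopping immediately and probing $v_i$ are optimal.

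The routine parts are the Lipschitz/monotonicity bookkeeping for $g$ and the two endpoint estimates. The step I expect to need the most care is the reading of $\tau^*(\sigma, R_{i-1}, i)$ in Eq.~\eqref{eq:grve}: it must be the optimal continuation \emph{among rules that probe $v_i$}, since that is precisely what collapses the left-hand side to $\sigma - \tilde\Phi(\sigma, R_{i-1}, i)$ and thereby ties the fixed-point equation to the Bellman operator used in the second claim. The only distributional inputs are the strict positivity of $c_i$ (so that $g(0) > 0$ rather than merely $\ge 0$) and the finiteness of $\E[R_i]$ (so that $g \to -\infty$), both immediate from the standing assumptions.
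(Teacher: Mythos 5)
Your proposal is correct, and it rests on the same analytic backbone as the paper's proof (the monotonicity/Lipschitz facts of Lemma~\ref{lem:app:phi_prop} together with the Bellman recursion), but it is organized along a genuinely different route. The paper works with the unrestricted value via $H_i(x,R_{i-1})=\Phi(x,R_{i-1},i)-x$, gets existence from the endpoint claims $H_i(0,\cdot)\ge 0$ and $H_i(v_k,\cdot)=0$, and proves the second claim by an $\epsilon$-perturbation: it compares the best rule that opens $b_i$ with $\tau^*(\sigma_i-\epsilon,R_{i-1},i)$, bounds the gap by $2\epsilon$ via Lipschitzness, and lets $\epsilon\to 0$. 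You instead work with the probing-restricted value $\wt{\Phi}(\sigma)=c_i+\E_{R_i\mid R_{i-1}}[\Phi(\min\{\sigma,R_i\},R_i,i{+}1)]$, turn~\eqref{eq:grve} into the fixed point $g(\sigma)=\wt{\Phi}(\sigma)-\sigma=0$, and use $g(0)\ge c_i>0$ (Assumption~\ref{def:class_loss}) together with $g(\sigma)\le c_i+\E[R_i]-\sigma\to-\infty$; the second claim then follows directly because $g(\sigma_i)=0$ makes the attained finite-horizon minimizer over probing rules optimal, so $\tau^*\ge i$. Your upper endpoint is arguably more robust (the index may exceed $v_k$ when $c_i$ is large, so the paper's claim $H_i(v_k,\cdot)=0$ is delicate), and your direct indifference argument avoids the limiting contradiction. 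The one hinge—which you correctly flag—is reading $\tau^*$ in~\eqref{eq:grve} as the best rule that probes $v_i$; since the paper's own proof adopts exactly this convention when it introduces $\wt{\tau}$, this is the intended reading and not a gap.
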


\begin{proof}
Given any state $(x, R_{i-1}, i)$, consider function
\begin{align*}
    H_i(x, R_{i-1}) = \Phi (x, R_{i-1}, i) - x
\end{align*}
$H_i(x)$ is 1-Lipschitz and monotone non-increasing by lemma~\ref{lem:app:phi_prop}. Since $ H_i(0, R_{i-1}) = \Phi (0, R_{i-1}, i) \ge 0$ and $H_i(v_k, R_{i-1}) = 0$, there exist some $\sigma_i \in S$, such that $H_i(\sigma_i, R_{i-1}) = 0$. This proves the existence of $\sigma_i$.\\

\noindent Now, we show that if $x = \sigma_i$ is positive, then there exists an optimal stopping rule that proceeds to open $b_i$. 
Fix any $i$ such that $\sigma_i > 0$. Let $\wt{\tau}$ be the best strategy among all strategies that open $b_i$. To show that $\wt{\tau}$ is indeed optimal, we show that 
\[
\delta = \Phi(\sigma_i, R_{i-1}, i) -\Phi^{\tilde{\tau}}(\sigma_i, R_{i-1}, i) = 0
\]
Assume towards contradiction that $\delta > 0$. We have
\begin{align*}
0 < \delta & = \Phi(\sigma_i,R_{i-1}, i) - \Phi^{\wt{\tau}}(\sigma_i,R_{i-1}, i) \\
& \le \Phi(\sigma_i,R_{i-1}, i) - \Phi^{\tau^*(\sigma_i-\epsilon, R_{i-1}, i)}(\sigma_i,R_{i-1}, i) \\
&  = (\Phi(\sigma_i,R_{i-1}, i)- \Phi(\sigma_i - \epsilon,R_{i-1}, i)) + (\Phi(\sigma_i - \epsilon,R_{i-1}, i) - \Phi^{\tau^*(\sigma_i-\epsilon, R_{i-1}, i)}(\sigma_i,R_{i-1}, i)) \\
& \le 2\epsilon
\end{align*}
where we used Lipschitzness of $\Phi$ for the last inequality, and the first inequality comes from the fact that $\tau^*(\sigma_i-\epsilon, R_{i-1}, i)$ is a sub-optimal policy that opens $b_i$. We have $\tau^*(\sigma_i-\epsilon, R_{i-1}, i) \geq i$ since $\sigma_i$ is the smallest such that $H_i(\sigma_i, R_{i-1}) = 0$, this implies that $H_i(\sigma_i - \epsilon, R_{i-1}) = \Phi^{\tau^*(\sigma_i-\epsilon, R_{i-1}, i)}(\sigma_i-\epsilon, R_{i-1}, i) - (\sigma_i-\epsilon) > 0$ meaning the optimal policy will accumulate more reward than current best, thus it has to open $b_i$. As $\epsilon \rightarrow 0$, we get a contradiction. On the other hand, $H_i(\sigma_i, R_{i-1}) = \Phi(\sigma_i,R_{i-1}, i) - \sigma_i = 0$ implies that the strategy that stops at $b_{i-1}$ is also optimal. Thus, $\sigma_i$ is indeed the value for which we are indifferent between stopping and proceeding optimally.
\end{proof}

\subsection{Payoff Table}\label{subsec:app:exp_payoff_table}

\begin{algorithm}[!ht]\caption{Expected Equivalent Reward Computation, Single Line}\label{alg:app:gw_1path}
\begin{algorithmic}[1]
 \Require Ordered set of nodes $\{b_1,\ldots, b_n\}$, probing cost $\{c_1,\ldots, c_n\}$, distributions of the random payoff of nodes
 \State Initialize $z \gets 0$
 \For{$x \in S$} \Comment{Base case: filling in $T(\cdot, \cdot, n)$}
    \For{$s \in S$}
    \State $z \gets \sum_{y\in S} (\min\{x, y\}+ c_n)\cdot \Pr(R_n = y)$
    \If{$z > x$}
        \State $\Phi (x, s, n) = z$, $\mathds{1}(x, s, n) = 1$ 
    \Else
        \State $\Phi (x, s, n) = x$, $\mathds{1}(x, s, n) = 0$
    \EndIf
    \State $\frm(x, s, n) = \mathds{1}(x, s, n) \cdot R_n$, and $\frc(x, s, n) = \mathds{1}(x, s, n) \cdot c_n$
    \EndFor
\EndFor
\For{$i = n-1, \cdots, 1$} \Comment{Filling in $T(\cdot, \cdot, i)$ for all $i = n-1, \cdots, 1$}
    \For{$x \in S$}
        \For{$s \in S$}
        \State $z \gets \E \Big[\sum_{y\in S} \Big(\min\Big\{x, y, \frm(x, s_y, i+1)\Big\}+ c_j + \frc(x, s_y, i+1)\Big)\cdot \Pr(R_i = y)\Big]$ where $s_y$ is the state that gives $R_i$ realization $R_i = y$
        \If{$z > x$}
        \State $\Phi (x, s, i) = z$, $\mathds{1}(x, s, i) = 1$
        \Else
        \State $\Phi (x, s, i) = x$, $\mathds{1}(x, s, i) = 0$
        \EndIf
        \State Calculate $\frm(x, s, i)$ and $\frc(x, s, i)$ as follows: with probability $\Pr(R_i = y)$, $\frm(x, s, i)$ is $\mathds{1}(x, s, i) \cdot \min\{y, \frm(x, s_y, i+1)\}$ and $\frc(x, s, i)$ is $\mathds{1}(x, s, i) \cdot (c_i + \frc(x, s_y, i+1))$
        \EndFor
    \EndFor
\EndFor
\State \textbf{Return} $\Phi(x, s, i)$ for all $x\in S$, $s\in S$ and $i\in [n]$
\end{algorithmic}
\end{algorithm}

\begin{lemma}[Efficient Computation of Payoff Table]\label{lem:app:computing_rv}
    There is an efficient algorithm that computes $\phi(x,s,i)$ for all $i$, $x$ and $s$. 
\end{lemma}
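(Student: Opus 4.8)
# Proof Proposal for Lemma~\ref{lem:app:computing_rv}

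\textbf{Overall approach.} The plan is to simply verify that Algorithm~\ref{alg:app:gw_1path} correctly implements the Bellman recursion for $\Phi$ from Definition~\ref{def:exp_loss} and then count operations. The key observation is that the state space is finite: a state is a triple $(x, s, i)$ with $x \in V$ (the running minimum, which by Lemma~\ref{lem:app:phi_prop} only ever takes values in the support $V$ since it is a minimum of support elements), $s \in V$ (the realization $R_{i-1}$ of the previous node, needed to condition the transition), and $i \in [n]$. Since $|V| = k$, there are $O(n k^2)$ states, and for each state the Bellman update involves an expectation over the $k$ possible next realizations $R_i = y$, so the per-state cost is $O(k)$ once the transition probabilities $\Pr(R_i = y \mid R_{i-1} = s)$ are available. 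This gives $O(nk^3)$ arithmetic operations for the table fill; the $T$ in the claimed bound $O(n|V|^2 T)$ comes from estimating each transition matrix $P_i$ from the $T$ training samples (one pass per node to form the empirical conditional PMFs).

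\textbf{Key steps, in order.} First I would state precisely what the table entries mean: $\Phi(x,s,i)$ is the optimal expected future loss of the state where the running minimum is $x$, the previous node realized value $s$, and node $i$ is the next candidate; $\mathds{1}(x,s,i)$ is the stop/continue bit; and $\frm(x,s,i)$, $\frc(x,s,i)$ are the (random) contribution to the final selected-node loss and accumulated edge cost, respectively, under the optimal continuation from this state. Second, establish the base case $i = n$: with no further nodes after $v_n$, continuing means paying $c_n$ and then being forced to compare $x$ against $R_n$, so the continuation value is $\E_{R_n}[\min\{x, R_n\} + c_n]$ — exactly line~4 of the algorithm — and stopping gives $x$; the algorithm takes the min, which matches the Bellman equation $\Phi(x,R_{i-1},i) = \min\{x,\; c_i + \E_{R_i \mid R_{i-1}}[\Phi(\min\{x,R_i\}, R_i, i{+}1)]\}$ stated after Definition~\ref{def:exp_loss}. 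Third, the inductive step: assuming $\Phi(\cdot,\cdot,i{+}1)$ and the auxiliary tables are correct, line~14 computes $z = \E_{R_i}[\min\{x, R_i, \frm(x, s_{R_i}, i{+}1)\} + c_i + \frc(x, s_{R_i}, i{+}1)]$; I would argue that $\min\{x, R_i, \frm(\cdot)\}$ correctly tracks the eventual minimum (current running min $x$, the new realization $R_i$, and the best among nodes opened in the optimal continuation) and $c_i + \frc(\cdot)$ the total edge cost, so $z$ equals the continuation value, and again taking $\min\{z, x\}$ yields $\Phi$. The updates to $\frm$ and $\frc$ in the last lines propagate these random quantities consistently with the stop bit. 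Fourth, the complexity count as above.

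\textbf{Main obstacle.} The routine-calculation part is straightforward; the one point needing care is the bookkeeping of the \emph{random} auxiliary quantities $\frm$ and $\frc$. These are not scalars but random variables (distributions over $V$, indexed by which realization path was taken), and the statement ``with probability $\Pr(R_i = y)$, $\frm(x,s,i)$ is $\mathds{1}(x,s,i)\cdot\min\{y,\frm(x,s_y,i{+}1)\}$'' is really defining a mixture distribution. I would need to make precise that when a parent node aggregates, the expectation $\E[\min\{x, R_i, \frm(x, s_{R_i}, i{+}1)\}]$ is taken jointly over $R_i \sim \vp_i$ and over the internal randomness of $\frm(x, s_{R_i}, i{+}1)$, and that storing $\frm$ as a PMF over $V$ (which has size $k$) keeps the per-state cost $O(k^2)$ in the worst case rather than $O(k)$ — this is why a loose but safe statement of the bound suffices for the lemma. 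I expect the cleanest exposition is to fold $\frm, \frc$ into the single quantity $\Phi$ wherever possible and only invoke the separate tables when the final returned-node identity genuinely matters, i.e., at the very end; alternatively one can observe that $\Phi(x,s,i) = \E[\min\{x,\frm(x,s,i)\}] + \E[\frc(x,s,i)]$ plus the stop-branch, which lets the correctness proof proceed by a single induction on $i$ without separately tracking the two auxiliary tables.
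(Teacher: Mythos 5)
Your proposal is correct and follows essentially the same route as the paper's own proof: backward dynamic programming over the finite state space $(x,s,i)$ filling the table of Algorithm~\ref{alg:app:gw_1path}, with correctness from the Bellman recursion and polynomial time/space from the finite support of all random quantities. Your treatment is somewhat more explicit than the paper's (spelling out the base case, the induction, and the fact that $\frm,\frc$ are mixture distributions rather than scalars), but it is the same argument, not a different one.
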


\begin{proof}
\noindent Now we give an efficient algorithm for computing the dynamic index. In fact, we will give an algorithm that uses dynamic programming to compute $\Phi (x, R_{i-1}, i)$ for all triples $(x, R_{i-1}, i)$. Then, given the current state of the algorithm $(x, R_{i-1}, i)$, the dynamic index $\sigma_i$ for node $i$ is the smallest $x$ in the table where $\Phi (x, R_{i-1}, i) = x$.\\

\noindent Denote by $T(x, R_{i-1}, i)$ our three dimensional dynamic programming table. Each entry $T(x, R_{i-1}, i)$ will store the following information:
\begin{enumerate}
    \item Expected future loss: $\Phi (x, R_{i-1}, i)$
    \item Indicator: $\mathds{1}(x, R_{i-1}, i)$ indicating whether the optimal policy will open $b_i$ in this state
    \item The distribution of future random min reward\footnote{The randomness comes from both random stopping time $\tau^*$ and correlated random variables $R_i$'s,}: $\frm(x, R_{i-1}, i):=\min_{j=i}^{\tau^\star(x, R_{i-1}, i)} R_j$ where $R_j$'s are the correlated random rewards for mininodes that are yet to be opened given that the algorithm is at state $(x, R_{i-1}, i)$. 
    \item The distribution of future random cost\footnote{The randomness comes from $\tau^*$ being a random stopping time.}: $\frc(x, R_{i-1}, i):=\sum_{j=i}^{\tau^\star(x, R_{i-1}, i)} c_j$
\end{enumerate}

\noindent Algorithm \ref{alg:app:gw_1path} describes how to fill in the dynamic programming table. Since all random variables that appear in Algorithm \ref{alg:app:gw_1path} has finite support with size bounded by $\poly (K,n)$, and any $\min$ operation for random variables only has three or less arguments, it follows that algorithm \ref{alg:app:gw_1path} takes polynomial time and space.
\end{proof}
\newpage
\section{More Details Costly Exploration Over Tree}\label{sec:app:details_tree}

\textbf{Notations} In our analysis, $\lambda c_i$ always appears as a whole. Notice that in our analysis, $\lambda$ is not used as the tradeoff parameter but is applied for other purposes throughout our analysis on the indexing policy. For graphs with a directed-tree constraint, since each node has exactly one parent, we can allocate the edge cost to the node itself. In this way, inspecting a node $i$ reveals both its loss $\ell_i$ and its inspection cost $c_i$. Please see Fig.~\ref{fig:contraction_line} as an example.

We first define the notion of a hypernode, namely a directed-line subgraph of the DAG that, under suitable conditions, 
can be reparameterized as a single node with aggregated loss and cost.

\begin{definition}[Hypernode]\label{def:app:hyper}
Given an instance of the costly exploration problem with $n$ nodes $\V$ organized in a directed acyclic graph (DAG) $G = (\V, E)$, a \emph{hypernode} is a subset of nodes $\mathcal{H} := \{v_1, \ldots, v_m\} \subseteq \V$ such that the subgraph of $G$ induced by $\mathcal{H}$, denoted by $\mathcal{L}$, forms a directed path. 
That is, for each $i \in [m-1]$, the edge $(v_i, v_{i+1})$ belongs to $E$. 
\end{definition}

\subsection{Costly Exploration Over Multi Lines}\label{sec:app:multi_line}
Before presenting our solution for the tree case, we first introduce the solution for the multi-line setting. We begin by introducing the problem definition of the multi-line setting. 

\begin{definition}[Multi-Line Costly Exploration]\label{def:app:multi_line}
Given an instance of the Markovian costly exploration problem (Prob.~\ref{prob:mar_cost}), 
we say it is in a \emph{multi-line setting} if the induced subgraph of $G$, excluding the dummy root node $v_0$, 
can be decomposed into a collection of disjoint directed lines:
\[
\mathcal{L} = \{L_1, L_2, \ldots, L_k\}, \quad 
L_j = (v^j_1 \to v^j_2 \to \cdots \to v^j_{m_j}),
\] 
such that every node $v \in \V$ belongs to exactly one line $L_j$. 
Each line $L_j$ can then be reparameterized as a sequence of hypernodes (Def.~\ref{def:app:hyper}), 
so that the costly exploration problem reduces to choosing among multiple directed lines.
\end{definition}

\begin{figure}[H]
    \centering
    \subfloat[Pipeline for Early Exit Models]{%
        \includegraphics[width=0.5\textwidth]{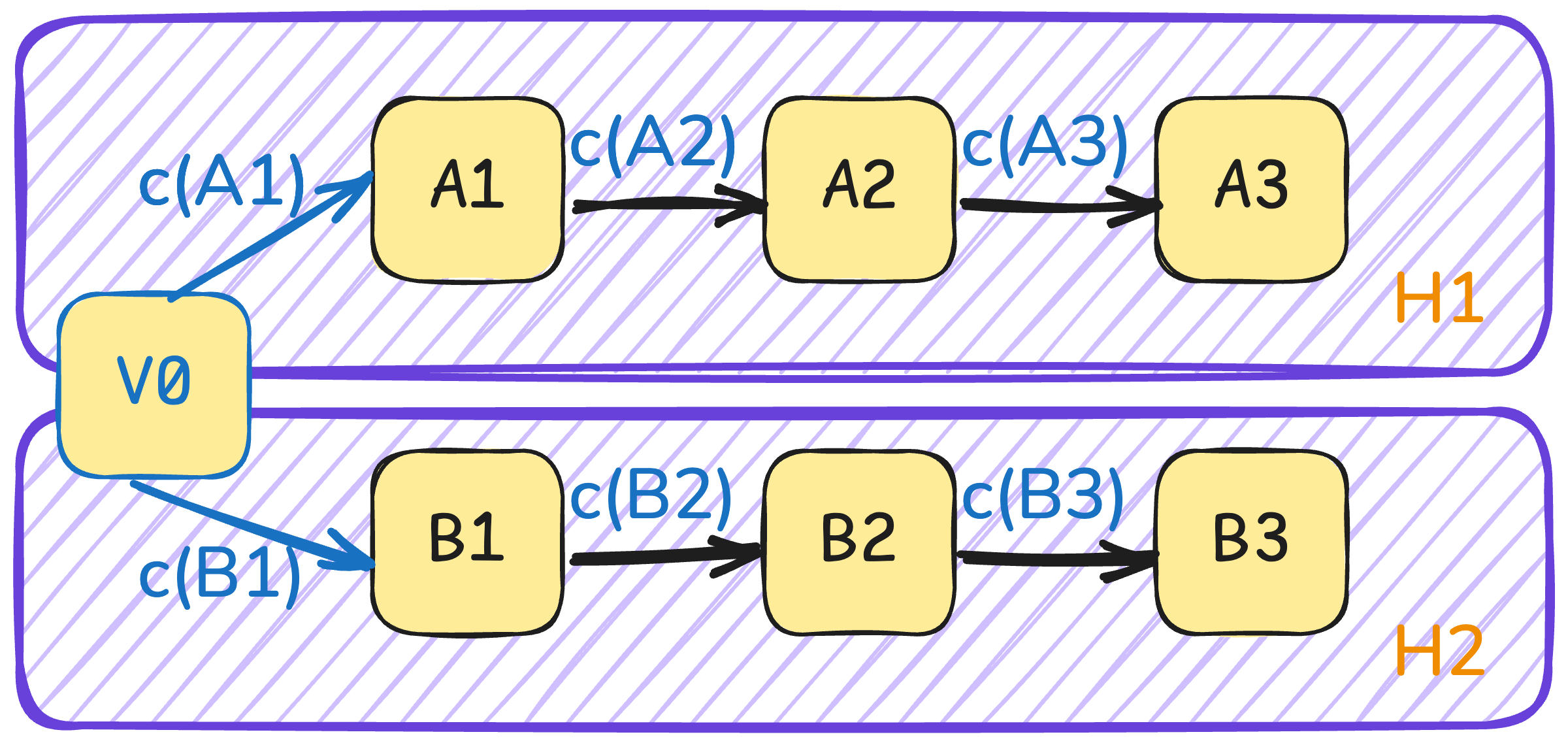}
        \label{fig:EE_module}
    }
    \hfill
    \subfloat[Exit by Thresholding]{%
        \includegraphics[width=0.4\textwidth]{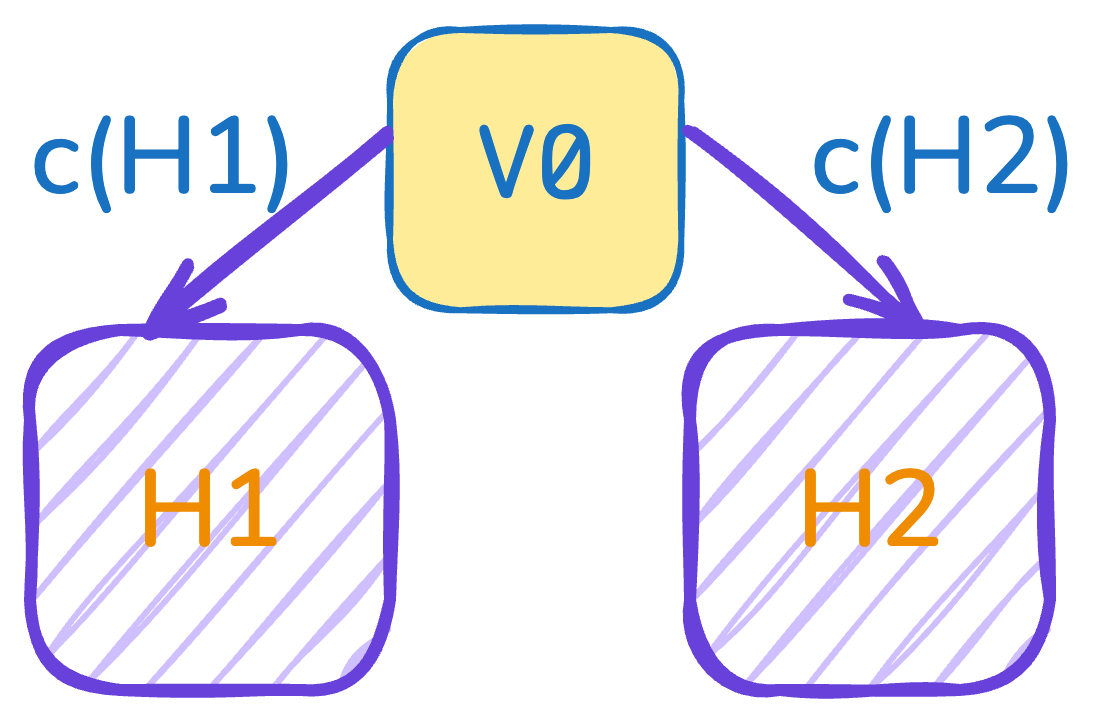}
        \label{fig:Exit_Decision}
    }
    \caption{Hypernode View of multi-line setting. (a) illustrates how to allocate the edge cost $c$ as the inspection cost of the node (b) depicts they way we treat . During inference, the decision rule ({\color{red} $\lozenge$}) is the only component that can be modified. }
    \label{fig:contraction_line}
\end{figure}

We now show that in the multi-line case, the optimal strategy is to probe each hypernode according to the dynamic index of its first unopened node.

We begin by presenting the definition of a node with random cost, with reward and cost correlated.

\begin{definition}[Node with Random Cost]\label{def:random_cost_node}
A node $v$ is classified as a \emph{node with random cost} if it is associated with a loss $\ell$ and an inspection cost $c$, 
where both $\ell$ and $c$ are \emph{random variables} drawn from known distributions $\mathcal{D}_\ell$ and $\mathcal{D}_c$, respectively. 
The inspection cost $c$ may vary and can be \emph{correlated} with the loss $\ell$; we denote their joint distribution as $\Gamma$. 
\end{definition}

Next, we introduce how to equivalently represent a hypernode as a single node with random cost, where the dynamic index of the hypernode equals that of the equivalent node.

\begin{lemma}[Equivalent Single Node for Hypernode]\label{lem:app:eq_node}
For a stopping time $\tau$ and a hypernode $\mathcal{H} := \{v_1, \ldots, v_n\}$, 
there exists a node $\hat{v}$ with random cost (Def.~\ref{def:random_cost_node}) such that 
following $\tau$ over $\mathcal{H}$ yields the same loss distribution as inspecting $\hat{v}$.
\end{lemma}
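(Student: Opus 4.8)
The plan is to construct $\hat v$ explicitly by "flattening" the stopping process on the directed path $\mathcal{H} = \{v_1,\ldots,v_n\}$ into a single draw. First I would fix the stopping time $\tau$ and observe that, since $\mathcal{H}$ is a directed line with Markovian losses, running $\tau$ on $\mathcal{H}$ starting from its first node $v_1$ is itself a random process whose only relevant outputs (for the purposes of the outer costly-exploration problem) are two random quantities: the minimum loss actually realized before stopping, $M := \min_{j=1}^{\tau} \ell_j$, and the total inspection cost paid, $C := \sum_{j=1}^{\tau} c_j$ (using the tree reparameterization in which each edge cost is charged to its head node, so $c_j$ is attached to $v_j$). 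These two are jointly distributed according to some law $\Gamma$ on $\R_{>0}\times\R_{>0}$, and they are correlated in general — small losses tend to let $\tau$ stop early, hence small $C$. I would then simply \emph{define} $\hat v$ to be a node with random cost (Def.~\ref{def:random_cost_node}) whose loss–cost pair $(\ell_{\hat v}, c_{\hat v})$ has joint distribution exactly $\Gamma$.

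The key step is then to verify the claimed equivalence: for \emph{any} realization of what happens outside $\mathcal{H}$ — i.e., any current running minimum $X$ brought in from previously inspected nodes, and any continuation behavior after $\mathcal{H}$ — the contribution of "inspecting $\mathcal{H}$ under $\tau$" to the overall objective coincides in distribution with the contribution of "inspecting $\hat v$." Concretely, after processing $\mathcal{H}$ the running minimum becomes $\min\{X, M\}$ and the accumulated cost increases by $C$; after inspecting $\hat v$ the running minimum becomes $\min\{X, \ell_{\hat v}\}$ and the cost increases by $c_{\hat v}$. Since $(M,C) \sim \Gamma \sim (\ell_{\hat v}, c_{\hat v})$, these pairs of (post-minimum, incremental-cost) are identically distributed, and by the Markov property the future evolution depends on the history only through the current running minimum and the identity of the next available node. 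Hence the induced loss distribution — both the node-loss term $f$ and the total edge-cost term — is unchanged. I would state this as: for every bounded measurable functional of (final selected loss, total cost) that the downstream policy can produce, the two processes agree, which in particular gives the equality of expected objectives and of the loss distribution as claimed. The final clause about the dynamic index (that $\sigma$ of the hypernode equals $\sigma$ of $\hat v$) then follows because the index in Def.~\ref{def:ada_threshold_1_line}/\eqref{eq:grve} is defined purely in terms of the law of $\bigl(\min_j R_j, \sum_j c_j\bigr)$ under the optimal stopping rule, which is exactly $\Gamma$ when $\tau = \tau^*$.

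The main obstacle I anticipate is being careful about \emph{what is held fixed}. The statement quantifies over a stopping time $\tau$, but the running minimum $X$ that enters the hypernode is itself random and, crucially, is \emph{independent of the internal randomness of $\mathcal{H}$ given the loss of the node preceding $\mathcal{H}$} — here I must invoke the Markov property (Prob.~\ref{prob:mar_cost}) to decouple $X$ from $(M,C)$ appropriately, or else carry the conditioning on the boundary loss $\ell_0^{\mathcal{H}}$ through the whole argument (so that $\Gamma = \Gamma(\cdot \mid \ell_{\text{parent}})$ is a conditional law). A secondary subtlety is that $\tau$ on $\mathcal{H}$ may be a "partial" stopping rule that sometimes exits $\mathcal{H}$ without stopping (i.e., traverses all of $\mathcal{H}$ and continues); I would handle this by allowing $\tau = n$ as the "pass-through" event and noting that in that case the hypernode still deterministically returns through its last node, so the equivalence is unaffected — $M$ and $C$ are still well-defined. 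Once these bookkeeping points are pinned down, the rest is a direct distributional identity.
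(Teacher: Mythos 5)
Your proposal is correct and follows essentially the same route as the paper: the paper's proof likewise defines the equivalent node by coupling each realization of the hypernode's losses to the pair $\bigl(\min_{j=1}^{\tau}\ell_j,\ \sum_{j=1}^{\tau}c_j\bigr)$ and taking this joint law as the loss--cost distribution of $\hat{v}$. Your additional care about conditioning on the parent's loss and the pass-through case is a more explicit treatment of details the paper leaves implicit, but it is the same construction.
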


\begin{proof}
Let $L = (\ell_1, \ldots, \ell_n)$ be a realization of the joint loss distribution in hypernode $\mathcal{H}$. 
For each realization of $L$, the stopping time $\tau$ uniquely determines the effective loss and cumulative cost of the hypernode.

To construct the distribution of the equivalent single node, 
we define a coupling between the realizations of $L$ and the resulting loss and cost. 
When the joint loss is $L$, we assign the single node’s loss as $\min_{i=1}^{\tau} \ell_i$ 
and the single node’s cost as $\sum_{i=1}^{\tau} c_i$, where $\tau$ is the stopping time. 
The probability of each outcome matches the probability of $L$ under the original hypernode’s joint distribution.
\end{proof}

From our construction of the equivalent node, we obtain the following lemma: the dynamic index remains well-defined 
for a node with random cost, even when the nodes inside a hypernode have stochastic costs. 

\begin{lemma}[Extending Dynamic Index to Hypernodes with Random Cost]\label{lem:app:dyn_index_multi_random}
Given a hypernode $\mathcal{H} := \{v_1, \ldots, v_n\}$, where each node has a stochastic inspection cost 
that may be correlated with its loss distribution, the dynamic index of each node with random cost is well-defined 
and can be computed in polynomial time. 

Moreover, if the dynamic index $\hat{\sigma}$ of individual nodes—i.e., the index when there is only one node with random cost—satisfies
\begin{align*}
    \hat{\sigma}(v_1) \leq \hat{\sigma}(v_2 \mid \ell_1) \leq \cdots \leq \hat{\sigma}(v_n \mid \ell_{n-1}),
\end{align*}
for any realized losses $\ell_1, \ldots, \ell_n$, then the dynamic index of $v_i$ within the hypernode depends only on $v_i$ itself. 
\end{lemma}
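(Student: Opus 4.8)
The plan is to split the statement into two parts and handle them in sequence. First I would establish that the dynamic index is well-defined for a single node with random cost, i.e.\ a node whose loss $\ell$ and inspection cost $c$ are jointly distributed according to some $\Gamma$. The key observation is that the entire machinery of Section~\ref{subsec:dynamic_index}, in particular Lemma~\ref{lem:app:phi_prop} and Lemma~\ref{lem:app:unique_fair_cap}, only used three structural facts about each stage: (i) the cost paid to inspect a node is nonnegative, (ii) the loss revealed is nonnegative, and (iii) the equivalent-loss function $\Phi(\cdot, R_{i-1}, i)$ is obtained by a $\min$ with a conditional expectation over the next stage. None of these used that $c$ was deterministic; replacing $c_i$ by its random counterpart only changes $c_i + \E_{R_i\mid R_{i-1}}[\phi^{\tau^*}(\cdots)]$ into $\E_{(\ell_i,c_i)\mid R_{i-1}}[c_i + \phi^{\tau^*}(\cdots)]$, which is again $1$-Lipschitz, monotone, and nonnegative in $X$ as a function of $X$ after taking the outer expectation. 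Hence $H_i(x,R_{i-1}) = \Phi(x,R_{i-1},i)-x$ is still nonnegative, $1$-Lipschitz, nonincreasing, with $H_i(0,\cdot)\ge 0$ and $H_i(v_k,\cdot)=0$, so Equation~\eqref{eq:grve} has a smallest solution. The polynomial-time computability follows verbatim from Lemma~\ref{lem:app:computing_rv}: the dynamic programming table in Algorithm~\ref{alg:app:gw_1path} carries the joint distribution of the future min-reward and future cumulative cost anyway, so correlating $\ell_i$ with $c_i$ only changes which joint distribution is summed over at each step, not the size of the table or the number of arithmetic operations.

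Second, for the ``moreover'' claim I would argue that under the chain of inequalities $\hat\sigma(v_1)\le \hat\sigma(v_2\mid\ell_1)\le\cdots\le\hat\sigma(v_n\mid\ell_{n-1})$, the optimal stopping time inside the hypernode, conditioned on having reached node $v_i$ and on the realized loss history, never stops strictly before $v_i$ is opened when the current running minimum $X$ exceeds $\hat\sigma(v_i\mid \ell_{i-1})$, and always stops as soon as the running minimum drops to or below the relevant single-node index. This is the direct analogue of Lemma~\ref{lem:app:properties_GRV}: I would invoke its second bullet, identifying $\eta$ as the first node whose single-node index exceeds the candidate threshold $\hat\sigma(v_i\mid\ell_{i-1})$; the monotonicity assumption forces $\eta = i$ with probability one along every realization, so the second bullet gives that $\sigma_i$ depends only on $v_i$ itself. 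Concretely I would verify that Equation~\eqref{eq:grve} for the hypernode, with the optimal $\tau^*$ substituted, collapses to the single-node equation because $\tau^*(\sigma, R_{i-1}, i) = i$ almost surely at the indifference point, so the sum $\sum_{j=i}^{\tau^*} c_j$ reduces to $c_i$ and $\min_{j=i}^{\tau^*} R_j$ reduces to $R_i$.

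The main obstacle I anticipate is the careful handling of the ``conditioned on realized losses'' quantifier in the monotonicity hypothesis together with the correlation between $\ell_i$ and $c_i$. When costs are random, the index $\hat\sigma(v_{i+1}\mid\ell_i)$ is itself a random variable depending on $\ell_i$, and one must be sure that the hypothesis ``$\hat\sigma(v_i)\le\hat\sigma(v_{i+1}\mid\ell_i)$ for all realized $\ell_i$'' is genuinely a pathwise statement so that the argument ``$\eta=i$ with probability one'' goes through; a sloppy averaging here would break the reduction. I would also need to confirm that the equivalent-single-node construction of Lemma~\ref{lem:app:eq_node} is compatible with nesting — i.e.\ that after contracting $\{v_1,\dots,v_i\}$ into one random-cost node, its index matches $\hat\sigma(v_i\mid\ell_{i-1})$ in the appropriate conditional sense — but this should follow by induction on $i$ using the coupling already set up in that lemma's proof, so it is bookkeeping rather than a genuine difficulty.
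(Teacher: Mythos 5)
Your proposal is correct and follows essentially the same route as the paper: for well-definedness and polynomial-time computability you re-run the $\Phi$/$H$ machinery (Lem.~\ref{lem:app:phi_prop}, Lem.~\ref{lem:app:unique_fair_cap}, Lem.~\ref{lem:app:computing_rv}) after replacing the deterministic cost by an expectation over the joint $(\ell_i,c_i)$ distribution, and for the ``moreover'' claim you invoke Lem.~\ref{lem:app:properties_GRV}, noting that the monotone-index hypothesis forces $\eta=i$ almost surely so the defining equation collapses to the single-node one. The paper's own proof is just a terser statement of these same two steps, so your additional care about the pathwise conditioning and the DP table is a faithful elaboration rather than a different argument.
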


\begin{proof}
Since the $\phi$ and $H$ functions remain well-defined and preserve their properties, 
the dynamic index remains well-defined for hypernodes with nodes of random cost. 
The second claim follows from Lem.~\ref{lem:app:properties_GRV}.
\end{proof}

We begin by presenting a key lemma for our main theorem, which establishes that under certain Markovian correlations, the dynamic index remains an optimal decision rule. We show this lemma by first principles, where we compare the utility of the ordering $A \prec B \prec C$ with that of $B \prec A \prec C$ through a case-by-case analysis of 9 outcomes from the joint distribution of $A$, $B$, and $C$ and aggregate them by the law of total expectation.

\begin{lemma}[Probing Equivalent Nodes]\label{lem:three_node_lemma}
Consider three nodes $A, B, C$ with random cost (Def.~\ref{def:random_cost_node}) satisfying the following properties:
\squishlist
    \item The loss and cost of $B$ are independent of the loss and cost of $A$;
    \item The loss and cost (hence effective outcome) of $C$ depend on both $A$ and $B$ in a Markovian fashion;
    \item The dynamic indices satisfy $\sigma(A) < \sigma(B) < \sigma(C)$ given any realizations of $A$ and $B$\footnote{Here, $\sigma(C)$ is a random variable that depends on $A$ and $B$.}, i.e., for any possible value $x$ of $A$ and value $y$ of $B$,
    \begin{align*}
        \sigma(A) \leq \sigma(B) \leq [\sigma(C) \mid \ell_{A} = x, \, \ell_{B} = y];
    \end{align*}
    \item A precedence constraint that $A$ and $B$ must be probed before $C$;
\squishend
then, conditioned on any competing loss $X$, the optimal probing strategy is $A \prec B \prec C$. 
\end{lemma}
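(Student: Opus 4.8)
The plan is to prove Lemma~\ref{lem:three_node_lemma} by an \emph{adjacent exchange argument}: first reduce the comparison of arbitrary feasible strategies to a comparison of the two probing orders $A\prec B\prec C$ and $B\prec A\prec C$, then peel off the contribution of $C$ (which is order-invariant by the Markov property), and finally settle the remaining two-node comparison on the independent pair $\{A,B\}$ by a finite case analysis driven by the index inequality $\sigma(A)\le\sigma(B)$.

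\textbf{Step 1 (reduce to the order of $A$ and $B$).} Because the precedence constraint forces $C$ to be probed only after both $A$ and $B$, in any realization $C$ is, if probed at all, the third probed node; hence the only way a strategy can deviate from the order $A\prec B\prec C$ is to probe $B$ before $A$. Call a strategy \emph{conforming} if it probes a (possibly empty) prefix of $(A,B,C)$ in that order, adaptively deciding at each step whether to stop. It suffices to show that the best conforming strategy is no worse than the best non-conforming one (the latter being an optimal strategy of the form ``$B$, then possibly $A$, then possibly $C$''), since the full strategy space is the union of these two families and the degenerate ``probe nothing'' strategy is conforming. For either order, the dynamic-index-optimal continuation probes $C$ exactly when the running minimum $\min\{X,\ell_A,\ell_B\}$ exceeds $\sigma(C\mid \ell_A,\ell_B)$; by the Markov property the conditional law of $(\ell_C,c_C)$ given $(\ell_A,\ell_B)$ does not depend on which of $A,B$ was revealed first, so conditioned on $(\ell_A,\ell_B)$ the entire ``probe-$C$-or-stop'' subproblem is identical in distribution under the two orders. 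Thus the $C$-contribution cancels and we are left comparing, against a fixed competing loss $X$, the expected cost of optimally handling the two independent nodes $A$ and $B$ in the two possible orders.

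\textbf{Step 2 (the two-node comparison; the nine cases).} Here I invoke the dynamic-index characterization: by Lemma~\ref{lem:app:dyn_index_multi_random} and the properties of $\Phi,H$ in Lemma~\ref{lem:app:phi_prop}, the optimal continuation after committing to ``$A$ then $B$'' probes $A$ iff the running minimum exceeds $\sigma(A)$, and, having observed $\ell_A$, probes $B$ iff $\min\{X,\ell_A\}$ exceeds $\sigma(B)$ --- a threshold that, because $B\perp A$, does not depend on $\ell_A$ (Definition~\ref{def:ada_threshold_1_line}); symmetrically for ``$B$ then $A$''. I would couple the two strategies on a common joint realization of $(\ell_A,c_A)$ and $(\ell_B,c_B)$ and partition the probability space by the \emph{effective outcome} of each node --- three mutually exclusive cases per node (its index threshold is not crossed, so it is not probed; it is probed and its realized loss becomes the new running minimum; it is probed but does not improve the running minimum) --- yielding nine buckets. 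In each bucket one writes $\Delta:=\cost(B\prec A)-\cost(A\prec B)$ in closed form and checks $\Delta\ge 0$ using $\sigma(A)\le\sigma(B)$ (with $\sigma(B)\le\sigma(C)$ available when one needs to bound the competing value carried forward into the $C$-phase). Summing over buckets by the law of total expectation, then re-attaching the order-invariant $C$-contribution, gives $\cost(A\prec B\prec C)\le\cost(B\prec A\prec C)$, which with the conforming ``probe nothing'' option proves the lemma for every competing loss $X$.

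\textbf{Main obstacle.} The hard part will be making Step~1 rigorous in the presence of full adaptivity and random, loss-correlated costs: one must argue that a $B$-first strategy which, in some realizations, stops after $B$ without ever probing $A$ is nevertheless dominated by a conforming strategy --- this is precisely where $\sigma(A)\le\sigma(B)$ does the work, since it guarantees that whenever continuing to probe $B$ is worthwhile the running minimum also exceeds $\sigma(A)$, so having first probed $A$ can only have helped. The bookkeeping of the nine buckets, with costs being random variables correlated with the losses inside each node, is the most computation-heavy piece, though each individual case is elementary; and one must state carefully the conditional-independence consequence of the Markov property that makes the $C$-phase cancel cleanly.
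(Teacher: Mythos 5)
Your high-level plan follows the same route as the paper: compare the two adaptive orderings $A\prec B\prec C$ and $B\prec A\prec C$, cancel the $C$-phase using the Markov property (the $\phi_C$ terms appear only on the event that both $A$ and $B$ fail to fall below $\sigma(B)$ and are identical under either order), and resolve the remaining two-node comparison by a nine-case partition of the joint realization of $(R_A,R_B)$ according to the bins $[0,\sigma(A)]$, $(\sigma(A),\sigma(B))$, $[\sigma(B),\infty)$. Your Step 1, including the observation that $\sigma(A)\le\sigma(B)$ makes a $B$-first strategy that stops without probing $A$ dominated within the same family, is consistent with the paper's reduction (the paper additionally disposes of the trivial case $X<\sigma(A)$ where probing nothing is optimal).

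However, Step 2 as stated has a genuine gap: the claim that $\Delta:=\cost(B\prec A)-\cost(A\prec B)\ge 0$ holds in each bucket is false. Take the bucket $R_A\le\sigma(A)$, $R_B\le\sigma(A)$: the $B$-first order stops at $B$ and pays $c_B+R_B$, never touching $A$, while the $A$-first order stops at $A$ and pays $c_A+R_A$; nothing in $\sigma(A)\le\sigma(B)$ orders these two quantities, pointwise or in conditional expectation, because the random inspection costs are unconstrained within the bucket. Structurally, the $B$-first order pays $c_B$ with probability one whereas the $A$-first order pays $c_B$ only on the event $\min\{X,R_A\}\ge\sigma(B)$, so the cost terms enter the two orderings with different weights and a bucket-wise comparison cannot close. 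The paper's proof succeeds precisely where your plan is silent: it uses the defining indifference identity of the dynamic index, $\E[(\sigma_B-R_B)_+-c_B]=0$ (and its analogue for $A$), to rewrite $\E[c_A]$ and $\E[c_B]$ as threshold expressions such as $\E[c_A]=\pi_A\sigma_A-\pi_A\E[R_A\mid \pi_A]$, and only then aggregates the nine buckets, finishing with $\E[R\mid R\le X]<X$ and a subadditivity bound on the $\min$ to show the \emph{aggregate} difference is strictly negative. Without this cost-elimination device, the per-bucket bookkeeping you propose cannot be completed.
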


\begin{proof}
It's sufficient to compare two strategies: 1)\ $D_1: B \to A \to C$, and 2)\ $D_2: A \to B \to C$. Notice that if $X < \sigma (A)$, then it's optimal to not probe any node, then the ordering of the node doesn't matter. WLOG, we may assume $X > \sigma (A)$. 

We first write down the expected loss according to ordering strategy $D_1$, if we use notation as in Table.~\ref{tab:reward_D_1}, we have that this loss is equivalent to: 
\begin{align*} 
&\E[c_B] + \E[R_B | \pi_B] \Pr[\pi_B] \\ &+\lambda_B [ \E[c_A] + \E[R_A | \pi_A] \pi_A + \E[\min\{R_A, R_B, y\} | \lambda_B \cap \lambda_A] \lambda_A] + \E[\min\{R_B, y | \rho_A, \lambda_B\} \rho_A]] \\ 
&+ \rho_B [ \E[c_A] + \E[R_A | \pi_A] \pi_A + \E[\min\{y,R_A | \lambda_A\} \lambda_A + \E[\phi_C (\{R_A, R_B, X\}|\rho_A \cap \rho_B)]]] 
\end{align*}
Here, we abuse the notation $\rho, \pi, \lambda$ to denote both events and their probabilities, 
with the intended meaning clear from context; we use $\phi$ as similar definition to $\Phi$ (Lem.~\ref{lem:app:phi_prop}), which denote the equivalent loss of probing $C$ based on the observation. We also use $E \cap F$ to denote the event that both $E$ and $F$ occur.

Similarly, using the notations in Table~\ref{tab:reward_D_2}, we have that the expected loss according to strategy $D_2$ is: 
\begin{align*}
    &E[c_A] + \pi_A \E[R_A | \pi_A] + \lambda_A \E[\min\{X, R_A\} | \lambda_A] \\
    & + \rho_A [\E[c_B] + \pi_B \E[R_B | \pi_B] + \lambda_B \E[\min\{ X, R_B\} | \lambda_B \cap \rho_A] + \rho_B \phi_C(\min\{X, R_B, R_A\}|\rho_A \cap \rho_B)]
\end{align*}

Notice that the last term of both payoffs can be cancelled out. Also notice that the reservation value for box $A$ and $B$ satisfies:
\begin{align*}
    \E[(\sigma_B - R_B)_{+} - c_B] = 0
\end{align*}

Plugging in the appropraite values of $ \rho, \pi, \lambda $, we have:
\begin{align*}
    \E[c_A] &= \pi_A \sigma_A -\pi_A \E[R_A | \pi_A]  \\
    \E[c_B] &= (\pi_B + \lambda_B) \sigma_B - \pi_B \E[R_B | \pi_B] - \lambda_B \E[R_B | \lambda_B] 
\end{align*}
Now, plugging the value of the expected cost and after simplification, we have that:
\begin{align*}
    & \E[\util (D_2) - \util (D_1)]\\
    &=  \pi_B \pi_A (\sigma_A - \sigma_B) \\
    &+ \pi_A \lambda_B [\E[R_B | \lambda_B] - \sigma_B] + \lambda_A \pi_B [\E[\min\{X, R_A|\lambda_A\} - \sigma_B]]]  \\
    & + \lambda_B \lambda_A [-\E[\min\{R_A, R_B. X\}| \lambda_B \cap \lambda_A] -\sigma_B \\
    & + \E[R_B | \lambda_B] + \E[\min\{y,R_A\} | \lambda_A]] \\
    & < \lambda_B \lambda_A [-\E[\min\{R_A, R_B. X\}| \lambda_B \cap \lambda_A] -\sigma_B + \E[R_B | \lambda_B] + \E[\min\{y,R_A\} | \lambda_A]]
\end{align*}
where the last inequality follows by the property that $\E[A | A \leq X] < X$. 

Finally, we show that the last term is negative. Notice that:
\begin{align*}
&\E[\min\{R_A, R_B,X\} | \lambda_B \cap \lambda_A] \\
& = \sigma_B + \E[\min \{ \min\{R_A, X\} - \sigma_B, R_B -\sigma_B \} | \lambda_B \cap \lambda_A] \\
& \leq \sigma_B + \E[\min\{R_A, X\} - \sigma_B + R_B - \sigma_B | \lambda_B \cap \lambda_A] \\
& = \E[R_B | \lambda_B ] + \E[\min \{X, R_A\} | \lambda_A]- \sigma_B < 0
\end{align*}
where the last equality follows from the independence of box $A$ and $B$. Aggregating all of the above we have: 
\begin{align*}
    \E[\util (D_2) - \util (D_1)] < 0.
\end{align*}
Hence it's optimal to probe according to policy $D_2$. 
\end{proof}

\begin{table*}[h]
\centering
\begin{tabular}{l|ccc}
\toprule
 & $R_A \leq \sigma_A$ & $R_A \in (\sigma_A, \sigma_B)$ & $R_A \geq  \sigma_B $ \\
 & ${\color{blue} \pi_A}$ & ${\color{blue} \lambda_A}$ & ${\color{blue} \rho_A}$ \\
 \midrule
$R_B \leq \sigma_A$ & $\E[R_B | \pi_B]$ & $\E[R_B | \pi_B]$ & $\E[R_B | \pi_B]$ \\
${\color{blue} \pi_B}$, stop at $B$. & $+\E[c_B | \pi_B] $ & $+\E[c_B | \pi_B] $ & $+\E[c_B | \pi_B] $ \\
\midrule
$R_B \in (\sigma_A, \sigma_B)$ & $\E[R_A | \pi_A] $ & $\E[\min\{ R_A, R_B, X | \lambda_A \cap \lambda_B\}]$ & $\E[\min\{R_B, y\} | \lambda_B]$ \\
 ${\color{blue} \lambda_B}$, open $A$.  & $+\E[c_B |  \lambda_B] + \E[c_A | \pi_A]$ & $+\E[c_B | \lambda_B] + \E[c_A | \lambda_A]$ & $+\E[c_B | \lambda_B] + \E[c_A | \rho_A]$ \\
\midrule
$R_B \geq \rho_B$ & $\E[R_A | \rho_B]$ & $\E[ \min\{X, R_A \} | \lambda_A] $ & $\E[\phi_C (\{R_A, R_B, X\}|\rho_A \cap \rho_B)]$ \\
${\color{blue} \rho_B}$ & $+\E[c_B | \rho_B] + \E[c_A | \pi_A]$ & $+\E[c_B | \rho_B] + \E[c_A | \lambda_A]$ & $+\E[c_B | \rho_B] + \E[c_A | \rho_A]$ \\
\bottomrule
\end{tabular}
\caption{Table for Expected Payoff according to Strategy $D_1$. This table presents the expected total payoff for every possible joint distribution of box $A$ and $B$. In addition, $\rho_A = 1 - \pi_A - \lambda_A$.}
\label{tab:reward_D_1}
\end{table*}

\begin{table*}[h]
\centering
\begin{tabular}{l|ccc}
\toprule
 & $R_A \leq \sigma_A$ & $R_A \in (\sigma_A, \sigma_B)$ & $R_A \geq  \sigma_B $ \\
 & ${\color{blue} \pi_A}$, stop at $A$ & ${\color{blue} \lambda_A}$, stop at $A$ & ${\color{blue} \rho_A}$ \\
 \midrule
$R_B \leq \sigma_A$ & $\E[R_A | \pi_A]$ &  $\E[\min\{X, R_A\} | \lambda_A]$ & $\E[R_B | \pi_B]$ \\
${\color{blue} \pi_B}$ & $+\E[c_A | \pi_A]$ & $+\E[c_A | \lambda_A]$ & $+\E[c_A | \rho_A] + \E[c_B | \pi_B]$ \\
\midrule
$R_B \in (\sigma_A, \sigma_B)$ & $\E[R_A | \pi_A]$  & $\E[\min\{X, R_A\} | \lambda_A]$ & $\E[\min\{X, R_B\} | \lambda_B \cap \rho_A]$ \\
 ${\color{blue} \lambda_B}$ & $+\E[c_A | \pi_A]$ & $+\E[c_A |\lambda_A]$ &  $+\E[c_B |\lambda_B] + \E[c_A | \rho_A]$ \\
\midrule
$R_B \geq \rho_B$ & $\E[R_A | \pi_A]$ & $\E[\min \{X, R_A\} | \lambda_A]$ & $\phi_C(\min\{y, R_B, R_A\} | \rho_A \cap \rho_B) $  \\
${\color{blue} \rho_B}$ & $+\E[c_A | \pi_A]$ & $+\E[c_A | \lambda_A]$ & $+\E[c_B |\rho_B] + \E[c_A | \rho_A]$  \\
\bottomrule
\end{tabular}
\caption{Table for Expected Payoff according to Strategy $D_2$. This table presents the expected total payoff for every possible joint distribution of box $A$ and $B$. In addition, $\rho_B = 1 - \pi_B - \lambda_B$.}
\label{tab:reward_D_2}
\end{table*}

\newpage
\textbf{Proof Of Corretness and Time Complexity}
We now establish the optimality of the dynamic index in the multi-line setting, showing that probing nodes in the order of their current dynamic indices maximizes the expected utility. 

\begin{theorem}[Dynamic Index for Multi-Line Costly Exploration]\label{thm:ML_dynamic}
Given a Markovian multi-line costly exploration instance with $m$ lines, the optimal strategy for minimizing expected loss is to probe the uninspected but available nodes with the minimum dynamic index. 
\end{theorem}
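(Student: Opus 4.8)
The plan is to prove the theorem by an interchange (exchange) argument built on the three–node exchange inequality (Lemma~\ref{lem:three_node_lemma}), combined with an induction on the number of hypernodes and the single–line optimality result (Theorem~\ref{thm:opt_dynamic_index}). First I would set up the reduction. Using the hypernode reparameterization (Def.~\ref{def:app:hyper}) together with Lemma~\ref{lem:app:eq_node} and Lemma~\ref{lem:app:dyn_index_multi_random}, each line $L_j$ can be rewritten as a sequence of nodes-with-random-cost; I would first run the contraction procedure that collapses every maximal descending run of indices on a line into a single hypernode, so that \emph{after} this preprocessing the ``dynamic index of line $L_j$'' — meaning the index of the next uninspected (hyper)node on $L_j$, conditioned on the losses already revealed on $L_j$ — is well defined and equals the minimum index among all remaining nodes of $L_j$ (this monotonicity along a line is exactly Lemma~\ref{lem:app:properties_GRV}). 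This reduces the claim to: at every reachable state an optimal policy probes an available (hyper)node of globally minimum dynamic index, and stops once the current minimum loss does not exceed that minimum index. The base case $m=1$ is Theorem~\ref{thm:opt_dynamic_index} and $n=1$ is trivial.

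For the inductive step I would take any optimal policy $\pi^\star$ and locate the earliest point in its (adaptive) execution at which it probes a node $B$ while some available node $A$ has $\sigma(A)<\sigma(B)$. Since only the first uninspected node of a line is ever available, $A$ and $B$ lie on distinct lines, so $\ell_A$ and $\ell_B$ are independent. I would then collapse the optimal continuation after both $A$ and $B$ have been opened into a single node-with-random-cost $C$ via the multi-line analogue of Lemma~\ref{lem:app:eq_node}, and observe that the effective outcome of this continuation depends on the revealed history only through $(\ell_A,\ell_B)$: the remaining losses on $A$'s line depend on the past only through $\ell_A$, those on $B$'s line only through $\ell_B$, and all other lines are independent of both — precisely the ``$C$ depends on $A$ and $B$ in a Markovian fashion'' hypothesis of Lemma~\ref{lem:three_node_lemma}. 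Invoking that lemma (conditioned on the current competing minimum $X$) gives that probing $A$ before $B$, both before $C$, is weakly better, producing another optimal policy in which the inversion at hand has been removed. Iterating this ``bubble-sort on the policy tree'' drives $\pi^\star$ to the index policy. Once the probing order coincides with the index order, the remaining question of when to stop is exactly the optimal-stopping problem solved in \S\ref{sec:1_line}, whose optimal rule is the index threshold of Alg.~\ref{alg:recall_exploration}; hence the policy in the theorem statement is optimal.

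\textbf{Main obstacle.} The delicate step is making the collapse-and-exchange move rigorous, i.e.\ checking that Lemma~\ref{lem:three_node_lemma} genuinely applies to the collapsed node $C$. Two points need care: (i) that the Markov conditional-independence structure survives the collapse, which follows from the per-line Markov property but must be stated carefully since $C$ aggregates many downstream nodes across two lines; and (ii) that the required index ordering $\sigma(B)\le[\sigma(C)\mid \ell_A,\ell_B]$ holds. Since the index of a collapsed index-policy subinstance equals the minimum dynamic index among its currently-available nodes, (ii) reduces to showing that after removing $A$ and $B$ every available node has index at least $\sigma(B)$ — this is where the contraction preprocessing is essential (it guarantees the tails of $A$'s and $B$'s lines have index $\ge\sigma(A)$ and $\ge\sigma(B)$ respectively), and where one must additionally choose the inversion being fixed so that $B$ is the minimum-index available node other than $A$, which disposes of the other lines. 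A related subtlety, again absorbed by the contraction and by Lemma~\ref{lem:app:dyn_index_multi_random}, is that dynamic indices within a line are themselves adaptive, changing as losses are revealed. As a fallback if the pure exchange argument becomes unwieldy, an ``amortized/prevailing-charge'' argument in the style of the single-line Bellman analysis — assigning each line a non-increasing charge process equal to its prevailing index, showing the index policy pays exactly the total charge while any policy pays at least that amount — should deliver the same conclusion.
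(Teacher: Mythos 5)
Your proposal is correct and takes essentially the same route as the paper: the paper likewise proves optimality by an induction-plus-exchange argument, collapsing the probed line-prefixes and the optimal continuation into equivalent random-cost nodes (Lemma~\ref{lem:app:eq_node}) and invoking the three-node exchange Lemma~\ref{lem:three_node_lemma} to remove an inversion against the minimum-index node, with the single-line indexing result governing stopping. Your discussion of verifying the lemma's index-ordering hypothesis for the collapsed continuation is, if anything, more explicit than the paper's own argument.
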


\begin{proof}
Let $n^*$ denote the node with the smallest dynamic index before any nodes are probed, and let $\sigma^0_{n^*}$ be the value of this index. 
Let $n \neq n^*$ denote another node. Without loss of generality, we treat the dynamic index of a node as the index of its first unopened child, conditioned on realized losses. 
Let $\pi^*$ denote the policy that always probes the node with the smallest current dynamic index.  

We prove that no strategy outperforms $\pi^*$ by induction on the number of nodes.  

\textbf{Base Case.} For two nodes, it is immediate that $\pi^*$ is optimal.  

\textbf{Induction Step.} Assume $\pi^*$ is optimal for $q-1$ nodes; we show it remains optimal for $q$ nodes.  
If the optimal strategy begins by probing $n^*$, then by induction $\pi^*$ is already optimal.  

Suppose instead that the best alternative strategy $\hat{\pi}$ begins with node $n$ whose index $\sigma^0_{n}$ is larger than $\sigma^0_{n^*}$.  
By the induction hypothesis, after probing $n$, strategy $\hat{\pi}$ must follow $\pi^*$.  
Thus $\hat{\pi}$ proceeds as: probe $n$ until its index exceeds $\sigma^0_{n^*}$, then switch to $n^*$, and continue optimally.  

Now construct a new strategy $\bar{\pi}$ that starts with $n^*$, switches to $n$ once its index exceeds $\sigma^0_{n^*}$, and then continues as in $\hat{\pi}$.  
Both $\hat{\pi}$ and $\bar{\pi}$ explore the same nodes, but in different orders.  
Since $\sigma^0_{n^*} < \sigma^0_{n}$, Lemma~\ref{lem:three_node_lemma} implies that $\bar{\pi}$ achieves strictly less expected loss than $\hat{\pi}$, contradicting the optimality of $\hat{\pi}$.  

Thus, by induction, $\pi^*$ is optimal for any number of nodes.
\end{proof}

Finally, we note that the dynamic index can be implemented in polynomial time and space.

\begin{theorem}[Polynomial-Time Implementation of Dynamic Indexing, Multi-Line]\label{thm:app:dyn_index_path}
The dynamic index for the multi-line setting can be implemented in polynomial time and space. 
Furthermore, preprocessing this policy takes $O(n \cdot |V|^2 T)$ time and requires $O(n|V|^2)$ space. At inference time, for each input $x$, the policy runs in $O(1)$ per node and $O(n)$ overall per input.
\end{theorem}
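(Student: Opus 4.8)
The plan is to reduce the multi-line instance to $m$ independent single-line computations and then glue them together with the index-ordering rule. By Lemma~\ref{lem:app:eq_node} each line (a hypernode, Def.~\ref{def:app:hyper}) behaves, at the level of its loss/cost outcome, like a single node with random cost (Def.~\ref{def:random_cost_node}), and by Theorem~\ref{thm:ML_dynamic} the optimal policy simply probes, among the currently available first-uninspected nodes, the one with smallest dynamic index, stopping once that minimum exceeds the running minimum loss $X$. Hence it suffices to (i) precompute, for every node $i$ and every possible realized loss $s$ of its predecessor, the equivalent-loss table $\Phi(\cdot, s, i)$ together with the stop/continue indicator $\mathds{1}(\cdot, s, i)$ — from which the dynamic index $\sigma(s,i)$ is the smallest support value $x$ with $\Phi(x,s,i)=x$ — and (ii) at inference, maintain $X$ and the head index of each line and follow the rule above. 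Step (i) is exactly the single-line dynamic program of Algorithm~\ref{alg:app:gw_1path} / Lemma~\ref{lem:app:computing_rv}, run once per line; the $X$-independence of $\sigma$ (Theorem~\ref{thm:opt_dynamic_index}, Lem.~\ref{lem:app:unique_fair_cap}) is what makes the precomputed table usable at inference.

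\textbf{Preprocessing cost.} First I would estimate the Markovian transition matrices $P_i$ from the $T$ i.i.d.\ samples of Assumption~\ref{ass:dist_input}: binning each sample's $n$ coordinates into the common support $V$ and tallying consecutive-pair counts along each line costs $O(nT)$, and forming the $|V|\times|V|$ matrices adds $O(n|V|^2)$. Next, run Algorithm~\ref{alg:app:gw_1path} backward on each line. Summing over lines, the table has $O(n|V|^2)$ entries $(x,s,i)$, and each entry is computed by a single expectation over the $O(|V|)$-support law of $R_i$, combined with the already-computed auxiliary laws $\frm(\cdot,\cdot,i{+}1)$ and $\frc(\cdot,\cdot,i{+}1)$; crucially these auxiliary laws have small support — $\frm = \min_{j\ge i} R_j$ takes values in $V$, so $|\supp\frm|\le|V|$, and $\frc = \sum_{j\ge i} c_j$ takes values among the $O(n)$ prefix sums — so every update is $\poly(|V|,n)$ and only three-argument $\min$'s appear. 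This yields the stated $O(n|V|^2 T)$ preprocessing time (absorbing the lower-order terms) and $O(n|V|^2)$ space for the $\Phi$/indicator tables, matching the single-line and tree bounds.

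\textbf{Inference cost and correctness.} At inference we keep $X$ (current minimum loss, initially $\infty$) and, for each line $j$, the dynamic index $g_j$ of its current head node, read off from the precomputed table given the predecessor's realized loss. By Theorem~\ref{thm:ML_dynamic} the policy stops iff $\min_j g_j > X$; otherwise it probes the head of $j^\star = \arg\min_j g_j$, observes the loss, updates $X$, advances that line's head, and refreshes $g_{j^\star}$ with one table lookup. Each probe thus costs $O(1)$ for the lookups and the update of $X$, plus maintenance of the cross-line minimum via a priority queue over the $\le m \le n$ heads; since at most $n$ nodes are ever probed, inference is $O(n)$ overall (treating the number of lines, hence the heap operations, as lower order, consistent with Theorems~\ref{thm:opt_dynamic_index} and \ref{thm:dyn_index_tree}). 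Optimality of the resulting policy is inherited wholesale from Theorem~\ref{thm:ML_dynamic}, and well-definedness of every quantity used from Lemmas~\ref{lem:app:unique_fair_cap} and \ref{lem:app:dyn_index_multi_random}.

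\textbf{Main obstacle.} The only real content beyond bookkeeping is controlling the supports of the propagated auxiliary distributions $\frm$ and $\frc$ so that the backward DP — which effectively contracts each line into a single random-cost node (Lem.~\ref{lem:app:eq_node}, \ref{lem:app:dyn_index_multi_random}) — does not suffer a combinatorial blow-up; the observations that $\frm$ stays supported on $V$ and $\frc$ on the $O(n)$ prefix-cost sums are exactly what keep everything polynomial. A secondary point is squaring the $O(1)$-per-probe inference claim with the need to select a globally minimal index across lines; I would either appeal to the paper's convention of treating the line count as constant, or note that a heap gives $O(n\log n)$ without changing the conclusion.
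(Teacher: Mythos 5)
Your proposal is correct and follows essentially the same route as the paper: run the single-line dynamic program (Alg.~\ref{alg:app:gw_1path}, Lem.~\ref{lem:app:computing_rv}) independently on each line to build the $\Phi$/indicator tables, invoke Theorem~\ref{thm:ML_dynamic} for the probe-smallest-index rule at inference, and carry over the single-line complexity bounds. If anything, you are more explicit than the paper about the support control of $\frm,\frc$ and about maintaining the cross-line minimum index (heap vs.\ treating the line count as lower order), which the paper's proof glosses over.
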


\begin{proof}
 \textbf{Implementation}. One can still use the idea of the dynamic index lookup table as in the single-line setting, since the loss function $\ell$ across lines is independent. Note that the optimal strategy, characterized by the dynamic index, visits at most a finite number of nodes, since the total number of nodes is finite. For each hypernode visit, it suffices to store the dynamic indices of the competing nodes and the index when the strategy last entered this hypernode.

 \textbf{Complexity}.The space complexity of multi-line case is the same as the single-line setting, in that the space complexity is linear in the number of nodes. Similarly, the time complexity of preprocessing and during inference is the same as the single-line setting. 
\end{proof}

\subsection{Extension to Directed Tree}\label{subsec:app:directed_tree}

Next, we introduce how to generalize the previous indexing policies from the multi-line setting in order to achieve theoretical optimal performance. Note that the directed tree case is equivalent to the directed forest setting (where each root has only outward edges), since we can add a dummy root node connecting all roots in the forest.

\textbf{Graph Basics}. We introduce some basics from graph theory in order to formally define tree contraction.  
\begin{definition}[Component]\label{def:app:component}
Given an undirected graph $G = (V, E)$, a \textbf{component} of $G$ is a maximal connected subgraph $C = (V_C, E_C)$ such that:
\begin{itemize}
    \item $C$ is connected: There exists a path between any two vertices in $V_C$.
    \item $C$ is maximal: No additional vertex $v \in V \setminus V_C$ can be included without losing connectivity.
\end{itemize}
A graph is said to be \textbf{connected} if it consists of a single component.
\end{definition}

\begin{definition}[Induced Subgraph]\label{def:app:induced_subgraph}
Given a graph $G = (V, E)$ and a subset of vertices $V' \subseteq V$, the \emph{induced subgraph} $G[V']$ is the graph $(V', E')$ where:
\begin{align*}
E' = \{ (u, v) \in E \mid u, v \in V' \}
\end{align*}
That is, $G[V']$ contains all edges from $G$ whose endpoints are both in $V'$.
\end{definition}

We next define a \textit{minimal tree}. 

\begin{definition}[Directed Tree, Branch Vertex, Minimal Tree]\label{def:tree}
A directed \emph{tree} $\T$ is a connected DAG equipped with an orientation such that there exists a designated root vertex $r$ with the following properties:
\squishlist
\item every edge is oriented away from dummy root $v_0$ and root $r$.
\item for every vertex $v \in \T$, there exists a unique directed path from $r$ to $v$.
\squishend
A \emph{branch vertex} in a directed tree is a vertex with at least two outgoing edges.
A \emph{minimal tree} is a directed tree whose proper subgraphs contain no branch vertices.
\end{definition}

\textbf{Generalizing Dynamic Indexing to Directed Tree}. The key intuition behind our algorithm is that, after probing the root $r$ in a minimal tree, the remaining costly exploration problem reduces to exploring \textit{multiple lines}, enabling us to apply our solutions in Section~\ref{sec:app:multi_line}. We formally define the dynamic index for the directed tree setting. 

\begin{definition}[Dynamic Index, Directed Tree]\label{def:app:dyn_index_forest}
    Consider a Markovian costly exploration problem with precedence graph $G = (\V, E)$ structured as a directed tree. 
    Let $\V_o$ denote the set of opened nodes, and let the information set on the realized losses be 
    $\mathcal{I}_{o} := \{ v_i = \ell_i \;|\; i \in \V_o \}$. 
    
    For any unopened node $v_i$, its dynamic index is derived by applying Alg.~\ref{alg:dyn_forest} 
    on the component of the induced subgraph $G[\V \setminus \V_o]$, conditioned on the current information set $\mathcal{I}_o$.
\end{definition}

\begin{proof}
When computing the dynamic index for a node inside the algorithm, the node is either the root of a minimal tree, 
or the current graph reduces to a collection of disjoint lines. It suffices to show that the index is well-defined in both cases.  

From the multi-line results, we know that if the graph\footnote{In our modeling, the structure of the graph is determined by the induced subgraph obtained after removing the dummy root $r$.} is a line or a set of isolated nodes, then the dynamic index is well-defined. 
Moreover, the entire graph can be contracted into a single equivalent node with random cost (Lem.~\ref{lem:app:eq_node}).  

We now define the dynamic index of the root $r$ of a minimal tree.  
First, we contract the induced subgraph consisting of all vertices other than $r$ into a single equivalent node $\hat{v}$ with random cost.  
Given the current observed loss $x$, the equivalent loss for $r$ is computed as:  
\begin{align*}
   \Phi(x,r) = \min \Big\{ 
        x, \;
        \E[\min\{\ell_r, x\}] + c_r, \;
        \E[\min\{\ell_r, x, \ell_{\hat{v}}\}] + c_r + c_{\hat{v}}
   \Big\}.
\end{align*}

The three terms correspond to: (i) stopping without opening $r$, 
(ii) opening $r$ only, and 
(iii) opening $r$ while optimally exploring the remaining nodes.  

Consequently, we define the dynamic index of $r$ as the smallest $x$ satisfying $\Phi(x,r) = x$.  
This index is well defined since the minimal tree can be equivalently treated as a hypernode with $r$ as the first node and $\hat{v}$ as the second node.  
By Lem.~\ref{lem:app:dyn_index_multi_random}, the dynamic index is therefore guaranteed to be well defined.
\end{proof}

In addition, we present our algorithm for computing the dynamic index for every node. The algorithm iteratively identifies and contracts the minimal subtrees of the underlying graph into directed lines, 
thereby eliminating all branch vertices step by step. Notice that after inspecting a new node, we need to partially update the dynamic index of each of its children. \footnote{In the presence of already probed nodes, for any unopened node $i$, we apply the algorithm to the subtree rooted at $i$ to compute its dynamic index.} Please see Figure~\ref{fig:node_contraction} for an illustration.

\begin{algorithm}[!ht]\caption{Updating Dynamic Index in Forests}\label{alg:dyn_forest}
\begin{algorithmic}[1]
    \Require An instance of a Markovian costly exploration problem with precedence graph $G$ structured as a forest.  
    \State $\hat{G} \gets G$, $t \gets 1$.
    \While {there exist minimal trees in $\hat{G}$}
        \For {each minimal tree $\T_i$ with root $r_i$}
            \For {every possible loss realization $\ell_{r_i}$ of $r_i$}
                \State Condition on $\ell_{r_i}$, compute $\phi$ and the dynamic index $\sigma$ 
                for all states of nodes in $\T_i \setminus \{r_i\}$.
                \State Contract $\T_i \setminus \{r_i\}$ into a single node $\hat{v}_i$, 
                and compute its loss and cost distribution conditioned on $\ell_{r_i}$.
            \EndFor
            \State Update $\hat{G}$ accordingly.
        \EndFor
    \EndWhile
    \State Compute the dynamic index for the remaining nodes.
\end{algorithmic} 
\end{algorithm}

\textbf{Proof of Correctness and Complexity}. \begin{lemma}[Time and Space Complexity of Dynamic Index, Directed Tree]\label{lem:app:update_dyn_index_tree}
The dynamic index for the directed tree setting (Algorithm~\ref{alg:dyn_forest}) can be implemented in polynomial time and space. Furthermore, preprocessing this policy takes $O(n \cdot |V|^2 T)$ time and requires $O(n|V|^2)$ space. At inference time, for each input $x$, the policy runs in $O(1)$ per node and $O(n)$ overall per input.
\end{lemma}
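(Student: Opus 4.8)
The plan is to bound the cost of Algorithm~\ref{alg:dyn_forest} by analyzing the contraction process, then argue that once the tree is reduced to a collection of lines we inherit the single-line bounds from Lemma~\ref{lem:app:computing_rv} and Theorem~\ref{thm:app:dyn_index_path}. The key structural observation is that each contraction step replaces a minimal subtree $\T_i\setminus\{r_i\}$ — which by Definition~\ref{def:tree} is a union of at most $\deg(r_i)$ disjoint directed lines — by a single equivalent node with random cost, while the number of nodes strictly decreases. I would first show that the total number of distinct (node, state) pairs for which a dynamic index and if-stop entry must ever be stored is $O(n|V|^2)$: each of the $n$ original nodes contributes one if-stop matrix indexed by the pair (current minimum loss $X$, most recent loss $R_{i-1}$), each ranging over the common support $V$, so $O(|V|^2)$ entries per node, giving the claimed $O(n|V|^2)$ space bound. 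This is the same accounting as the single-line case, and it is unaffected by contraction because a contracted hypernode's equivalent-loss table is folded into its parent's computation rather than stored separately.

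Next I would bound preprocessing time. Computing the if-stop matrix and equivalent-loss table for one node requires, per state, taking an expectation over the $|V|$ possible realizations of its loss (using the empirical transition estimated from $T$ samples), combined with the already-computed equivalent single-node summaries of each of its children. By Lemma~\ref{lem:app:eq_node} each child subtree has been contracted to a single node with random cost whose loss/cost distribution has support of size $\poly(|V|,n)$; since a node in a directed tree combines information from all its children but each child is processed exactly once over the whole run, the total work is $O(|V|^2)$ per node for the state enumeration times $O(|V|)$ for the expectation times $O(T)$ for reading off the empirical distribution, summed over $n$ nodes, i.e. $O(n|V|^2 T)$. The contraction bookkeeping (identifying minimal trees, updating $\hat G$) is dominated by this, since the while-loop runs at most $O(n)$ iterations and each touches each surviving node $O(1)$ times. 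For inference, conditioned on the realized losses so far the policy of Definition~\ref{def:app:dyn_index_forest} only needs to: look up the precomputed dynamic index of each currently-available uninspected node and compare against the running minimum $X$; this is $O(1)$ per node (a table lookup plus comparison) and $O(n)$ over the whole input, exactly as in Theorem~\ref{thm:opt_dynamic_index}.

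The main obstacle I anticipate is making rigorous the claim that the support size of the contracted-node distributions stays polynomial throughout the iterative contraction, rather than blowing up multiplicatively as subtrees are nested. The resolution is that the equivalent node's loss is always of the form $\min_{j}\ell_j$ over a subset of original nodes and its cost a sum $\sum_j c_j$ along a path, so by Lemma~\ref{lem:app:eq_node} (and the coupling in its proof) the joint (loss, cost) distribution is supported on at most $O(|V|\cdot n)$ atoms — the $\min$ lies in $V$ and the accumulated cost is determined by which node realized the minimum along a path of length $\le n$ — independent of how many contraction rounds produced it. Once this support bound is pinned down, the time and space claims follow by the same dynamic-programming accounting as Algorithm~\ref{alg:app:gw_1path}, and optimality of the resulting policy is exactly Theorem~\ref{thm:app:dyn_index_opt_tree} (equivalently, an inductive application of Theorem~\ref{thm:ML_dynamic} after each root is probed), which I would cite rather than reprove.
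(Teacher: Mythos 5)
Your proposal follows essentially the same route as the paper's own proof: a leaf-to-root dynamic program maintaining an if-stop/$\phi$ table with $O(|V|^2)$ states per node (giving the $O(n|V|^2)$ space and $O(n\cdot|V|^2 T)$ preprocessing bounds), BFS/DFS identification and one-time contraction of each minimal tree whose cost is dominated by the DP, $O(1)$ table lookups per node at inference, and optimality delegated to Theorem~\ref{thm:app:dyn_index_opt_tree} (via Theorem~\ref{thm:ML_dynamic}). Your explicit support-size argument for contracted hypernodes is an elaboration the paper leaves implicit (it simply asserts $\poly(K,n)$ support in the proof of Lemma~\ref{lem:app:computing_rv}), so the two arguments are materially the same.
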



\begin{proof}  
\textbf{Preprocessing.}  
During the execution of the algorithm, each minimal tree can be identified using either a \emph{BFS} or \emph{DFS} traversal, both of which run in $O(n)$. Each tree traversal and possible contraction only needs to be performed once.  

Similar to the multi-line case, we maintain an if-stop table based on the minimum loss and the loss of the current box. The main difference is that the computation of the dynamic index for a node now proceeds from the leaf nodes. Consequently, the preprocessing time remains $O(n \cdot |V|^2 T)$, since each leaf has only one parent.  

\textbf{Space complexity.}  
We store the $\phi$-table for all possible states of each node, together with the graph structure required by the algorithm.  
As argued in earlier sections, storing the $\phi$-table requires $O(n \cdot |V|^2)$ space.  

\textbf{Time complexity.}  
At inference time, the complexity per input is $O(n)$, since we only need to perform table lookups.  
\end{proof}

Since probing according to the latest dynamic index can probe at most all the nodes, 
Algorithm~\ref{alg:dyn_forest} is invoked at most once per node in the forest.  
Thus, the overall runtime is still polynomial.  

\begin{theorem}[Optimality of Dynamic Index, Directed Tree]\label{thm:app:dyn_index_opt_tree}
The dynamic index defined in Def.~\ref{def:app:dyn_index_forest} is optimal for probing in the forest setting. 
\end{theorem}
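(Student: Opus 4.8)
The plan is to induct on the number of branch vertices of the precedence forest (equivalently, on the number of contraction rounds carried out by Alg.~\ref{alg:dyn_forest}), using each round to peel off one minimal subtree and reduce to a strictly simpler forest, with the multi-line optimality result Thm.~\ref{thm:ML_dynamic} as the base case. Since a contraction turns a subtree into a single node whose inspection cost is random and Markov-correlated with its parent's loss, the induction must be stated over forests of (possibly) random-cost nodes obeying the Markov property of Prob.~\ref{prob:mar_cost}; Lem.~\ref{lem:app:dyn_index_multi_random} together with the $\Phi, H$ properties of Lem.~\ref{lem:app:phi_prop} guarantees that the dynamic index is still well defined, computable in polynomial time, and independent of the competing running minimum $X$ in that generality, so the induction is well posed.

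For the base case, a forest with no branch vertices is, after deleting the dummy root, a disjoint union of directed lines: exactly the multi-line setting of Def.~\ref{def:app:multi_line}. Thm.~\ref{thm:ML_dynamic} then states that always probing the available uninspected node of smallest dynamic index, and stopping once the running minimum loss $X$ has fallen below every available index, is optimal; this is precisely the policy induced by Def.~\ref{def:app:dyn_index_forest}. For the inductive step I would pick a minimal subtree $\T$ with root $r$, that is, a branch vertex none of whose proper descendants is a branch vertex, so that $\T\setminus\{r\}$ is itself a multi-line hanging off $r$ whose losses along each branch, conditioned on $\ell_r$, form a Markov chain seeded by $\ell_r$ with distinct branches conditionally independent given $\ell_r$. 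Conditioning on each realization of $\ell_r$, Thm.~\ref{thm:ML_dynamic} makes the dynamic-index policy optimal inside $\T\setminus\{r\}$, and Lem.~\ref{lem:app:eq_node} with Lem.~\ref{lem:app:dyn_index_multi_random} lets me contract $\T\setminus\{r\}$ into a single random-cost node $\hat v$ whose conditional loss/cost distribution and whose dynamic index faithfully record the optimal continuation of $\T\setminus\{r\}$. The contracted forest $\hat G$ replaces $\T$ by the two-node line $r\to\hat v$ and has one fewer branch vertex, so the induction hypothesis gives optimality of the dynamic-index policy on $\hat G$. I would then check the bookkeeping is consistent: the dynamic index assigned to $r$ in $G$ by Def.~\ref{def:app:dyn_index_forest} equals the index of $r$ viewed as the first node of the hypernode $\{r,\hat v\}$, namely the smallest $x$ solving $\Phi(x,r)=\min\{x,\,\E[\min\{\ell_r,x\}]+c_r,\,\E[\min\{\ell_r,x,\ell_{\hat v}\}]+c_r+c_{\hat v}\}=x$, exactly the quantity used in the discussion after Def.~\ref{def:app:dyn_index_forest}.

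It remains to transfer optimality from $\hat G$ back to $G$, which I expect to be the main obstacle. One direction is easy: any policy on $\hat G$ unfolds to a policy on $G$ of equal expected cost by reading ``probe $\hat v$'' as ``probe $r$, then run the multi-line dynamic-index policy inside $\T\setminus\{r\}$.'' The hard direction is that no $G$-policy beats this, for which it suffices to show that some optimal $G$-policy, once it has probed $r$, explores $\T\setminus\{r\}$ \emph{contiguously} via its dynamic-index policy before touching any node outside $\T$; an arbitrary policy is then normalized to this form by an exchange argument. Each elementary exchange, moving a probe of an out-of-$\T$ node occurring between two probes inside $\T$ to after the $\T$-exploration, is certified by Lem.~\ref{lem:three_node_lemma} with $A$ the equivalent node of the sibling component being visited (or that node itself), $B$ equal to $r$ (or the equivalent node summarizing the already-explored part of $\T$), and $C=\hat v$: its hypotheses hold because distinct components of the forest are mutually independent (so $A\perp B$), $\hat v$ is Markov-dependent on what has been observed along the root path of $\T$, and we only ``enter'' $\T$ when $r$'s current dynamic index is the minimum among available indices, which sorts the three indices as the lemma requires. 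The delicate points are that $\hat v$ stands for a whole adaptive sub-policy rather than a single draw (handled by treating its net outcome as the random loss/cost pair of Lem.~\ref{lem:app:eq_node}) and that the swap must hold uniformly over the competing loss $X$ present when $\T$ is entered, which is exactly the ``conditioned on any competing loss $X$'' form of Lem.~\ref{lem:three_node_lemma}. Iterating these exchanges along the execution of an arbitrary policy, the same argument as in the proof of Thm.~\ref{thm:ML_dynamic} but one recursion level deeper, drives its expected cost down to that of the dynamic-index policy on $\hat G$, completing the induction.
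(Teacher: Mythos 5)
Your overall route is the same as the paper's: identify minimal subtrees, contract the multi-line part below each branch root into an equivalent random-cost node (Lem.~\ref{lem:app:eq_node}, Lem.~\ref{lem:app:dyn_index_multi_random}), reduce the forest to the multi-line case, and invoke Thm.~\ref{thm:ML_dynamic}. The paper's own proof is exactly this, compressed into the assertion that contraction preserves the equivalent-loss ($\Phi$) table and hence the indices of ancestor nodes; you go further by singling out the transfer of optimality from the contracted graph $\hat G$ back to $G$ as the main obstacle, and that is indeed where the real work lies.

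However, the normalization you propose for that step is not sound as stated. It is not true in general that some optimal $G$-policy, once it has probed $r$, explores $\T\setminus\{r\}$ contiguously before touching anything outside $\T$: the optimal (index) policy interleaves sibling components with $r$'s children in index order --- after observing $\ell_r$ it may probe a cheap sibling before any child of $r$, or slot a sibling between two children whenever the children's conditional indices straddle the sibling's. Indeed, Def.~\ref{def:app:dyn_index_forest} recomputes indices on the induced subgraph of unopened nodes precisely so that the run-time policy never commits to finishing a subtree; the contraction is a preprocessing device for computing the index of a branch root, not a description of how the subtree is explored. For the same reason, the hypothesis of Lem.~\ref{lem:three_node_lemma} that $\sigma(A)\le\sigma(B)\le\sigma(C)$ holds for all realizations fails for the mid-exploration exchanges you describe: when a policy leaves $\T$ it is exactly because the subtree's continuation index has risen above the outside index, so the ordering needed to push the outside probe later is reversed. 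What has to be established instead is preservation at the level of the value functions --- that contracting $\T\setminus\{r\}$ leaves the equivalent loss $\Phi$, and hence the dynamic index, of $r$ and of every ancestor unchanged --- which is what the paper asserts (tersely, without the exchange machinery); optimality then follows by applying the multi-line/exchange argument to the frontier of the remaining components at each step, rather than by treating the contracted subtree as an atomic probe.
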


\begin{proof}
Each contraction step preserves the \emph{equivalent loss table} and the \emph{loss distribution} of probing the minimal tree.  
Since the algorithm updates $\phi$ for the root of a minimal tree using the strategy that minimizes expected loss,  
the $\phi$ values—and thus the dynamic indices—of its parent nodes remain unchanged under contraction.  

This allows us to reduce the graph $\hat{G}$ to one consisting only of \emph{multiple lines and isolated nodes},  
on which the $\phi$ table and dynamic indices $\sigma$ can be computed directly.  
Applying Thm.~\ref{thm:ML_dynamic} to $\hat{G}$ then establishes the optimality of the dynamic index in forests.  
\end{proof}

\subsection{Costly Exploration with Skip}\label{subsec:app:details_skip}

Next, we introduce the costly exploration setting that allows skipping, which corresponds to the graph structure given by the transitive closure of a directed line. We can still use dynamic programming to pre-compute the if-stop matrix and the equivalent-loss table. The difference is that, when computing the equivalent loss, we must enumerate over all possible next nodes rather than just one. More formally,

\begin{align*}
    \Phi(X, R, v_i) 
     = \min \Big \{X, \min_{v \in C(v_i)} c(v) + \E_{R_v|R} [\phi (\min \{X,R_i\}, R_i, v) ] \Big\}
\end{align*}
where $ C(v_i)$ denotes the set of nodes that are immediate children of $v_i$. Here, our definition of $\phi$ differs from that in the other sections: its third argument denotes the \emph{current} node, its first argument denotes the minimum loss, and its second argument denotes the current loss.

\textbf{Proof of Correctness and Complexity}. Note that the only difference between our algorithm and the single-line case is that the set of candidate nodes to inspect grows from $O(1)$ to $O(n)$, which increases the preprocessing time by a factor of $n$. The inference time, however, remains the same as before, since we still only need to store, for each state, whether to stop at each possible value. Hence, we obtain the following lemma:

\begin{lemma}[Time and Space Complexity of Dynamic Index, Transitive Closure of Directed Line]\label{lem:app:update_dyn_index_tc}
The dynamic index for the transitive closure of a directed line can be implemented in polynomial time and space. In particular, preprocessing this policy takes $O(n^2 \cdot |V|^2 T)$ time and requires $O(n|V|^2)$ space. At inference time, for each input $x$, the policy runs in $O(1)$ per node and $O(n)$ overall per input.
\end{lemma}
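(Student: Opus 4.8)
The plan is to reduce the skip setting to the machinery already built for the single-line case by observing that the transitive closure of a directed line has a particularly benign structure: the underlying precedence relation is still a total order $v_1 \prec v_2 \prec \cdots \prec v_n$, so at any point of the policy the set of uninspected-but-available nodes is exactly a suffix $\{v_{i+1}, \ldots, v_n\}$, and the only added freedom relative to the line is the choice of which element of this suffix to probe next. First I would set up the Bellman recursion exactly as displayed just before the lemma statement,
\begin{align*}
    \Phi(X, R, v_i) = \min \Big\{ X,\; \min_{v \in C(v_i)} c(v) + \E_{R_v \mid R}[\phi(\min\{X, R_v\}, R, v)] \Big\},
\end{align*}
where $C(v_i)$ is the set of immediate successors of $v_i$ in the transitive closure (i.e., all of $v_{i+1}, \ldots, v_n$), and argue that $\Phi(\cdot, R, v_i)$ retains the $1$-Lipschitz, monotone-nondecreasing structure from Lem.~\ref{lem:app:phi_prop}; the proof is the same two-line coupling argument, since taking a min over candidate successors preserves $1$-Lipschitzness. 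This justifies defining the dynamic index of each node as the smallest fixed point $\Phi(X, R, v_i) = X$, exactly as in Def.~\ref{def:ada_threshold_1_line}.

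Next I would address the two quantitative claims. For \textbf{preprocessing time}, the dynamic programming table is indexed by (current node $v_i$, minimum-so-far loss $X \in V$, most recent realized loss $R \in V$), giving $O(n |V|^2)$ states; filling each state requires the inner minimization over the $O(n)$ possible next nodes, and for each candidate $v$ the conditional expectation $\E_{R_v \mid R}[\cdot]$ is a sum over $|V|$ outcomes. If the transition matrices $P_j$ are themselves estimated from $T$ samples (as in the single-line case, where the $T$ factor comes from empirically forming the PMFs and transition kernels), the total is $O(n \cdot n \cdot |V|^2 \cdot T) = O(n^2 |V|^2 T)$; I would state this carefully, matching the convention used in Thm.~\ref{thm:opt_dynamic_index} and Lem.~\ref{lem:app:computing_rv}. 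For \textbf{space}, we only store the $\phi$-table and an if-stop bit per state, which is $O(n |V|^2)$, identical to the line case. For \textbf{inference time}, once the if-stop table and the best-next-node pointer are precomputed, each step is a single table lookup ($O(1)$), and the policy visits at most $n$ nodes, so the per-input cost is $O(n)$.

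The remaining ingredient — and the genuine obstacle — is \textbf{optimality} of the resulting indexing policy, which the lemma statement alludes to only implicitly (the surrounding theorem Thm.~\ref{thm:dyn_index_skip} asserts it). I would prove it by the same exchange argument as Thm.~\ref{thm:ML_dynamic}: show that among any two consecutive probes in a candidate strategy, swapping them to favor the smaller dynamic index cannot increase expected loss, then induct. The subtlety is that in the skip setting the ``lines'' are not disjoint — the $\ell_i$'s along the closure are Markov-correlated and every suffix overlaps — so Lem.~\ref{lem:three_node_lemma}, which assumes the loss of $B$ is independent of that of $A$, does not apply verbatim. I would handle this by noting that the optimal policy over the transitive closure never gains from skipping a node whose dynamic index is currently the smallest: skipping $v_{i+1}$ to reach $v_j$ only changes the conditioning information, and by the Markov property the index of $v_j$ conditioned on $R_{j-1}$ dominates the index conditioned on the earlier realization exactly when the monotone chain of indices holds; hence the optimal strategy is equivalent to one that always probes the available node of minimum current dynamic index, which is precisely Alg.~\ref{alg:recall_exploration} with the generalized $\Phi$. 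Formally wiring this equivalence between ``allowed to skip'' and ``follow the minimum index'' — and verifying that skipping is never strictly beneficial under the Markov kernel — is where the real work lies; the complexity bookkeeping above is routine once the recursion is in place.
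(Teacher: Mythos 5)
Your complexity argument matches the paper's own (brief) proof essentially step for step: the same Bellman recursion over states $(X, R, v_i)$, the same observation that the candidate-successor set grows from $O(1)$ to $O(n)$ and hence multiplies the single-line preprocessing bound of Thm.~\ref{thm:opt_dynamic_index} by a factor of $n$, and the same accounting that the if-stop table still occupies $O(n|V|^2)$ space and supports $O(1)$ lookups per node at inference. One remark: the optimality discussion you flag as ``the genuine obstacle'' is not needed for this lemma and is also heavier than the paper's route for the adjacent Thm.~\ref{thm:app:dyn_index_opt_tc} --- since the skip-case recursion already minimizes over \emph{all} admissible next nodes at every state, optimality follows directly from Bellman's principle, with no exchange argument or analogue of Lem.~\ref{lem:three_node_lemma} required.
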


By Bellman’s principle of optimality, the equivalent loss is minimized at every step, from which the following result directly follows.

\begin{theorem}[Optimality of Dynamic Index, Transitive Closure of Directed Line]\label{thm:app:dyn_index_opt_tc}
The dynamic index defined in Def.~\ref{def:app:dyn_index_forest} is optimal for probing in the transitive closure of a directed line.
\end{theorem}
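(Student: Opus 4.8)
The plan is to treat the transitive-closure instance as a finite-horizon Markov decision process and then re-run, essentially verbatim, the monotonicity/Lipschitz/Bellman argument already used for the single line (Thm.~\ref{thm:opt_dynamic_index}), the only genuinely new ingredient being that a ``continue'' move must now also choose which later node to jump to. Concretely, I would take the state to be $(X,R,v_i)$, where $v_i$ is the most recently probed node, $R=\ell_i$ its realized loss, and $X$ the minimum loss probed so far; the available actions are ``stop'' (return the probed node attaining $X$, with residual cost $X$) or ``jump to $v_j$'' for some child $v_j\in C(v_i)$ (pay the edge cost of $(v_i,v_j)$ and move to $(\min\{X,\ell_j\},\ell_j,v_j)$, with $\ell_j$ drawn from its law conditioned on $R$). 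The first point to nail down is that $(X,R,v_i)$ is a genuine sufficient statistic for the continuation: since the loss vector $(\ell_1,\dots,\ell_n)$ is Markovian along the line, every directed path in the transitive closure is a sub-chain of that chain, so conditioning on the loss of the last probed node makes all future losses independent of all previously probed (and skipped) losses. This is the step I expect to demand the most care, because it must hold jointly over the entire future rather than merely pairwise as literally stated in Problem~\ref{prob:mar_cost} — it is immediate under the natural ``joint law is a line Markov chain'' reading of the assumption, but would require a positivity-type upgrade if one insists on the literal pairwise-along-paths form. Once it is in place, Bellman's principle of optimality yields exactly the recursion
\[
\Phi(X,R,v_i)=\min\Bigl\{X,\;\min_{v_j\in C(v_i)}\bigl(c(v_i,v_j)+\E_{\ell_j\mid R}\bigl[\Phi(\min\{X,\ell_j\},\ell_j,v_j)\bigr]\bigr)\Bigr\}
\]
as the optimal continuation cost, computed by backward induction on $i$ (base case $i=n$, where $C(v_n)=\emptyset$).

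Next I would establish the structural properties of $\Phi$ exactly as in Lem.~\ref{lem:app:phi_prop}, by backward induction on $i$: $\Phi(\cdot,R,v_i)$ is non-negative, monotone non-decreasing, and $1$-Lipschitz. Each of these is preserved by every operation appearing in the recursion — precomposition with the $1$-Lipschitz non-decreasing map $X\mapsto\min\{X,\ell_j\}$, taking a conditional expectation, adding a constant edge cost, taking a finite minimum over the children, and the outer minimum with $X$ — so the induction goes through with no essentially new estimates. It then follows that $H_i(X,R):=\Phi(X,R,v_i)-X$ is non-negative and non-increasing in $X$, hence the stopping region $\{X:\Phi(X,R,v_i)=X\}$ is an up-set $[\sigma_i(R),\infty)$; we define the dynamic index $\sigma_i(R)$ as this threshold (the smallest $X$ with $\Phi(X,R,v_i)=X$), which is well-defined and, exactly as in Thm.~\ref{thm:opt_dynamic_index}, does not depend on the running minimum $X$.

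Finally I would read off optimality and the resource bounds. The greedy policy with respect to $\Phi$ — at state $(X,R,v_i)$, stop and return the best probed node iff $X\le\sigma_i(R)$, and otherwise jump to the child attaining the inner minimum — attains $\Phi$ at every reachable state and is therefore optimal, by optimality of the value function. The stop/continue half of this rule is precisely the dynamic-index rule (a threshold $\sigma_i(R)$ on the running minimum), while the routing half is the new decision, already produced as the arg min during the same backward pass. For complexity, the $\Phi$-table ranges over $O(n\,|V|^2)$ states $(i,R,X)$, each requiring a minimum over at most $n$ children together with an expectation over $|V|$ successor values, with the conditional laws estimated from $T$ samples; this yields $O(n^2|V|^2 T)$ preprocessing and $O(n|V|^2)$ space, and at inference time every step is an $O(1)$ table lookup, i.e.\ $O(n)$ per input, matching Lem.~\ref{lem:app:update_dyn_index_tc}. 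The only work that is new relative to the single-line and tree proofs is (i) the sufficient-statistic argument under skips and (ii) keeping the edge cost attached to the ordered pair $(v_i,v_j)$ rather than folding it into a node, since in the transitive closure a node has many potential predecessors; everything else is a direct reuse of the machinery of Section~\ref{sec:app:details_1_line}.
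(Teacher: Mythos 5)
Your proposal is correct and follows essentially the same route as the paper: the paper's own argument for Thm.~\ref{thm:app:dyn_index_opt_tc} is simply Bellman's principle of optimality applied to the equivalent-loss recursion $\Phi(X,R,v_i)=\min\{X,\min_{v\in C(v_i)}(c(v)+\E[\phi(\min\{X,\ell_v\},\ell_v,v)])\}$, which is exactly your backward-induction value-function argument. Your write-up merely makes explicit what the paper leaves implicit (the sufficient-statistic role of $(X,R,v_i)$ under the Markov assumption and the Lipschitz/monotonicity properties giving the threshold form), so it is a faithful, more detailed version of the same proof.
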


\newpage
\section{More Details from the Experiments}\label{sec:app:exp}
In this section, we provide additional experimental details omitted from the main text due to space constraints, along with background information on the early exit inference model.

\subsection{Background on Early Exit Models}\label{subsec:app:EE}

Early-exit (EE) models extend standard deep neural networks by introducing multiple intermediate exit points, enabling inference to terminate early on “easy” inputs. An EE model typically consists of three components:  
(1) a \emph{backbone model}, corresponding to the original single-exit architecture;  
(2) a set of \emph{ramps}, i.e., intermediate exits attached to selected layers of the backbone; and  
(3) an \emph{exit decision policy}, which determines whether to stop at a given ramp and return its prediction. Importantly, only the exit policy remains controllable at inference time.

\begin{figure}[H]
    \centering
    \subfloat[Pipeline for Early Exit Models]{%
        \includegraphics[width=0.74\textwidth]{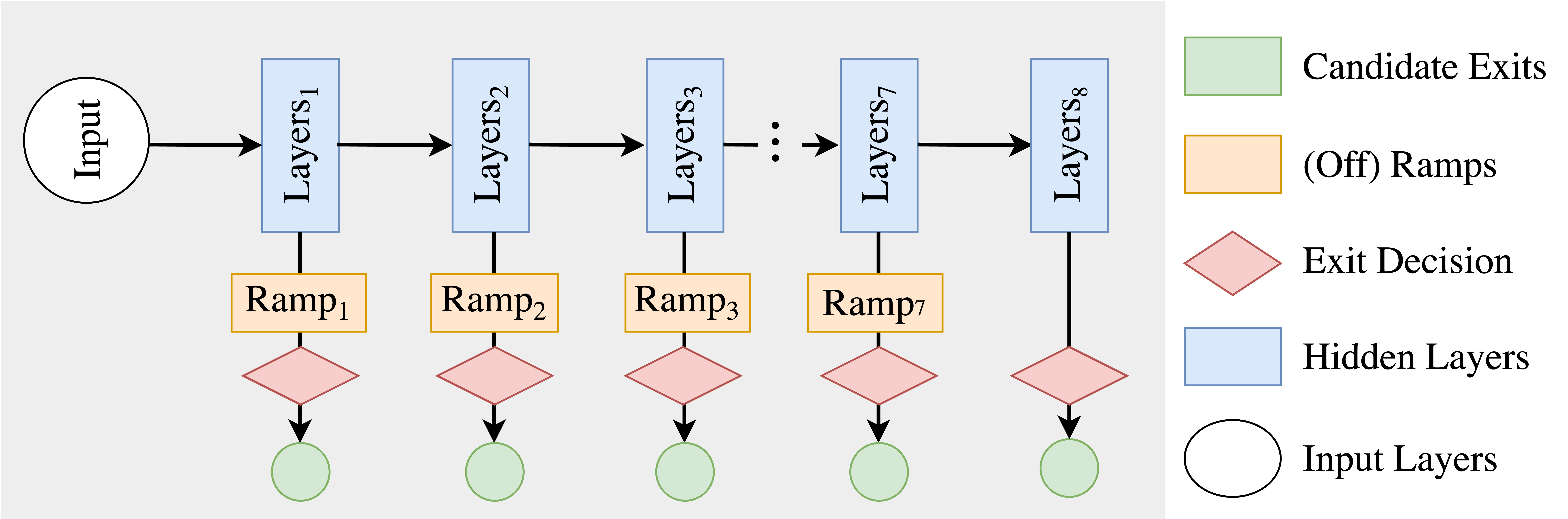}
        \label{fig:EE_module}
    }
    \hfill
    \subfloat[Exit by Thresholding]{%
        \includegraphics[width=0.24\textwidth]{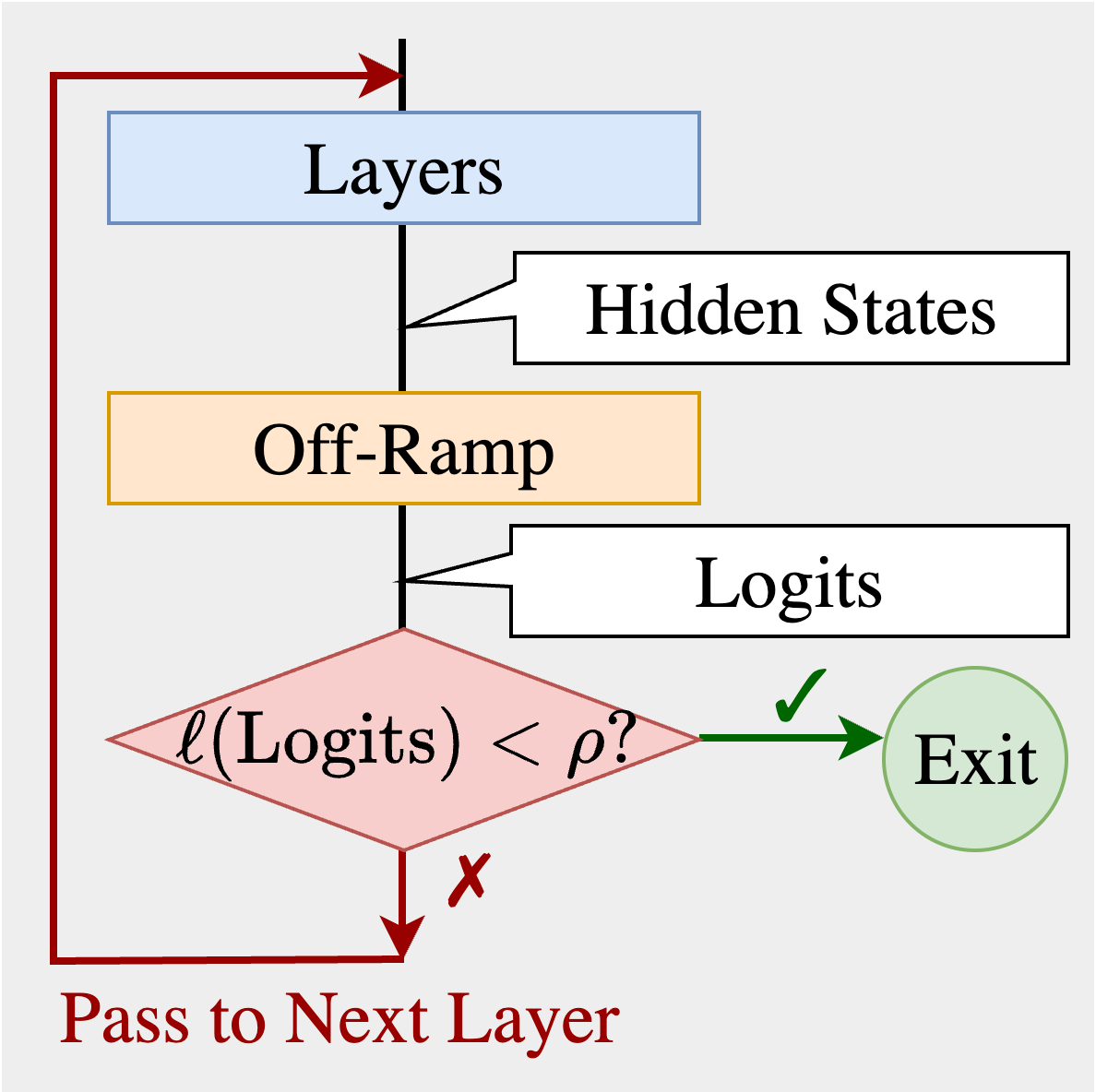}
        \label{fig:Exit_Decision}
    }
    \caption{Exit decisions in early-exit models. (a) shows the pipeline of an early-exit model; (b) illustrates the exit decision process under a thresholding strategy. During inference, the decision rule ({\color{red} $\lozenge$}) is the only component that can be modified. }
    \label{fig:exit_overview}
\end{figure}

\noindent \textbf{Thresholding.}  
A common exit policy is thresholding. Each ramp is associated with a fixed threshold. During inference, if the task-specific loss (or, equivalently, if the prediction confidence from the softmax output) at a ramp surpasses the threshold, inference halts and the prediction is returned. See Figure~\ref{fig:exit_overview}(b) for an illustration under classification tasks.  

\noindent \textbf{Markovian dependency.}  
EE models are typically designed to place ramps at “natural breakpoints,” such as layer groups in CNNs or transformer blocks. This induces a Markovian structure: the output distribution at each ramp depends only on its immediate predecessor, making the sequence of ramps analogous to a single directed line. This observation underlies our theoretical abstraction, where EE corresponds to the \emph{single-line setting} of our indexing framework.  

\noindent \textbf{Notation.}  
Let $n$ denote the number of exits (with the final exit being the backbone output). For an input $x$, let $\ell_i(x)$ denote the task-specific loss at the $i$-th exit. We abstract each ramp as an \emph{node}, which will serve as the atomic unit in our theoretical model.

\subsection{More Details on Early Exit Experiments}

To capture the trade-off between accuracy and efficiency for EE models, we employ a latency-aware loss. The FLOP-based cost can be replaced with any hardware-invariant measure of latency, ensuring consistent policies across platforms.

\begin{definition}[Latency-Aware Loss]\label{def:latency_aware_loss}
Given an input $x$, the latency-aware loss of using exit $j \leq i$ after evaluating $i$ exits is
\begin{align*}
    \theta_\lambda(j, i)[x] := (1-\lambda)\,\ell_j(x) \;+\; \lambda \sum_{k=1}^{i} c_k ,
\end{align*}
where $c_k := \textsc{FLOP}(m_k) \geq 0$ is a proxy for the computation cost of sub-model $m_k$. The hyperparameter $\lambda \geq 0$ balances accuracy and latency.  
\end{definition}

Specifically, $\lambda$ is a tunable parameter, and the latency cost is measured as the ratio $\text{FLOPs(node)} / \text{FLOPs(backbone)}$. The accuracy component is defined as $1 - \text{confidence}$, where confidence is the softmax probability of the predicted class.  

The experimental traces employed in our study are taken from \cite{app24}, where data were collected on dedicated servers equipped with NVIDIA RTX A6000 GPUs (48GB memory), AMD EPYC 7543P 32-Core CPUs, and 256GB DDR4 RAM. The hardware configuration of our own experiments is specified in the main text. The model specifics and data characteristics of EE workloads are specified in Table.~\ref{tab:exp_detail}.

\begin{table}
\begin{tabularx}{\textwidth}{m{3cm}|X}
\toprule
\textbf{Model} & \textbf{Dataset} \\
\midrule
{\begin{tabular}{X}
\textbf{Vision Models} \\
VGG-11,13,16 \\
\end{tabular}} 
& Object classification on 8 one-hour urban videos~\cite{boggart, focus}, 
sampled at 30 fps across day/night. Data are shuffled to discard temporal order and treated as independent images. \\

\midrule
\begin{tabular}{X}
\textbf{Language Models} \\
BERT-base \\
GPT2-medium \\   
\end{tabular} 
& Sentiment analysis on Amazon product reviews~\cite{amazonreview} 
and IMDB movie reviews~\cite{imdb}. \\

\bottomrule
\end{tabularx}
\caption{Model Specifics and Dataset Characteristics of EE models.}
\label{tab:exp_detail}
\end{table}

\subsection{Synthetic Experiments}

The purpose of our synthetic experiments is to demonstrate that confidence thresholding strategies are not optimal for general distributions. Specifically, we visualize the if-stop matrices under various distributions to validate this claim. More specifically, the loss sequences are generated by independently sampling each loss from several independent distributions over $[0,1]$, with the latency cost at each ramp fixed to $0.1$ milliseconds. 

 Here, the purpose is not to emulate practical workloads—where ramp losses are typically positively correlated—but rather to illustrate the general structure of the optimal strategy through the analysis of the resulting if-stop matrices. As shown in the figure, the optimal policy does not coincide with any fixed thresholding rule that is oblivious to the current ramp state. Instead, the decision to stop or continue depends jointly on the realized losses at earlier ramps and the minimum loss across inspected nodes. This highlights the inherent limitation of threshold-based strategies, which, by construction, ignore such dependencies and are therefore provably suboptimal in the general setting.

\begin{figure}[H]   
    \centering
    \subfloat[Truncated Normal $\mathcal{N}(0.5,0.5)$]{%
        \includegraphics[width=0.32\linewidth]{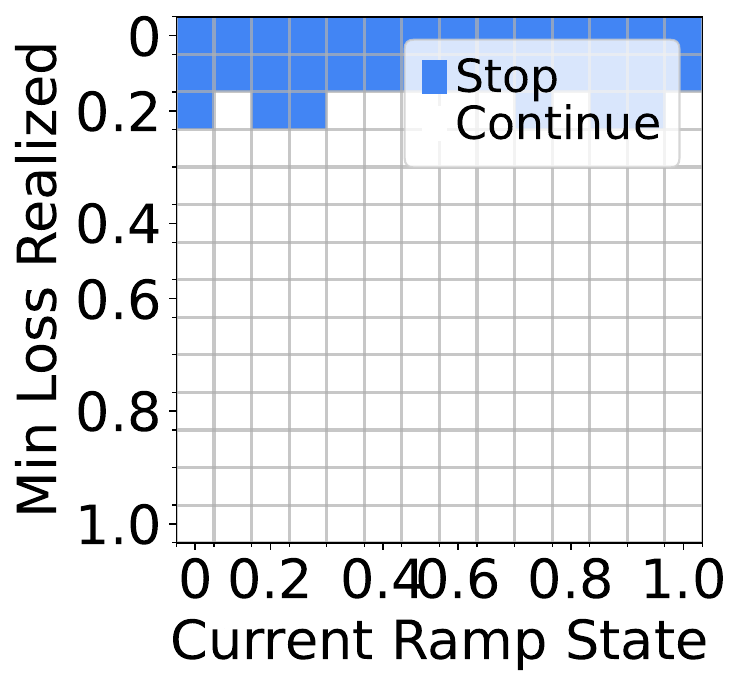}%
        \label{fig:vgg11_if_stop}
    }
    \hfill
    \subfloat[Truncated LogNormal $\mathcal{N} (0,1)$]{%
        \includegraphics[width=0.32\linewidth]{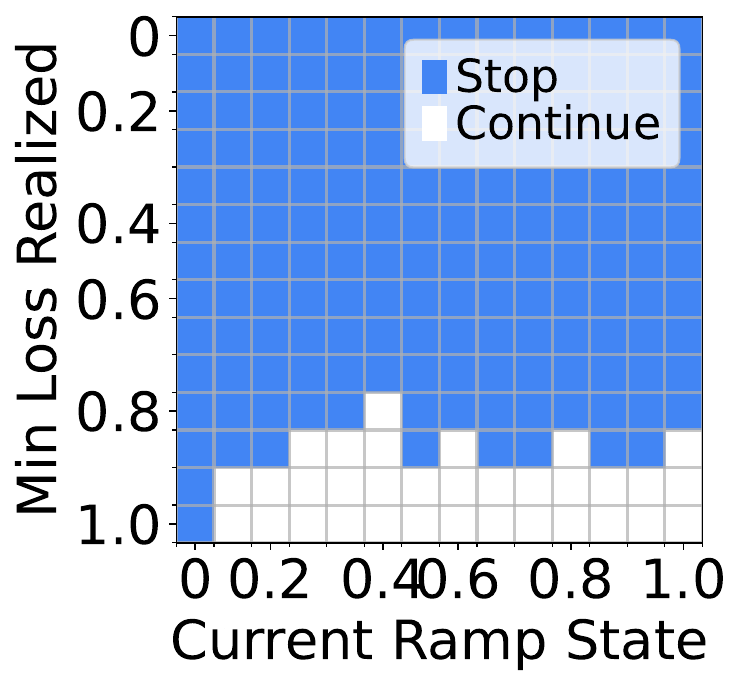}%
        \label{fig:vgg13_if_stop}
    }
    \hfill
    \subfloat[Uniform $\mathcal{U}(0,0.5)$ ]{%
        \includegraphics[width=0.32\linewidth]{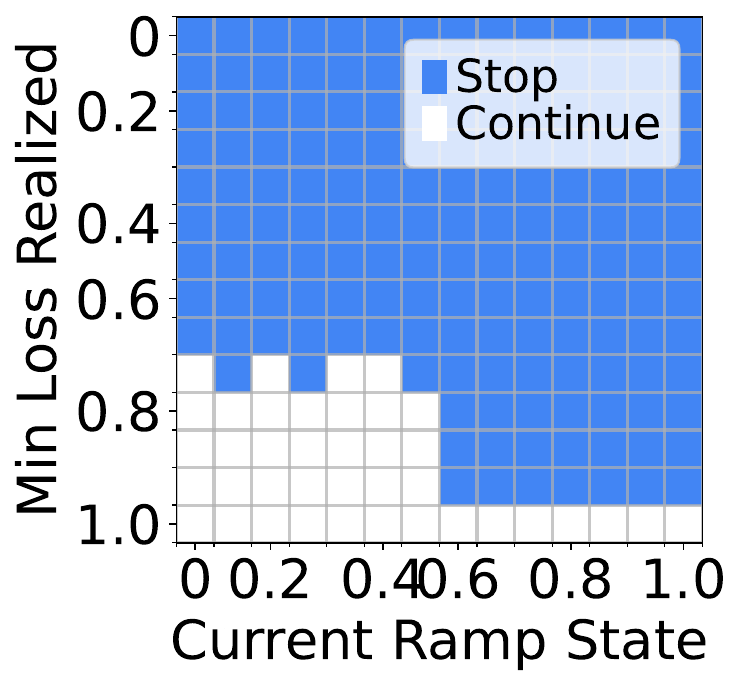}%
        \label{fig:vgg16_if_stop}
    }
    \caption{Visualization of the If Stop matrix for Synthetic Data}
    \label{fig:synthetic_ifstop}
\end{figure}

\end{document}